\documentclass{article} 
\usepackage{iclr2021_conference,times}


\usepackage{amsmath,amsfonts,bm}









\def\eqref#1{equation~\ref{#1}}









\def\1{\bm{1}}



\def\rx{{\textnormal{x}}}
\def\ry{{\textnormal{y}}}


\def\rvtheta{{\mathbf{\theta}}}

\def\rvx{{\mathbf{x}}}







\DeclareMathAlphabet{\mathsfit}{\encodingdefault}{\sfdefault}{m}{sl}
\SetMathAlphabet{\mathsfit}{bold}{\encodingdefault}{\sfdefault}{bx}{n}






\newcommand{\pdata}{p_{\rm{data}}}




\newcommand{\Ls}{\mathcal{L}}



\DeclareMathOperator*{\argmax}{arg\,max}
\DeclareMathOperator*{\argmin}{arg\,min}

\usepackage{url}
\usepackage{seb}
\mathtoolsset{showonlyrefs}

\newtheorem{lemma}{Lemma}
\newtheorem{remark}{Remark}

\setlength{\parskip}{4.5pt}

\newcommand{\Expct}[2]{\mathbb{E}_{#1}\left[#2\right]}
\title{On Statistical Bias In Active Learning: How and When to Fix It}

\author{Sebastian Farquhar$^\dagger$\thanks{Equal contribution. Corresponding author \texttt{sebastian.farquhar@cs.ox.ac.uk}.} \,, Yarin Gal$^\dagger$, Tom Rainforth$^{\ddagger*}$\\
University of Oxford, $^\dagger$OATML, Department of Computer Science; $^\ddagger$Department of Statistics}

%

\iclrfinalcopy 
\begin{document}

\setlength{\abovedisplayskip}{3.5pt}
\setlength{\belowdisplayskip}{3.5pt}
\setlength{\abovedisplayshortskip}{3.5pt}
\setlength{\belowdisplayshortskip}{3.5pt}	

\maketitle

\begin{abstract}
	\vspace{-4pt}
	Active learning is a powerful tool when labelling data is expensive, but it introduces a bias because the training data no longer follows the population distribution.
	We formalize this bias and investigate the situations in which it can be harmful and sometimes even helpful.
	We further introduce novel corrective weights to remove bias when doing so is beneficial.
	Through this, our work not only provides a useful mechanism that can improve the active learning approach, but also an explanation of the empirical successes of various existing approaches which ignore this bias.
	In particular, we show that this bias can be actively helpful when training overparameterized models---like neural networks---with relatively little data.
	\vspace{-6pt}
\end{abstract}

\newcommand{\nn}{f_{\rvtheta}}
\newcommand{\ed}{\hat{p}(\rvx, \ry)}
\newcommand{\sed}{\tilde{p}(\rvx, \ry)}
\newcommand{\pr}{r}
\newcommand{\er}{\hat{R}}
\newcommand{\Rt}{\tilde{R}}
\newcommand{\Rs}{\tilde{R}_{\text{LURE}}}
\newcommand{\Rp}{\tilde{R}_{\text{PURE}}}
\newcommand{\Dpool}{\mathcal{D}_{\textup{pool}}}
\newcommand{\Dtrain}{\mathcal{D}_{\textup{train}}}
\newcommand{\ssu}{\sigma_{\textup{LURE}}}
\newcommand{\rhat}{\hat{r}}
\newcommand{\var}{\textup{Var}}
\newcommand{\muD}{\mu_{m|i,\mathcal{D}}}
\newcommand{\mugD}{\mu_{|\mathcal{D}}}
\newcommand{\mumgD}{\mu_{m|\mathcal{D}}}
\newcommand{\mukgD}{\mu_{k|\mathcal{D}}}
   
\section{Introduction}
\vspace{-4pt}
In modern machine learning, unlabelled data can be plentiful while labelling requires scarce resources and expert attention, for example in medical imaging or scientific experimentation.
A promising solution to this is active learning---picking the most informative datapoints to label that will hopefully let the model be trained in the most sample-efficient way possible \citep{atlas_training_1990,settles_active_2010}.

However, active learning has a complication.
By picking the most informative labels, the acquired dataset is \emph{not} drawn from the population distribution.
This sampling bias, noted by e.g., \citet{mackay_information-based_1992,dasgupta_hierarchical_2008}, is worrying: key results in machine learning depend on the training data being identically and independently distributed (i.i.d.) samples from the population distribution.
For example, we train neural networks by minimizing a Monte Carlo estimator of the population risk.
If training data are actively sampled, that estimator is biased and we optimize the wrong objective.
The possibility of bias in active learning has been considered by e.g., \citet{beygelzimer_importance_2009,chu_unbiased_2011, ganti_upal_2012}, but the full problem is not well understood.
In particular, methods that remove active learning bias have been restricted to special cases, so it has been impossible to even establish whether removing active learning bias is helpful or harmful in typical situations.

To this end, we show how to remove the bias introduced by active learning with minimal changes to existing active learning methods.
As a stepping stone, we build a Plain Unbiased Risk Estimator, $\Rp$, which applies a corrective weighting to actively sampled datapoints in pool-based active learning.
Our Levelled Unbiased Risk Estimator, $\Rs$, builds on this and has lower variance and additional desirable finite-sample properties.
We prove that both estimators are unbiased and consistent for arbitrary functions, and characterize their variance.

Interestingly, we find---both theoretically and empirically---that our bias corrections can simultaneously also reduce the variance of 
the estimator, with these gains becoming larger for more effective acquisition strategies.
We show that, in turn, these combined benefits can sometimes lead to significant improvements
for both model \emph{evaluation} and \emph{training}.
The benefits are most pronounced in underparameterized models where each datapoint affects the learned function globally.
For example, in linear regression adopting our weighting allows better estimates of the parameters with less data.

On the other hand, in cases where the model is overparameterized and datapoints mostly affect the learned function locally---like deep neural networks---we find that correcting active learning bias can be ineffective or even harmful during model \emph{training}.
Namely, even though our corrections typically produce strictly superior statistical estimators, we find that the bias from standard active learning can actually be helpful by providing a regularising effect that aids \emph{generalization}.
Through this, our work explains the known empirical successes of existing active learning approaches for training deep models~\citep{gal_deep_2017, shen_deep_2018}, despite these ignoring the bias this induces.


To summarize, our main contributions are:
\begin{compactenum}
   \item We offer a formalization of the problem of statistical bias in active learning.
   \item We introduce active learning risk estimators, $\Rp$ and $\Rs$, and prove both are unbiased, consistent, and with variance that can be less than the naive (biased) estimator.
   \item Using these, we show that active learning bias can hurt in underparameterized cases like linear regression but help in overparameterized cases like neural networks and explain why.
\end{compactenum}
\vspace{-3pt}
\section{Bias in Active Learning}\label{s:problem_statement}
We begin by characterizing the bias introduced by active learning.
In supervised learning, generally, we aim to find a decision rule $\nn$ corresponding to inputs, $\rvx$, and outputs, $\ry$, drawn from a population data distribution $\pdata(\rvx, \ry)$ which, given a loss function $\Ls(\ry, \nn(\rvx))$, minimizes the population risk:
\begin{equation}
   \pr = \Expct{\rvx, \ry \sim \pdata}{\Ls(\ry, \nn(\rvx))}.
\end{equation}
The population risk cannot be found exactly, so instead we consider the empirical distribution for some dataset of $N$ points drawn from the population.
This gives the empirical risk: an unbiased and consistent estimator of $\pr$ when the data are drawn i.i.d from $\pdata$ and are independent of $\rvtheta$,
\begin{equation}
   \er = \frac{1}{N} \sum_{n=1}^{N}\nolimits \Ls(\ry_n, \nn(\rvx_n)).
\end{equation}
In pool-based active learning \citep{lewis_sequential_1994,settles_active_2010}, we begin with a large unlabelled dataset, known as the pool dataset $\Dpool \equiv \{\rvx_n|1\leq n \leq N\}$, and sequentially pick the most useful points for which to acquire labels.
The lack of most labels means we cannot evaluate $\er$ directly, so we use the \emph{sub-sample empirical risk} evaluated using the $M$ actively sampled labelled points:
\begin{equation}
   \tilde{R} = \frac{1}{M}\sum_{m=1}^{M}\nolimits \Ls(\ry_m, \nn(\rvx_m)).\label{eq:subsample_empirical_estimator}
\end{equation}
Though almost all active learning research uses this estimator (see Appendix \ref{a:active_learning_practice}), it is not an unbiased estimator of either $\er$ or $\pr$ when the $M$ points are actively sampled.
Under active---i.e.~non--uniform---sampling the $M$ datapoints are not drawn from the population distribution, resulting in a bias which we formally characterize in \S\ref{s:variance}.
See Appendix \ref{a:active_learning} for a more general overview of active learning.

Note an important distinction between what we will call ``statistical bias'' and ``overfitting bias.''
The bias from active learning above is a statistical bias in the sense that using $\Rt$ biases our estimation of $r$, regardless of $\rvtheta$.
As such, optimizing $\rvtheta$ with respect to $\Rt$ induces bias into our optimization of $\rvtheta$.
In turn, this breaks any consistency guarantees for our learning process: if we keep $M/N$ fixed, take $M\to\infty$, and optimize for $\rvtheta$, we no longer get the optimal $\rvtheta$ that minimizes $r$.
Almost all work on active learning for neural networks currently ignores the issue of statistical bias.

However, even without this statistical bias, indeed even if we use $\er$ directly, the training process itself also creates an overfitting bias: evaluating the risk using training data induces a dependency between the data and $\rvtheta$.
This is why we usually evaluate the risk on held-out test data when doing model selection.
Dealing with overfitting bias is beyond the scope of our work as this would equate to solving the problem of generalization.
The small amount of prior work which does consider statistical bias in active learning entirely ignores this overfitting bias without commenting on it.

In \S\ref{s:estimators}-\ref{s:overfitting}, we focus on statistical bias in active learning, so that we can produce estimators that are valid and consistent, and let us optimize the intended objective, not so they can miraculously close the train--test gap.
From a more formal perspective, our results all assume that $\rvtheta$ is chosen independently of the training data; an assumption that is almost always (implicitly) made in the literature.
This ensures our estimators form valid objectives, but also has important implications that are typically overlooked.
We return to this in \S\ref{s:overall_bias}, examining the interaction between statistical and overfitting bias.
\section{Unbiased Active Learning: \texorpdfstring{$\Rp$}{Rpure} and \texorpdfstring{$\Rs$}{Rlure}}\label{s:estimators}
We now show how to unbiasedly estimate the risk in the form of a weighted expectation over actively sampled data points.
We denote the set of actively sampled points $\Dtrain \equiv \{(\rvx_m, \ry_m)| 1 \leq m \leq M\}$, where $\forall m: \rvx_m \in \Dpool$.
We begin by building a ``plain'' unbiased risk estimator, $\Rp$, as a stepping stone---its construction is quite natural in that each term is individually an unbiased estimator of the risk.
We then use it to construct a ``levelled'' unbiased risk estimator, $\Rs$, which is an unbiased and consistent estimator of the population risk just like $\Rp$, but which reweights individual terms to produce lower variance and resolve some pathologies of the first approach.
Both estimators are easy to implement and have trivial compute/memory requirements.

\subsection{\texorpdfstring{$\Rp$}{Rpure}: Plain Unbiased Risk Estimator}\label{s:rpure}
For our estimators, we introduce an active sampling proposal distribution over \emph{indices} rather than the more typical distribution over datapoints.
This simplifies our proofs, but the two are algorithmically equivalent for pool-based active learning because of the one--to--one relationship between datapoints and indices.
We define the probability mass for each index being the next to be sampled, once $\Dtrain$ contains $m-1$ points, as $q(i_m; i_{1:m-1}, \Dpool)$.
Because we are learning actively, the proposal distribution depends on the indices sampled so far ($i_{1:m-1}$) and the available data ($\Dpool$, note though that it does \emph{not} depend on the labels of unsampled points).
The only requirement on this proposal distribution for our theoretical results is that it must place non-zero probability on all of the training data: anything else necessarily introduces bias.
Considerations for the acquisition proposal distribution are discussed further in \S\ref{s:elements}.
We first present the estimator
before proving the main results:
\begin{align}
   \Rp &\equiv \frac{1}{M}\sum_{m=1}^{M}\nolimits a_m; \quad \text{where} \quad a_m \equiv w_m \Ls_{i_m} + \frac{1}{N}\sum_{t=1}^{m-1}\nolimits \Ls_{i_t}, \label{eq:def_of_am}
\end{align}
where the loss at a point $\Ls_{i_m} \equiv \Ls(\ry_{i_m}, \nn(\rvx_{i_m}))$, the weights $w_m\equiv 1/Nq(i_m; i_{1:m-1}, \Dpool)$ and $i_m \sim q(i_m; i_{1:m-1}, \Dpool)$.
For practical implementation, $\Rp$ can further be written in the following more computationally friendly form that avoids a double summation:
\begin{align}
\label{eq:def_rpure}
\Rp &= \frac{1}{M} \sum_{m=1}^M \left(\frac{1}{Nq(i_m;i_{1:m-1}, \Dpool)}+\frac{M-m}{N}\right) \Ls_{i_m}.
\end{align}
However, we focus on the first form for our analysis because $a_m$ in~\eqref{eq:def_of_am} has some beneficial properties not shared by the weighting factors in~\eqref{eq:def_rpure}.
In particular, in Appendix \ref{a:sure_unbiasedness_helper} we prove that:
\begin{restatable}{lemma}{lemsureunbiasednesshelper}
   \label{lem:sure-unbiasedness-helper}
   The individual terms $a_m$ of $\Rp$ are unbiased estimators of the risk: $\Expct{}{a_m} = \pr$.
\end{restatable}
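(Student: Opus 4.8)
The plan is to prove the stronger conditional statement that, conditioning on the fully labelled pool $\gD \equiv \{(\rvx_n, \ry_n)\mid 1\le n\le N\}$, each term $a_m$ has conditional expectation equal to the empirical risk $\er$. Conditioning on $\gD$ is convenient because it freezes all $N$ loss values $\Ls_1,\dots,\Ls_N$, so that the only remaining randomness in $a_m$ comes from the active sampling of the indices $i_1,\dots,i_m$. Since the pool is an i.i.d.\ sample from $\pdata$, $\er$ is itself unbiased for $\pr$, so once the conditional claim is established the tower property $\Expct{}{a_m} = \Expct{\gD}{\Expct{}{a_m \mid \gD}} = \Expct{\gD}{\er} = \pr$ delivers the lemma.

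To evaluate $\Expct{}{a_m \mid \gD}$, I would condition one level deeper, on the first $m-1$ sampled indices $i_{1:m-1}$, and apply the law of total expectation. Given $i_{1:m-1}$, the correction term $\tfrac{1}{N}\sum_{t=1}^{m-1}\Ls_{i_t}$ is already determined, so only the importance-weighted term $w_m\Ls_{i_m}$ remains random. Writing out the expectation over $i_m \sim q(i_m; i_{1:m-1}, \Dpool)$ and substituting $w_m = 1/(N q(i_m; i_{1:m-1}, \Dpool))$, the proposal mass cancels exactly (this cancellation is where the requirement that $q$ assign positive probability to every selectable index is used, guaranteeing $w_m$ is well defined), leaving
\begin{align}
\Expct{}{w_m \Ls_{i_m} \mid i_{1:m-1}, \gD} = \sum_{i_m} q(i_m; i_{1:m-1}, \Dpool)\,\frac{\Ls_{i_m}}{N\,q(i_m; i_{1:m-1}, \Dpool)} = \frac{1}{N}\sum_{i_m}\Ls_{i_m},
\end{align}
where the sum ranges over the support of $q$ at step $m$.

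The key observation is that active learning samples \emph{without replacement}, so at step $m$ the proposal $q(\,\cdot\,; i_{1:m-1}, \Dpool)$ places mass only on the $N-(m-1)$ indices not yet selected. Hence the sum above runs over exactly those indices, giving $\tfrac{1}{N}\bigl(\sum_{n=1}^{N}\Ls_n - \sum_{t=1}^{m-1}\Ls_{i_t}\bigr) = \er - \tfrac{1}{N}\sum_{t=1}^{m-1}\Ls_{i_t}$. Adding the deterministic correction term $\tfrac{1}{N}\sum_{t=1}^{m-1}\Ls_{i_t}$ cancels this deficit precisely, so $\Expct{}{a_m \mid i_{1:m-1}, \gD} = \er$ for \emph{every} realization of $i_{1:m-1}$. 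Averaging over $i_{1:m-1}$ then gives $\Expct{}{a_m \mid \gD} = \er$, which combined with the tower argument above completes the proof.

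I expect the main subtlety to be this without-replacement bookkeeping. It is precisely because the importance-weighted term only recovers the average over the \emph{remaining} pool, rather than the full pool, that the correction term is both necessary and exactly the right size. A proof that implicitly assumed sampling with replacement would find the first term alone already unbiased and would misdiagnose the correction term as a source of bias; carefully tracking the shrinking support of $q$ across steps is what makes the cancellation go through.
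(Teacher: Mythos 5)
Your proof is correct and takes essentially the same route as the paper's: both condition on the pool and the previously sampled indices $i_{1:m-1}$, use the exact cancellation of the proposal mass in the importance weight $w_m$ to recover $\tfrac{1}{N}\sum_{n\notin i_{1:m-1}}\Ls_n$, note that the correction term restores the full empirical risk $\er$, and finish with the tower property via $\Expct{}{\er} = \pr$. The only cosmetic difference is that you condition explicitly on the fully labelled pool whereas the paper conditions on $\Dpool$ (treating the losses as determined by it), which changes nothing in the argument.
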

The motivation for the construction of $a_m$ directly originates from constructing an estimator where Lemma \ref{lem:sure-unbiasedness-helper} holds while only making use of the observed losses $\Ls_{i_1},\dots,\Ls_{i_{m}}$, taking care with the fact that each new proposal distribution does not have support over points that have already been acquired.
Except for trivial problems, $a_m$ is essentially unique in this regard; naive importance sampling (i.e.~$\frac{1}{M} \sum_{m=1}^{M} w_m\Ls_{i_m}$) does not lead to an unbiased, or even consistent, estimator.
However, the overall estimator $\Rp$ is not the only unbiased estimator of the risk, as we discuss in \S\ref{s:rsure}.

We can now characterize the behaviour of $\Rp$ as follows (see Appendix \ref{a:thmpureunbiased} for proof)
\begin{restatable}{theorem}{thmpureunbiased}
   \label{thm:pure_unbiased}
   $\Rp$ as defined above has the properties:
   \begin{align}
   \Expct{}{\Rp} &= \pr, \label{eq:pure_expectation} \\
    \var\left[\Rp\right]  &= \frac{\var\left[\Ls(\ry,\nn(\rvx))\right]}{N}+\frac{1}{M^2} \sum_{m=1}^{M} \Expct{\Dpool,i_{1:m-1}}{
   	\var\left[w_m \Ls_{i_m}|i_{1:m-1}, \Dpool\right]}. \label{eq:pure_variance_detail_2}
   \end{align}

   \end{restatable}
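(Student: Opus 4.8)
The plan is to read off unbiasedness directly from Lemma~\ref{lem:sure-unbiasedness-helper} and to obtain the variance decomposition \eqref{eq:pure_variance_detail_2} through two nested applications of the law of total variance, where the decisive structural fact is that the centered terms form a martingale difference sequence once we condition on the pool. For \eqref{eq:pure_expectation}, linearity and Lemma~\ref{lem:sure-unbiasedness-helper} give $\Expct{}{\Rp} = \frac{1}{M}\sum_{m=1}^{M}\Expct{}{a_m} = \pr$ immediately.

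For the variance, I would first condition on the pool and split
\begin{equation}
\var[\Rp] = \var_{\Dpool}\!\left[\Expct{}{\Rp \mid \Dpool}\right] + \Expct{\Dpool}{\var[\Rp \mid \Dpool]}. \notag
\end{equation}
The first step is to identify the inner conditional mean. Sharpening the argument behind Lemma~\ref{lem:sure-unbiasedness-helper} into its conditional form, I would show $\Expct{}{a_m \mid i_{1:m-1}, \Dpool} = \er$ for every $m$: the importance weight yields $\Expct{}{w_m \Ls_{i_m}\mid i_{1:m-1},\Dpool} = \frac{1}{N}\sum_{j\notin i_{1:m-1}} \Ls_j$, and adding the fixed partial sum $\frac{1}{N}\sum_{t=1}^{m-1}\Ls_{i_t}$ completes the empirical average over all $N$ pool losses. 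Hence $\Expct{}{\Rp\mid\Dpool} = \er$, and since the pool is drawn i.i.d.\ from $\pdata$, the between-pool term is $\var_{\Dpool}[\er] = \var[\Ls(\ry,\nn(\rvx))]/N$, which is the first summand of \eqref{eq:pure_variance_detail_2}.

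The main work is the within-pool term. Fixing $\Dpool$, so that all pool losses are constants and only the indices $i_{1:M}$ remain random, I would center by $b_m \equiv a_m - \er$. The conditional identity just established says exactly that $\Expct{}{b_m \mid i_{1:m-1},\Dpool} = 0$, i.e.\ $\{b_m\}$ is a martingale difference sequence for the filtration generated by $i_{1:m}$. This is the crux of the argument and the step I expect to be the main obstacle, because it is what forces the cross-covariances to vanish: for $m<n$, conditioning on $i_{1:n-1}$ and using that $b_m$ is measurable with respect to it gives $\Cov[b_m,b_n\mid\Dpool]=0$, whence $\var[\Rp\mid\Dpool] = \frac{1}{M^2}\sum_{m=1}^{M} \var[b_m\mid\Dpool]$. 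Finally, a second application of the law of total variance in $i_{1:m-1}$ kills the mean term (again by the conditional identity), and since $b_m$ differs from $w_m \Ls_{i_m}$ only by a quantity that is deterministic given $i_{1:m-1},\Dpool$, I obtain $\var[b_m\mid\Dpool] = \Expct{i_{1:m-1}}{\var[w_m \Ls_{i_m}\mid i_{1:m-1},\Dpool]}$. Taking $\Expct{\Dpool}{\cdot}$ and combining with the between-pool term yields \eqref{eq:pure_variance_detail_2}.
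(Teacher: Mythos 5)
Your proposal is correct and takes essentially the same route as the paper's proof: both hinge on the conditional identity $\Expct{}{a_m \mid i_{1:m-1}, \Dpool} = \er$ (implicit in the proof of Lemma~\ref{lem:sure-unbiasedness-helper}), both decompose the variance by first conditioning on $\Dpool$ so that the between-pool contribution is $\var[\er] = \var[\Ls(\ry,\nn(\rvx))]/N$, and both reduce the diagonal terms to $\Expct{i_{1:m-1}}{\var[w_m \Ls_{i_m} \mid i_{1:m-1}, \Dpool]}$ using that $a_m - w_m\Ls_{i_m}$ is deterministic given $i_{1:m-1}$ and $\Dpool$. The only difference is packaging: where you invoke the martingale-difference property of $b_m = a_m - \er$ to kill the cross-covariances in one line, the paper verifies the identical cancellation by expanding $\Expct{}{(a_m-\er)(a_k-\er)\mid \Dpool}$ into four products and cancelling them pairwise via the tower property.
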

   
   \begin{remark}
      The first term of \eqref{eq:pure_variance_detail_2} is the variance of the loss on the whole pool, while the second term accounts for 
      the variance originating from the active sampling itself given the pool.
      This second term is $O(N/M)$ times larger and so will generally dominate in practice as typically $M\ll N$.
   \end{remark}

   Armed with Theorem \ref{thm:pure_unbiased}, we can prove the consistency of $\Rp$ under standard assumptions:
   $\Rp$ converges in expectation (i.e. its mean squared error tends to zero) as $M\to\infty$ under the assumptions that $N>M$, $\Ls(\ry,\nn(\rvx))$ is integrable, and $q(i_m; i_{1:m-1}, \Dpool)$ is a valid proposal in the sense that it puts non-zero mass on each unlabelled datapoint.
   Formally, as proved in Appendix \ref{a:pure_consistency},

   \begin{restatable}{theorem}{thmpureconsistency}\label{thm:pure_consistent}
      Let $\alpha = N/M$ and assume that $\alpha > 1$. If~~$\Expct{}{\Ls(\ry,\nn(\rvx))^2}<\infty$ and 
      \[
      \exists \beta>0 ~:~ \min_{n \in \{1:N \backslash i_{1:m-1} \}} q(i_m=n; i_{1:m-1}, \Dpool) \ge \beta/N \quad \forall N \in \mathbb{Z}^+, m\le N,
      \] 
      then $\Rp$ converges in its $L^2$ norm to $r$ as $M\to \infty$, i.e.,~ $\lim_{M\to\infty} \Expct{}{(\Rp - r)^2} = 0$.
   \end{restatable}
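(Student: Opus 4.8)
The plan is to reduce the mean-squared-error statement to a variance statement and then show both variance terms vanish. Since Theorem \ref{thm:pure_unbiased} already establishes $\Expct{}{\Rp} = \pr$, the estimator is unbiased, so its mean squared error coincides exactly with its variance: $\Expct{}{(\Rp - \pr)^2} = \var[\Rp]$. It therefore suffices to show that both terms of the variance formula \eqref{eq:pure_variance_detail_2} tend to zero as $M \to \infty$, where $N = \alpha M \to \infty$ with $\alpha > 1$ fixed. The first term is immediate: since $\var[\Ls(\ry,\nn(\rvx))] \le \Expct{}{\Ls(\ry,\nn(\rvx))^2} < \infty$ by assumption and $N = \alpha M$, we have $\var[\Ls(\ry,\nn(\rvx))]/N \le \Expct{}{\Ls(\ry,\nn(\rvx))^2}/(\alpha M) \to 0$.

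The crux is the second term, and this is where the proposal-support assumption is used. First I would bound each conditional variance by the corresponding conditional second moment, $\var[w_m \Ls_{i_m} \mid i_{1:m-1}, \Dpool] \le \Expct{}{w_m^2 \Ls_{i_m}^2 \mid i_{1:m-1}, \Dpool}$. Writing this conditional expectation explicitly as a sum over the as-yet-unsampled indices $n$ and substituting $w_m = 1/(N q(i_m; i_{1:m-1}, \Dpool))$, one factor of $q$ cancels against the sampling probability, leaving $\sum_{n} \Ls_n^2 / (N^2\, q(i_m = n; i_{1:m-1}, \Dpool))$. The lower bound $q(i_m = n; i_{1:m-1}, \Dpool) \ge \beta/N$ then converts each factor $1/(N^2 q)$ into at most $1/(\beta N)$, so the conditional second moment is at most $\frac{1}{\beta N}\sum_{n} \Ls_n^2$, where the sum over unsampled indices can be enlarged to all $N$ pool points since every summand is non-negative.

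Taking the outer expectation over $\Dpool$ and $i_{1:m-1}$ gives a bound that is uniform in $m$: the enlarged sum $\sum_{n=1}^N \Ls_n^2$ no longer depends on $i_{1:m-1}$, and because the pool points are drawn i.i.d.\ from $\pdata$ each of the $N$ terms contributes $\Expct{}{\Ls(\ry,\nn(\rvx))^2}$, yielding $\Expct{\Dpool,i_{1:m-1}}{\var[w_m \Ls_{i_m} \mid i_{1:m-1}, \Dpool]} \le \Expct{}{\Ls(\ry,\nn(\rvx))^2}/\beta$. Summing over the $M$ terms and applying the $1/M^2$ prefactor bounds the second term by $\Expct{}{\Ls(\ry,\nn(\rvx))^2}/(\beta M)$, which tends to zero. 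Combining the two vanishing contributions gives $\var[\Rp] \to 0$, establishing $L^2$ convergence. I expect the only delicate point to be the bookkeeping around the shrinking support of $q$ as points are acquired—specifically, justifying that the assumed bound $q \ge \beta/N$ applies at every step $m$ and that enlarging the sum over unsampled indices to the full pool is legitimate—rather than any analytically difficult estimate.
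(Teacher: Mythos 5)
Your proof is correct, and its skeleton matches the paper's: unbiasedness (Theorem \ref{thm:pure_unbiased}) reduces the $L^2$ claim to $\var[\Rp]\to 0$, the first term of \eqref{eq:pure_variance_detail_2} dies at rate $1/(\alpha M)$, and the support assumption on $q$ controls the second term. The difference lies in the key estimate, and yours is genuinely the stronger one. The paper bounds the weight itself, $w_m = 1/(Nq)\le 1/\beta$, leaving each conditional variance bounded by $\beta^{-2}\,\Expct{}{\Ls_{i_m}^2\mid i_{1:m-1},\Dpool}$ minus a square; it then argues that each summand is finite, sets $s^2$ equal to the average of the $M$ expected conditional variances, and concludes $s^2/M\to 0$. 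That last step is loose: $s^2$ depends on $M$ (through both the number of terms and $N=\alpha M$), and the retained quantity $\Expct{}{\Ls_{i_m}^2\mid i_{1:m-1},\Dpool}$ is a $q$-weighted average over the pool whose expectation can grow with $N$ (a proposal satisfying the assumption may concentrate mass near the largest loss in the pool), so per-$M$ finiteness alone does not give $s^2/M\to 0$; repairing the paper's route needs an extra fact such as $\Expct{}{\max_{n\le N}\Ls_n^2}=o(N)$ for i.i.d.\ variables with finite mean. You avoid this entirely by expanding the conditional second moment as an explicit sum, cancelling one factor of $q$ against the sampling probability \emph{before} invoking $q\ge \beta/N$, which converts $1/(N^2 q)$ into $1/(\beta N)$ and, after enlarging to the full pool and using that its points are i.i.d.\ draws from $\pdata$, yields the bound $\Expct{}{\Ls(\ry,\nn(\rvx))^2}/\beta$ uniformly in $m$ and $M$, hence an explicit $O(1/M)$ rate for the second term. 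In short: same decomposition, but your handling of the step you flagged as delicate is tighter and closes a uniformity gap that the paper's own proof leaves implicit.
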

      
\subsection{\texorpdfstring{$\Rs$}{Rlure}: Levelled Unbiased Risk Estimator}\label{s:rsure}
$\Rp$ is natural in that each term is an unbiased estimator of $\pr$.
However, this creates surprising behaviour given the sequential structure of active learning.
For example, with a uniform proposal distribution---equivalent to not actively learning---points sampled earlier are more highly weighted than later ones and $\Rp \neq \tilde{R}$.
Specifically, a uniform proposal, $q(i_m;i_{1:m-1}, \Dpool) = \frac{1}{N- m + 1}$, gives a weight on each sampled point of $1 + \frac{M-2m + 1}{N} \neq 1$.
Similarly, as $M\to N$ (such that we use the full pool) the weights also fail to become uniform:
setting $M=N$ gives a weight for each point of $1 + \frac{M-2m + 1}{N} \neq 1$.
$\Rs$ fixes this.
We first quote the estimator before proving key results:
\begin{align}
   \Rs &\equiv \frac{1}{M}\sum_{m=1}^{M} v_m \Ls_{i_m};~~ v_m\equiv 1+\frac{N-M}{N-m} \left(\frac{1}{\left(N-m+1\right)q(i_m; i_{1:m-1}, \Dpool)}-1\right).~~ \label{eq:r_cure}
\end{align}
This estimator ensures that the expected value of the weight, $v_m$, does not depend on the \emph{position} it was sampled in, but only on the probability with which it was sampled.
That is, $\Expct{}{v_m} = 1$ for all $m$, $M$, $N$, and $q(i_m; i_{1:m-1}; \Dpool)$.
As a consequence the variance is generally lower.
Moreover, we resolve the finite-sample behaviour shown by $\Rp$.
The weights become more even as $M$ increases for a given $N$, and when $M=N$, each $v_m=1$ such that $\Rs = \tilde{R} = \hat{R}$.
Additionally, if the proposal is uniform, all weights are always exactly $1$ such that $\Rs = \tilde{R}$.

To derive $\Rs$ note that each $a_m$ estimates $\pr$ without bias so for any normalized linear combination:
\begin{equation}
   \Expct{}{\frac{\sum_{m=1}^{M}\nolimits c_m a_m}{\sum_{m=1}^{M}\nolimits c_m}} = \pr,
\end{equation}
provided that the $c_m$ are constant with respect to the data and sampled indices (they can depend on $M$, $N$, and $m$).
In Appendix \ref{a:derivingcm} we show that the choice of :
\begin{equation}
   c_m = \frac{N(N-M)}{(N-m)(N-m+1)}
\end{equation}
produces the $v_m$ from~\eqref{eq:r_cure} and in turn that these $v_m$ have the desired property $\Expct{}{v_m} = 1, \,\forall m \in \{1, \dots, M\}$.
We note also that $\sum_{m=1}^M c_m = M$, such that $\Rs = \frac{1}{M}\sum_{m=1}^{M} c_m a_m$.
We further characterise the variance and unbiasedness of $\Rs$ as follows (see Appendix \ref{a:sure_unbiased} for proof)
\begin{restatable}{theorem}{thmsureunbiased}
   \label{thm:sure_unbiased}
	$\Rs$ as defined above has the following properties:
	\begin{align}
	\Expct{}{\Rs} &= r, \\
	\var\left[\Rs\right]  &= \frac{\var\left[\Ls(\ry,\nn(\rvx))\right]}{N}+\frac{1}{M^2} \sum_{m=1}^{M} c_m^2\Expct{\Dpool,i_{1:m-1}}{
		\var\left[w_m \Ls_{i_m}|i_{1:m-1}, \Dpool\right]}. \label{eq:sure_variance}
	\end{align}
\end{restatable}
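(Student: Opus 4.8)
The plan is to build everything on the representation $\Rs = \frac{1}{M}\sum_{m=1}^{M} c_m a_m$ with $\sum_{m=1}^M c_m = M$ recorded just above, so that both claims reduce to properties of the terms $a_m$ already available from Lemma \ref{lem:sure-unbiasedness-helper} and the proof of Theorem \ref{thm:pure_unbiased}. Unbiasedness is then immediate from linearity: because the $c_m$ depend only on $m,M,N$ and not on the data or sampled indices, $\Expct{}{\Rs} = \frac{1}{M}\sum_{m=1}^M c_m\,\Expct{}{a_m} = \frac{r}{M}\sum_{m=1}^M c_m = r$.

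For the variance I would condition on the pool and apply the law of total variance,
\[
\var[\Rs] = \var_{\Dpool}\big[\Expct{}{\Rs \mid \Dpool}\big] + \Expct{\Dpool}{\var[\Rs \mid \Dpool]}.
\]
The first ingredient is that each $a_m$ is conditionally unbiased for the full-pool empirical risk $\er$: evaluating $\Expct{}{w_m\Ls_{i_m}\mid i_{1:m-1},\Dpool} = \frac{1}{N}\sum_{j\notin i_{1:m-1}}\Ls_j$ and adding the now-fixed history term $\frac{1}{N}\sum_{t=1}^{m-1}\Ls_{i_t}$ gives $\Expct{}{a_m\mid i_{1:m-1},\Dpool} = \er$. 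Summing with weights $c_m$ and using $\sum_m c_m = M$ yields $\Expct{}{\Rs\mid\Dpool} = \er$, so the between-pool term is $\var_{\Dpool}[\er] = \var[\Ls(\ry,\nn(\rvx))]/N$ because $\er$ averages $N$ i.i.d. population losses; this is the first term of~\eqref{eq:sure_variance}.

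The step I expect to be the main obstacle is the within-pool term, where the cross-covariances must be shown to vanish. Writing $Z_m \equiv a_m - \er$, the identity above says $\Expct{}{Z_m \mid i_{1:m-1},\Dpool} = 0$, i.e.\ the $Z_m$ are martingale differences for the filtration generated by $(i_{1:m},\Dpool)$. For $m<m'$ the variable $Z_m$ is measurable with respect to $(i_{1:m'-1},\Dpool)$, so
\[
\Cov\big[c_m Z_m,\, c_{m'} Z_{m'} \mid \Dpool\big] = c_m c_{m'}\, \Expct{}{Z_m\, \Expct{}{Z_{m'}\mid i_{1:m'-1},\Dpool} \;\Big|\; \Dpool} = 0,
\]
leaving only the diagonal, $\var[\Rs\mid\Dpool] = \frac{1}{M^2}\sum_{m=1}^M c_m^2\,\var[a_m\mid\Dpool]$, with the deterministic $c_m^2$ pulled outside.

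It remains to evaluate $\var[a_m\mid\Dpool]$. Applying the law of total variance once more, now conditioning on $(i_{1:m-1},\Dpool)$, the outer contribution vanishes since $\Expct{}{a_m\mid i_{1:m-1},\Dpool} = \er$ is constant given the pool, while in the inner contribution only $w_m\Ls_{i_m}$ is random (the history sum is $(i_{1:m-1},\Dpool)$-measurable), so $\var[a_m\mid\Dpool]$ equals the conditional expectation over $i_{1:m-1}$ of $\var[w_m\Ls_{i_m}\mid i_{1:m-1},\Dpool]$. Substituting into the diagonal sum, taking the outer expectation over $\Dpool$ so the nested conditional expectations collapse to $\Expct{\Dpool,i_{1:m-1}}{\cdot}$, and adding the between-pool term gives exactly~\eqref{eq:sure_variance}. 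The only departure from the $\Rp$ calculation of Theorem \ref{thm:pure_unbiased} is the factor $c_m^2$ in each summand, which survives precisely because the $c_m$ are data-independent constants---so the cleanest write-up reuses that argument rather than re-deriving the martingale decomposition, flagging only where the weights enter.
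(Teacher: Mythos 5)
Your proposal is correct and follows essentially the same route as the paper: unbiasedness via linearity and the data-independence of the $c_m$, then a decomposition of the variance into the between-pool term $\var[\er] = \var[\Ls(\ry,\nn(\rvx))]/N$ plus the expected within-pool variance, with the cross terms killed by the conditional unbiasedness $\Expct{}{a_m \mid i_{1:m-1},\Dpool} = \er$ and the diagonal terms reducing to $\Expct{i_{1:m-1}}{\var[w_m\Ls_{i_m}\mid i_{1:m-1},\Dpool]\mid\Dpool}$. Your martingale-difference phrasing of the cross-term cancellation is just a cleaner packaging of the explicit tower-property expansion the paper carries out in the proof of Theorem \ref{thm:pure_unbiased} and then reuses for Theorem \ref{thm:sure_unbiased}---exactly the reuse you flag at the end.
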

Although not obvious from inspection of~\eqref{eq:sure_variance}, in Appendix~\ref{a:variance_comparison} we prove that
the variance of $\Rs$ is always less than that of $\Rp$ subject to a mild assumption about the proposal which we detail there.
\begin{restatable}{theorem}{thmbettervar}
	\label{thm:bettervar}
	If Equation~\eqref{eq:assump_1} in Appendix~\ref{a:variance_comparison} holds then $\var[\Rs] \le \var[\Rp]$. 
	If $M>1$ and $\Expct{\Dpool}{\var[w_m \Ls_{i_1}|\Dpool]}>0$ also hold, then the inequality is strict: $\var[\Rs] < \var[\Rp]$. 
\end{restatable}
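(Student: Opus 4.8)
The plan is to work directly from the two variance decompositions already established, namely the formula for $\var[\Rp]$ in Theorem~\ref{thm:pure_unbiased} and the formula for $\var[\Rs]$ in Theorem~\ref{thm:sure_unbiased}. The first summand $\var[\Ls(\ry,\nn(\rvx))]/N$ is identical in both, so subtracting one variance from the other cancels it and leaves only the second, active-sampling term. Writing $\gamma_m \equiv \Expct{\Dpool,i_{1:m-1}}{\var[w_m\Ls_{i_m}\mid i_{1:m-1},\Dpool]}\ge 0$, this gives
\begin{equation*}
\var[\Rp] - \var[\Rs] = \frac{1}{M^2}\sum_{m=1}^{M}\bigl(1 - c_m^2\bigr)\,\gamma_m, \qquad c_m = \frac{N(N-M)}{(N-m)(N-m+1)}.
\end{equation*}
Thus the entire theorem reduces to showing that the assumption of Theorem~\ref{thm:bettervar} implies $\sum_{m=1}^{M}(1-c_m^2)\gamma_m \ge 0$, with strictness under the stated extra hypotheses.

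Before attacking that inequality I would record the elementary properties of the coefficients $c_m$. They are strictly positive and, because the denominator $(N-m)(N-m+1)$ decreases in $m$, strictly increasing; one checks $c_1 = (N-M)/(N-1) < 1$ and $c_M = N/(N-M+1) > 1$, so $c_m$ crosses $1$ exactly once and the factor $1-c_m^2$ is positive for small $m$ and negative for large $m$. Moreover $\sum_{m=1}^M c_m = M$ (used already in the derivation of $\Rs$), so by a power-mean/convexity argument $\sum_{m=1}^M c_m^2 > M$ whenever $M>1$, i.e.\ $\sum_m (1-c_m^2) < 0$.

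The main obstacle is precisely this sign change: the summand $(1-c_m^2)\gamma_m$ is not of one sign, so no term-by-term comparison can work. In fact the computation above shows that a \emph{constant} $\gamma_m$ would give $\sum_m(1-c_m^2)\gamma_m = \gamma\,(M-\sum_m c_m^2)<0$, reversing the desired inequality; hence mere monotonicity of $\gamma_m$ cannot suffice and the quantitative control on the proposal furnished by the theorem's assumption is genuinely needed to guarantee that $\gamma_m$ decays fast enough in the sampling order $m$ for the early positive contributions to dominate the late negative ones. The route I would take is a summation-by-parts (Abel) argument: introduce the partial sums $T_k = \sum_{m=1}^{k}(1-c_m^2)$, rewrite $\sum_m (1-c_m^2)\gamma_m = T_M\gamma_M + \sum_{k=1}^{M-1} T_k(\gamma_k-\gamma_{k+1})$, and then use the assumption on $\gamma_m$ together with the single-crossing sign pattern of the $c_m$ (equivalently, the shape of $T_k$) to force the combination to be non-negative. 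Establishing that the assumption exactly matches the sign structure of $T_k$ is where the real work sits.

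For strictness, I would isolate the $m=1$ term. When $M>1$ we have $c_1<1$ strictly, so $1-c_1^2>0$, and the hypothesis $\Expct{\Dpool}{\var[w_1\Ls_{i_1}\mid\Dpool]}=\gamma_1>0$ makes that term strictly positive; since the assumption guarantees the remaining contribution is non-negative and does not cancel it, the total is strictly positive, yielding $\var[\Rs]<\var[\Rp]$.
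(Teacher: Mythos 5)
Your reduction is correct and matches the paper's first step: the $\var[\Ls(\ry,\nn(\rvx))]/N$ terms cancel, leaving $\var[\Rp]-\var[\Rs]=\frac{1}{M^2}\sum_{m=1}^{M}(1-c_m^2)\gamma_m$, and your structural observations about $c_m$ (increasing, single crossing of $1$, $\sum_m c_m=M$, hence $\sum_m c_m^2>M$) are all right. Your diagnosis that constant $\gamma_m$ \emph{reverses} the inequality, so that no term-by-term or plain-monotonicity argument can work, is also correct and is genuinely the crux. But at exactly that point the proposal stops: you defer to an unspecified Abel-summation argument and concede that matching the hypothesis to the sign structure of the partial sums ``is where the real work sits.'' That work is the theorem. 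Moreover, the assumption~\eqref{eq:assump_1} is not a monotonicity or decay-rate condition on $\gamma_m$ of the kind summation by parts naturally exploits: it is the \emph{pairwise} condition $F_m \ge F_{M-m+1}$ for $1\le m\le M/2$, stated in terms of the weight-normalized quantities $F_m = \bigl(\tfrac{N}{N-m+1}\bigr)^{2}\gamma_m$ (normalizing by $\Expct{}{w_m\mid i_{1:m-1},\Dpool}=(N-m+1)/N$). The paper's proof therefore pairs the $m$-th and $(M-m+1)$-th summands, writes each pair as $S_m$, and shows $S_m\ge 0$ using the assumption together with an elementary integer inequality (Lemma~\ref{lem:NMm}), handling odd $M$ and $M=N$ separately. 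Your Abel route, even if completable, would need a hypothesis of a different shape than the one the theorem actually carries.

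The strictness argument is also flawed as stated. You isolate the $m=1$ term, note $(1-c_1^2)\gamma_1>0$, and then assert that ``the assumption guarantees the remaining contribution is non-negative and does not cancel it.'' It does not: the tail $\sum_{m=2}^{M}(1-c_m^2)\gamma_m$ contains all of the negative coefficients $(1-c_m^2)$ for large $m$ while excluding the most positive one, and nothing in the assumption bounds it below by zero. In the paper, strictness comes from the same pairing: one shows $S_1>0$ by combining the $m=1$ and $m=M$ terms, using $F_1\ge F_M$ and the explicit computation
\begin{equation}
S_1 \ge \left(\frac{(N-M+1)^2}{N^2}-\frac{(N-M)^2}{(N-1)^2}\right)F_1 = \frac{(M-1)(2N^2-2MN+M-1)}{N^2(N-1)^2}\,F_1 > 0
\end{equation}
when $M>1$ and $F_1>0$, while all other pairs remain non-negative. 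So both the main inequality and the strictness claim in your proposal have genuine gaps, even though the setup and the identification of the obstacle are sound.
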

To provide intuition into why this result holds, remember that $c_m$ were introduced to ensure that $\Expct{}{v_m}$ are all identically one.
Therefore this weighting removes the tendency of $\Rp$ to overemphasize the earlier samples; essentially increasing the effective sample size by correcting the imbalance. 

We finish by confirming that $\Rs$ is a consistent estimator as $M\to\infty$ (proof in Appendix \ref{a:sure_consistent}):
\begin{restatable}{theorem}{thmsureconsistent}\label{thm:sure_consistent}
   Under the same assumptions as Theorem \ref{thm:pure_consistent}:~
   $\lim_{M\to\infty} \Expct{}{\big(\Rs - r\big)^2} = 0$.
\end{restatable}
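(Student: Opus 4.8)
The plan is to leverage the unbiasedness of $\Rs$ established in Theorem~\ref{thm:sure_unbiased} to reduce the claim to a variance bound, and then to control that variance using the explicit formula~\eqref{eq:sure_variance} together with the hypotheses inherited from Theorem~\ref{thm:pure_consistent} ($\alpha = N/M > 1$ fixed, $\Expct{}{\Ls(\ry,\nn(\rvx))^2}<\infty$, and $q(i_m=n; i_{1:m-1},\Dpool)\ge \beta/N$). Since $\Expct{}{\Rs}=r$, the mean squared error satisfies $\Expct{}{(\Rs-r)^2} = \var[\Rs]$, so it suffices to show $\var[\Rs]\to 0$ as $M\to\infty$ (with $N=\alpha M\to\infty$). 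Splitting~\eqref{eq:sure_variance}, the first term $\var[\Ls(\ry,\nn(\rvx))]/N$ vanishes immediately, because $N=\alpha M\to\infty$ and $\Expct{}{\Ls^2}<\infty$ ensures $\var[\Ls]<\infty$. Everything therefore hinges on the second, active-sampling term.

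The key structural observation is that, despite its complicated appearance, the coefficient $c_m = N(N-M)/[(N-m)(N-m+1)]$ is \emph{uniformly} bounded for $1\le m\le M$. Indeed, for such $m$ we have $(N-m)(N-m+1) > (N-M)^2$, so that $c_m < N/(N-M) = \alpha/(\alpha-1)$, a constant independent of $m$, $M$, and $N$. This is the crucial point: the extra reweighting that distinguishes $\Rs$ from $\Rp$ does not blow up as $M$ grows.

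Next I would bound the conditional variance factor. Dropping the subtracted mean gives $\var[w_m\Ls_{i_m}|i_{1:m-1},\Dpool]\le \Expct{}{(w_m\Ls_{i_m})^2|i_{1:m-1},\Dpool}$, and since $i_m\sim q(\cdot;i_{1:m-1},\Dpool)$ and $w_m = 1/(Nq(i_m;i_{1:m-1},\Dpool))$, writing $q(n)$ for $q(i_m=n;i_{1:m-1},\Dpool)$ this conditional second moment is
\[
\Expct{}{(w_m\Ls_{i_m})^2|i_{1:m-1},\Dpool} = \frac{1}{N^2}\sum_{n\notin i_{1:m-1}}\frac{\Ls_n^2}{q(n)} \le \frac{1}{N\beta}\sum_{n=1}^{N}\Ls_n^2,
\]
where I applied the proposal lower bound $q(n)\ge\beta/N$ and extended the sum to all $N$ indices. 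Taking the expectation over $\Dpool$ (and $i_{1:m-1}$) and using $\Expct{}{\Ls^2}<\infty$ yields the $m$-independent bound $\Expct{}{\Ls^2}/\beta$. Combining this with $c_m^2 < (\alpha/(\alpha-1))^2$, the second term of~\eqref{eq:sure_variance} is at most $\frac{1}{M}\left(\alpha/(\alpha-1)\right)^2\Expct{}{\Ls^2}/\beta = O(1/M)$, which tends to $0$ and completes the argument.

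The main obstacle is the uniform control of $c_m$: one must verify that the position-dependent levelling coefficients stay bounded in $m$ as both $M$ and $N$ grow, since otherwise the reweighting could in principle inflate the sampling variance faster than the $1/M^2$ prefactor suppresses it. Once $c_m<\alpha/(\alpha-1)$ is in hand, the remainder parallels the consistency argument for $\Rp$ in Theorem~\ref{thm:pure_consistent}. An alternative route would be to invoke $\var[\Rs]\le\var[\Rp]$ from Theorem~\ref{thm:bettervar} together with Theorem~\ref{thm:pure_consistent}; however, that inequality is only established under the additional assumption~\eqref{eq:assump_1}, so the direct bound above is preferable as it needs only the stated hypotheses.
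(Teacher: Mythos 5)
Your proposal is correct and takes essentially the same route as the paper's own proof: reduce the MSE to the variance via unbiasedness, note the $\var\left[\Ls(\ry,\nn(\rvx))\right]/N$ term vanishes as $N=\alpha M\to\infty$, and kill the sampling term by combining a uniform bound on $c_m$ (of order $\alpha/(\alpha-1)$) with a uniform bound on the expected conditional variances, giving an overall $O(1/M)$ rate. The only (cosmetic) difference is that you derive the explicit uniform bound $\Expct{}{\Ls^2}/\beta$ on $\Expct{\Dpool,i_{1:m-1}}{\var\left[w_m \Ls_{i_m}|i_{1:m-1},\Dpool\right]}$ from scratch, whereas the paper imports its finiteness from the proof of Theorem~\ref{thm:pure_consistent} as an unnamed constant $d$ and bounds $c_m$ only after substituting $N=\alpha M$.
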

\subsection{From Active Learning Schemes to Proposals}\label{s:elements}
We have introduced two elements of the active learning scheme: the risk estimators---$\Rp$ and $\Rs$---and the acquisition proposal distribution---$q(i_m|i_{1:m-1}, \Dpool)$---which has so far remained general.
So long as the acquisition proposal puts non-zero mass on all the training data, $\Rp$ and $\Rs$ are unbiased and consistent as proven above.
This is in contrast to the naive risk estimator $\tilde{R}$, for which the choice of proposal distribution affects the bias of the estimator.

It is easy to satisfy the requirement for non-zero mass everywhere.
Even prior work which selects points deterministically (e.g.,  Bayesian Active Learning by Disagreement (BALD) \citep{houlsby_bayesian_2011} or a geometric heuristic like coreset construction \citep{sener_active_2018}) can be easily adapted.
Any scheme, like BALD, that selects the points with $\texttt{argmax}$ can use $\texttt{softmax}$ to return a distribution.
Alternatively, a distribution can be constructed analogous to epsilon-greedy exploration.
With probability $\epsilon$ we pick uniformly, otherwise we pick the point returned by an arbitrary acquisition strategy.
This adapts any deterministic active learning scheme to allow unbiased risk estimation.

It is also possible to use $\Rs$ and $\Rp$ with data collected using a proposal distribution that does not fully support the training data, though they will not fully correct the bias in this case.
Namely, if we have a set of points, $I$, that are ignored by the proposal (i.e.~that are assigned zero mass), we can still use $\Rs$ and $\Rp$ in the same way but they both introduce the same following bias:
\begin{align}
\mathbb{E}[\tilde{R}^I_{\text{LURE}}] =	\mathbb{E}[\tilde{R}^I_{\text{PURE}}] &=  \mathbb{E}\left[\mathbb{E}\left[\Rs\middle|\Dpool\right] - \mathbb{E}\left[\frac{1}{N}\sum_{n\in I} \mathcal{L}_n\middle|\Dpool\right]\right] 
	= r - \mathbb{E}\left[\frac{1}{N}\sum_{n\in I} \mathcal{L}_n\right]. 
\end{align}
Sometimes this bias will be small and may be acceptable if it enables a desired acquisition scheme, but in general one of the stochastic adaptations described above is likely to be preferable.
One can naturally also extend this result to cases where $I$ varies at each iteration of the active learning (including deterministic acquisition strategies), for which we again have a non--zero bias.
	
Though the choices of acquisition proposal and risk estimator are algorithmically detached, choosing a good proposal will still be critical to performance in practice.
In the next section, we will discuss how the proposal distribution can affect the \emph{variance} of the estimators, and we will see that our approaches also offer the potential to reduce the variance of the naive biased estimator.
Later, in \S\ref{s:overall_bias}, we will turn to a third element of active learning schemes---\emph{generalization}---and consider the fact that
optimization introduces a bias separately from the choice of risk estimator and proposal distribution.

\section{Understanding the Effect of \texorpdfstring{$\Rs$}~ and \texorpdfstring{$\Rp$}~ on Variance}\label{s:variance}

In order to show that the variance of our unbiased estimators can be \emph{lower} than that of the biased $\Rt$, with a well-chosen acquisition function, we first introduce an analogous result to Theorems \ref{thm:pure_unbiased} and \ref{thm:sure_unbiased} for $\Rt$, the proof for which is given in Appendix~\ref{a:standard}:
\begin{restatable}{theorem}{thmstandard}
	\label{thm:standard}
	Let $\mu_m:=\Expct{}{\Ls_{i_m}}$ and $\muD:=\Expct{}{\Ls_{i_m}|i_{1:m-1},\Dpool}$. For $\Rt$  (defined in~\eqref{eq:subsample_empirical_estimator}):
	\begin{align}
	\Expct{}{\Rt} =&~\frac{1}{M}\sum\nolimits_{m=1}^{M} \mu_m \quad (\neq r ~\text{in general})\\
	\begin{split}
	\var[\Rt] =&~ \overbrace{\var_{\Dpool}\left[\Expct{}{\Rt \middle| \Dpool\right]}}^{{\tiny \circled 1}}
	+\overbrace{\frac{1}{M^2}\sum\nolimits_{m=1}^M
	\Expct{\Dpool,i_{1:m-1}}{
			\textup{Var}\left[\Ls_{i_m}|i_{1:m-1}, \Dpool\right]}}^{{\tiny \circled 2}} \\
	&+ \frac{1}{M^2}\sum_{m=1}^M\underbrace{\Expct{\Dpool}{\var\left[\muD \middle| \Dpool\right]}}_{{\tiny \circled 3}}+\underbrace{2\,\Expct{\Dpool}{\textnormal{Cov}\left[\Ls_{i_m},\sum\nolimits_{k<m} \Ls_{i_k} \middle| \Dpool\right]}}_{{\tiny \circled 4}}.~~
	\end{split}\label{eq:var_Rt}
	\end{align}
	\vspace{-10pt}
\end{restatable}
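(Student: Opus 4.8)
The plan is to obtain the expectation by linearity and then build the variance decomposition through two nested applications of the law of total variance, with a variance-of-a-sum expansion in between. Throughout I treat the total randomness as the pool $\Dpool$ (taken to include the labels, so that every loss $\Ls_n$ is a deterministic function of $\Dpool$) together with the sequentially sampled indices $i_{1:M}$. The expectation is immediate: by linearity, $\Expct{}{\Rt} = \frac{1}{M}\sum_{m=1}^{M}\Expct{}{\Ls_{i_m}} = \frac{1}{M}\sum_{m=1}^{M}\mu_m$. This is generally $\neq r$ because the marginal law of $i_m$ under active sampling is not uniform over the pool, so $\mu_m \neq r$ in general.

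For the variance, I would first condition on the pool. The law of total variance gives
\[
\var[\Rt] = \var_{\Dpool}\bigl[\Expct{}{\Rt \mid \Dpool}\bigr] + \Expct{\Dpool}{\var[\Rt \mid \Dpool]},
\]
whose first summand is exactly term $\circled{1}$. I then expand the conditional variance as the variance of a sum: conditional on $\Dpool$,
\[
\var[\Rt\mid\Dpool] = \frac{1}{M^2}\left(\sum_{m=1}^{M}\var[\Ls_{i_m}\mid\Dpool] + 2\sum_{m=1}^{M}\Cov\Bigl[\Ls_{i_m},\, \sum_{k<m}\Ls_{i_k}\,\Big|\,\Dpool\Bigr]\right).
\]
Applying $\Expct{\Dpool}{\cdot}$ to the cross terms and carrying the prefactor $\tfrac{1}{M^2}\sum_m$ produces $\circled{4}$. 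The essential point here is that the indices are drawn \emph{sequentially and without replacement}, with each proposal $q(i_m; i_{1:m-1},\Dpool)$ depending on the history, so the $\Ls_{i_m}$ remain dependent even given the pool; this is precisely why the covariance term must be retained rather than dropped as it would be under i.i.d. sampling.

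It remains to split each diagonal term $\var[\Ls_{i_m}\mid\Dpool]$ by a second application of the law of total variance, now conditioning further on the earlier indices $i_{1:m-1}$:
\[
\var[\Ls_{i_m}\mid\Dpool] = \Expct{}{\var[\Ls_{i_m}\mid i_{1:m-1},\Dpool]\,\big|\,\Dpool} + \var[\muD\mid\Dpool],
\]
using that $\muD = \Expct{}{\Ls_{i_m}\mid i_{1:m-1},\Dpool}$ by definition. Multiplying by $\tfrac{1}{M^2}$, summing over $m$, and applying $\Expct{\Dpool}{\cdot}$, the first piece collapses by the tower property into $\circled{2}$ (with the outer $\Expct{\Dpool}{\cdot}$ absorbing $\Expct{}{\,\cdot\mid\Dpool}$ into the joint expectation $\Expct{\Dpool,i_{1:m-1}}{\cdot}$), while the second piece gives $\circled{3}$. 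Collecting $\circled{1}$ through $\circled{4}$ yields~\eqref{eq:var_Rt}.

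The main obstacle is purely bookkeeping rather than any nontrivial inequality: one must keep the nested conditioning consistent and invoke the tower property correctly so that the $\Dpool$-conditional expectations merge into the stated joint expectations. The only genuine subtlety is the sequential, without-replacement dependence structure, which is exactly what forces the covariance term $\circled{4}$ to survive and distinguishes $\Rt$ from the i.i.d. empirical risk $\er$.
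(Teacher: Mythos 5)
Your proposal is correct and follows essentially the same route as the paper: one application of the law of total variance conditioning on $\Dpool$ (yielding term {\tiny \circled 1}), a variance-of-a-sum expansion of $\var[\Rt \mid \Dpool]$ into diagonal and covariance pieces (yielding {\tiny \circled 4}), and a second application of the law of total variance conditioning on $i_{1:m-1}$ to split the diagonal terms into {\tiny \circled 2} and {\tiny \circled 3}. The only cosmetic difference is that the paper derives the conditional variance-of-a-sum decomposition by hand (introducing $\mumgD$ and $\mugD$, expanding, and cancelling cross terms) rather than quoting the standard formula as you do.
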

Examining this expression suggests that the variances of $\Rp$ and, in particular, $\Rs$ will often be lower than that of $\Rt$, given a suitable proposal.
Consider the terms of ~\eqref{eq:var_Rt}: {\tiny \circled 1} is analogous to the shared first term of~\eqref{eq:pure_variance_detail_2} and~\eqref{eq:sure_variance}, $\var\left[\Ls(\ry,\nn(\rvx))\right]/N$.
If $\Rt$ were an unbiased estimator of $\er$ then these would be exactly equal, but the conditional bias introduced by $\Rt$ also varies between pool datasets.
In general, {\tiny \circled 1} will typically be larger than, or similar to, its unbiased counterparts.
In any case, recall that the first terms of ~\eqref{eq:pure_variance_detail_2} and~\eqref{eq:sure_variance} tend to be small contributors to the overall variance anyway, thus {\tiny \circled 1} provides negligible scope for $\Rt$ to provide notable benefits over our estimators.

We can also relate {\tiny \circled 2} to terms in~\eqref{eq:pure_variance_detail_2} and~\eqref{eq:sure_variance}: it corresponds to the second half of~\eqref{eq:pure_variance_detail_2}, but where we replace of the expected conditional variances of the \emph{weighted} losses $w_m\Ls_{i_m}$ with the unweighted losses $\Ls_{i_m}$.
For effective proposals, $w_m$ and $\Ls_{i_m}$ should be anticorrelated: high loss points should have higher density and thus lower weight.
This means the expected conditional variance of $w_m\Ls_{i_m}$ should be less than $\Ls_{i_m}$ for a well-designed acquisition strategy.
Variation in the expected value of the weights with $m$ can complicate this slightly for $\Rp$, but the correction factors applied for
$\Rs$ avoids this issue and ensure that the second half of~\eqref{eq:sure_variance} will be reliably smaller than {\tiny \circled 2}.

We have shown that the variance of $\Rs$ is typically smaller than ${\tiny \circled 1}+{\tiny \circled 2}$ under sensible proposals.
Expression~\eqref{eq:var_Rt} has additional terms: 
{\tiny \circled 3}  is trivially always positive and so contributes to higher variance for $\Rt$ (it comes from variation in the bias in index sampling given $\Dpool$);
{\tiny \circled 4} reflects correlations between the losses at different iterations which have been eliminated by our estimators. 
This term is harder to quantify and can be positive or negative depending on the problem.
For example, sampling points without replacement can cause negative correlation, while the proposal adaptation itself can cause positive correlations (finding one high loss point can help find others).
The former effect diminishes as $N$ grows, for fixed $M$, hinting that {\tiny \circled 4} may tend to be positive for $N\gg M$.
Regardless, if {\tiny \circled 4} is big enough to change which estimator has higher variance then correlation between losses in different acquired points would lead to high bias in $\Rt$.

In contrast, we prove in Appendix \ref{a:optimal_proposal} that under an optimal proposal distribution both $\Rp$ and $\Rs$ become exact estimators of the empirical risk for any number of samples $M$---such that they will inevitably have lower variance than $\tilde{R}$ in this case.
A similar result holds when we are estimating gradients of the loss, though note that the optimal proposal is different in the two cases.
\begin{restatable}{theorem}{thmoptimal}
   \label{thm:optimal_proposal}
   Given a non-negative loss, the optimal proposal distribution
   \begin{align}
   q^*(i_m ; i_{1:m-1}, \Dpool) = \Ls_{i_m}/{\Large \Sigma}_{n \notin i_{1:m-1}} \Ls_n
   \end{align}
   yields estimators exactly equal to the pool risk, that is $\Rp = \Rs = \er$ almost surely $\forall M$.
\end{restatable}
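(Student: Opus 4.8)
The plan is to prove the stronger \emph{pathwise} fact that, under $q^*$, every individual term $a_m$ of $\Rp$ collapses exactly to the pool risk $\er$, independently of which indices are drawn. Both claimed identities then follow in one line each, since $\Rp$ and $\Rs$ are simply (normalized) averages of the $a_m$.

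First I would substitute the optimal proposal into the weight. Because $w_m \equiv 1/(N q(i_m; i_{1:m-1}, \Dpool))$ and $q^*(i_m; i_{1:m-1}, \Dpool) = \Ls_{i_m}/\sum_{n\notin i_{1:m-1}} \Ls_n$, the loss factor cancels and the weighted loss becomes $w_m \Ls_{i_m} = \frac{1}{N}\sum_{n \notin i_{1:m-1}} \Ls_n$. Since $q^*$ places mass only on the as-yet-unsampled points, the drawn indices $i_1,\dots,i_M$ are distinct, so I can write $\sum_{n \notin i_{1:m-1}}\Ls_n = \sum_{n=1}^N \Ls_n - \sum_{t=1}^{m-1}\Ls_{i_t} = N\er - \sum_{t=1}^{m-1}\Ls_{i_t}$, giving $w_m \Ls_{i_m} = \er - \frac{1}{N}\sum_{t=1}^{m-1}\Ls_{i_t}$. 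Plugging this into $a_m \equiv w_m\Ls_{i_m} + \frac{1}{N}\sum_{t=1}^{m-1}\Ls_{i_t}$ from~\eqref{eq:def_of_am}, the running correction term cancels exactly and leaves $a_m = \er$ for every $m$, on every sample path.

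Given $a_m \equiv \er$, the conclusions are immediate. For $\Rp$ we get $\Rp = \frac{1}{M}\sum_{m=1}^M a_m = \er$. For $\Rs$ I would invoke the identity $\Rs = \frac{1}{M}\sum_{m=1}^M c_m a_m$ together with $\sum_{m=1}^M c_m = M$, both established in \S\ref{s:rsure}, so that $\Rs = \frac{\er}{M}\sum_{m=1}^M c_m = \er$ as well.

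There is no genuinely hard step here; the substitution is mechanical and the cancellation is exact. The only care needed, and the source of the ``almost surely'' qualifier, is the well-definedness of $q^*$ and of the weights. If $\Ls_{i_m}=0$ then $q^*$ assigns that index zero mass, so it is never drawn and $w_m$ is never evaluated at a zero denominator; moreover a zero-loss point contributes nothing to $\er$, so excluding it from the support is harmless. The one truly degenerate event is $\sum_{n=1}^N \Ls_n = 0$, where $q^*$ is undefined; but then $\er = 0$ and both estimators vanish identically under any proposal, so the claim holds trivially. I would flag these edge cases briefly rather than belabour them, since the cancellation above is exact on the full-measure event where the sampling is well-defined.
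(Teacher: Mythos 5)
Your proof is correct, and for half of it you have found a genuinely shorter route than the paper. For $\Rp$, your argument is essentially identical to the paper's: substitute $q^*$ so that $w_m \Ls_{i_m} = \frac{1}{N}\sum_{n \notin i_{1:m-1}} \Ls_n$, observe the running correction term in $a_m$ cancels exactly, and conclude $a_m = \er$ pathwise (the paper phrases the unsampled-loss sum via ``hypothetical indices'' $i_t$ for $t>M$, whereas you write it as $N\er - \sum_{t<m}\Ls_{i_t}$; yours is the cleaner bookkeeping). For $\Rs$, however, the paper does \emph{not} reuse this fact: it substitutes $q^*$ directly into the $v_m$ form of the estimator and grinds through the resulting telescoping sums $\sum_k \frac{1}{(N-k)(N-k+1)}$ to recover $\frac{1}{N}\sum_n \Ls_n$. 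You instead invoke the identities $\Rs = \frac{1}{M}\sum_{m=1}^M c_m a_m$ and $\sum_{m=1}^M c_m = M$, both already established in \S\ref{s:rsure} and Appendix \ref{a:derivingcm}, so that $a_m \equiv \er$ gives $\Rs = \er$ in one line. This is a legitimate and strictly simpler argument: it isolates the pathwise collapse of the $a_m$ as the single driving fact, whereas the paper's computation re-verifies the weight algebra from scratch (its only advantage being that it is self-contained in the $v_m$ representation and so doubles as a check on the derivation of $c_m$). Your handling of the degenerate cases---zero-loss points being excluded from the support harmlessly, and the all-zero-loss event where $q^*$ is undefined---is a point the paper passes over silently, and it is a sensible way to account for the ``almost surely'' qualifier in the statement.
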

In practice, it is impossible to sample using the optimal proposal distribution.
However, we make this point in order to prove that adopting our unbiased estimator is certainly capable of reducing variance relative to standard practice if appropriate acquisition strategies are used.
It also provides interesting insights into what makes a good acquisition strategy from the perspective of the risk estimation itself.
\section{Related Work}\label{s:related_work}
Pool-based active learning \citep{lewis_sequential_1994} is useful in cases where input data are prevalent but labeling them is expensive \citep{atlas_training_1990,settles_active_2010}.
The bias from selective sampling was noted by \citet{mackay_information-based_1992}, but dismissed from a Bayesian perspective based on the likelihood principle.
Others have noted that the likelihood principle remains controversial \citep{rainforth_automating_2017}, and in this case would assume a well-specified model.
Moreover, from a discriminative learning perspective this bias is uncontentiously problematic.
\citet{lowell_practical_2019} observe that active learning algorithms and datasets become coupled by active sampling and that datasets often outlive algorithms.

Despite the potential pitfalls, in deep learning this bias is generally ignored.
As an informal survey, we examined the 15 most-cited peer-reviewed papers citing \citet{gal_deep_2017}, which considered active learning to image data using neural networks.
Of these, only two mentioned estimator bias but did not address it while the rest either ignored or were unaware of this problem (see Appendix \ref{a:active_learning_practice}).

There have been some attempts to address active learning bias, but these have generally required fundamental changes to the active learning approach and only apply to particular setups.
\citet{beygelzimer_importance_2009}, \citet{chu_unbiased_2011}, and \citep{pmlr-v97-cortes19b} apply importance-sampling corrections \citep{sugiyama_active_2006,bach_active_2006} to \emph{online} active learning.
Unlike pool-based active learning, this involves deciding whether or not to sample a new point as it arrives from an infinite distribution.
This makes importance-sampling estimators much easier to develop, but as \citet{settles_active_2010} notes, ``\emph{the pool-based scenario appears to be much more common among application papers}.''

\citet{ganti_upal_2012} address unbiased active learning in a pool-based setting by sampling from the pool \emph{with replacement}.
This effectively converts pool-based learning into a stationary online learning setting, although it overweights data that happens to be sampled early.
Sampling with replacement is unwanted in active learning because it requires retraining the model on duplicate data which is either impossible or wasteful depending on details of the setting.
Moreover, they only prove the consistency of their estimator under very strong assumptions (well-specified linear models with noiseless labels and a mean-squared-error loss).
\citet{imberg_optimal_2020} consider optimal proposal distributions in an importance-sampling setting.
Outside the context of active learning, \citet{byrd_what_2019} question the value of importance-weighting for deep learning, which aligns with our findings below.
\vspace{2mm}
\section{Applying \texorpdfstring{$\Rs$}{Rlure} and \texorpdfstring{$\Rp$}{Rpure}}\label{s:overfitting}
\begin{wrapfigure}[20]{r}{0.48\textwidth}
   \vspace{-14mm}
   \centering
   \includegraphics[width=0.48\textwidth]{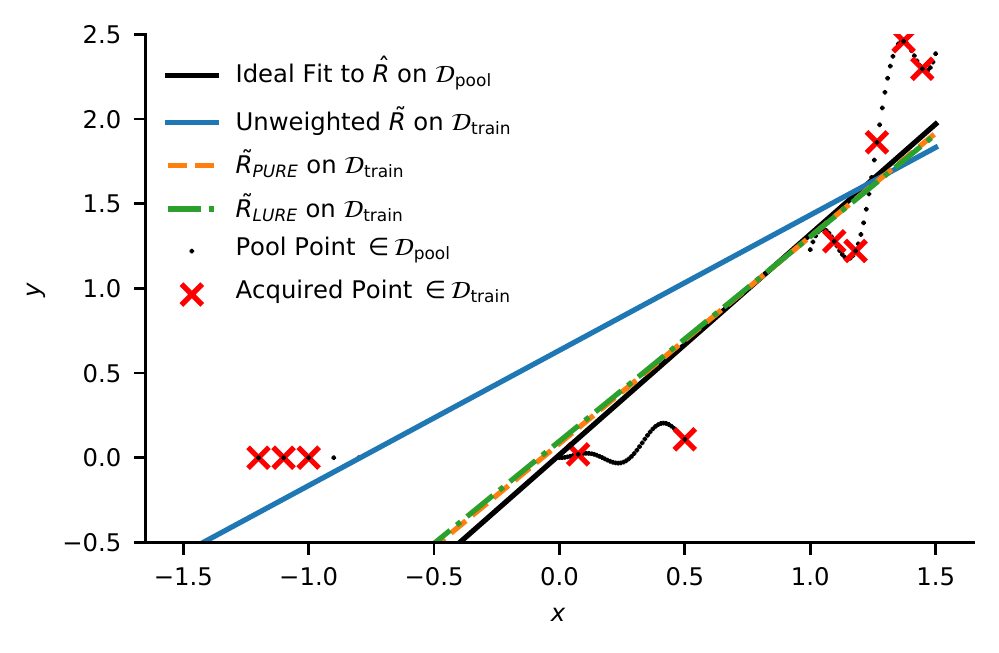}
   \vspace{-6mm}
   \caption{\textbf{Illustrative linear regression.} Active learning deliberately over-samples unusual points (red x's) which no longer match the population (black dots). Common practice uses the biased unweighted estimator $\tilde{R}$ which puts too much emphasis on unusual points. Our unbiased estimators $\Rp$ and $\Rs$ fix this, learning a function using only $\Dtrain$ nearly equal to the ideal you would get if you had labels for the whole of $\Dpool$, despite only using a few points.}
   \label{fig:linear_plot}
\end{wrapfigure}

We first verify that $\Rs$ and $\Rp$ remove the bias introduced by active learning and examine the variance of the estimators.
We do this by taking a fixed function whose parameters are independent of $\Dtrain$ and estimating the risk using actively sampled points.
We note that this is equivalent to the problem of estimating the risk of an already trained model in a sample-efficient way given unlabelled test data.
We consider two settings: an inflexible model (linear regression) on toy but non-linear data and an overparameterized model (convolutional Bayesian neural network) on a modified version of MNIST with unbalanced classes and noisy labels.

\textbf{Linear regression.}
For linear functions, removing active learning bias (ALB), i.e., the statistical bias introduced by active learning, is critical.
We illustrate this in Figure \ref{fig:linear_plot}.
Actively sampled points overrepresent unusual parts of the distribution, so a model learned using the unweighed $\Dtrain$ differs from the ideal function fit to the whole of $\Dpool$.
Using our corrective weights more closely approximates the ideal line.
The full details of the population distribution and geometric acquisition proposal distribution are in Appendix \ref{a:linear}, where we also show results using an alternative epsilon-greedy proposal.
We inspect the ALB in Figure~\ref{fig:bias_linear_no_fit} by comparing the estimated risk (with squared error loss and a fixed function) to the corresponding true population risk $\hat{R}$.
While $M<N$, the unweighted $\tilde{R}$ is biased (in practice we never have $M=N$ as then actively learning is unnecessary).
$\Rp$ and $\Rs$ are unbiased throughout.
However, they have high variance because the proposal is rather poor.
Shading represents the std.\ dev.\ of the bias over 1000 different acquisition trajectories.

\begin{figure}[b]
   \centering
   \vspace{-6mm}
   ~~~~~~~~\begin{subfigure}[b]{0.3\textwidth}
      \centering
      \includegraphics[width=\textwidth]{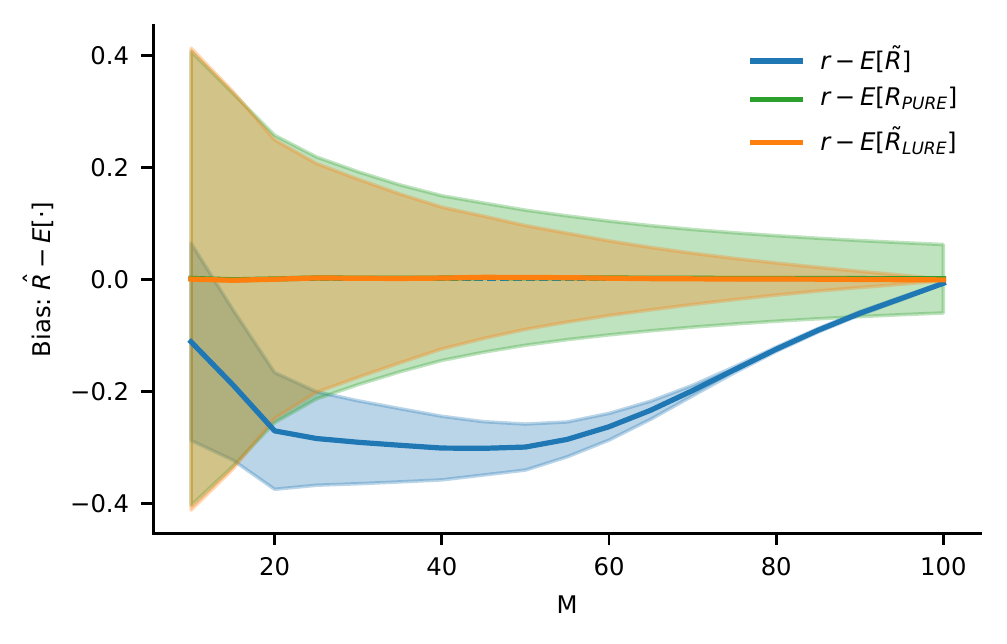}
      \vspace{-6mm}
      \caption{\textbf{Linear regression.}}
      \label{fig:bias_linear_no_fit}
   \end{subfigure}
   \hfill
   \begin{subfigure}[b]{0.3\textwidth}
      \centering
      \includegraphics[width=\textwidth]{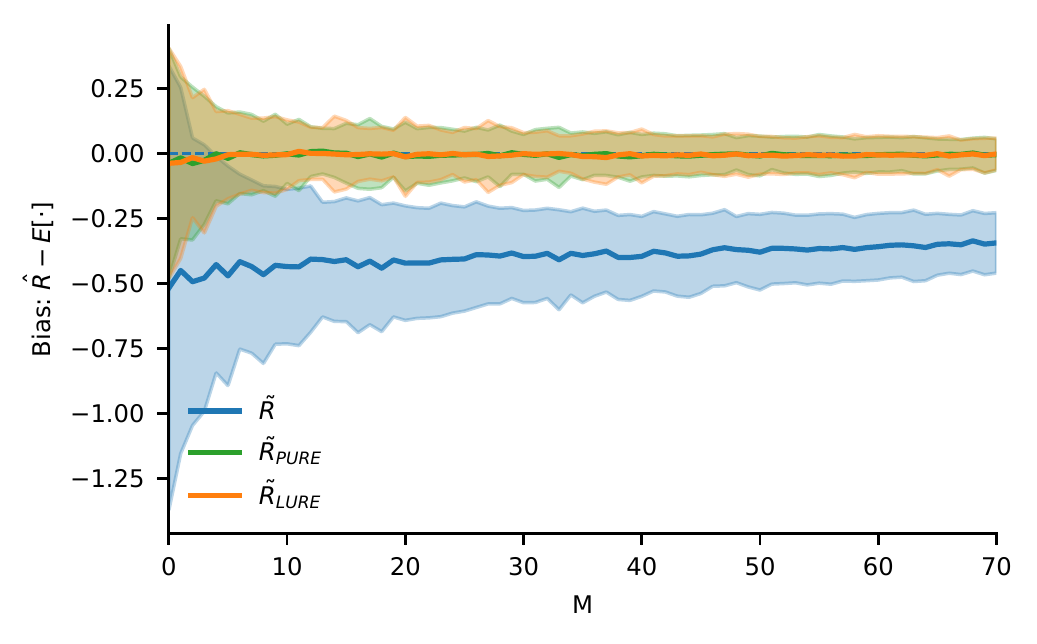}
      \vspace{-6mm}
      \caption{\textbf{BNN: MNIST.}}
      \label{fig:bias_mnist_no_fit}
   \end{subfigure}
   \hfill
   \begin{subfigure}[b]{0.3\textwidth}
      \centering
      \includegraphics[width=\textwidth]{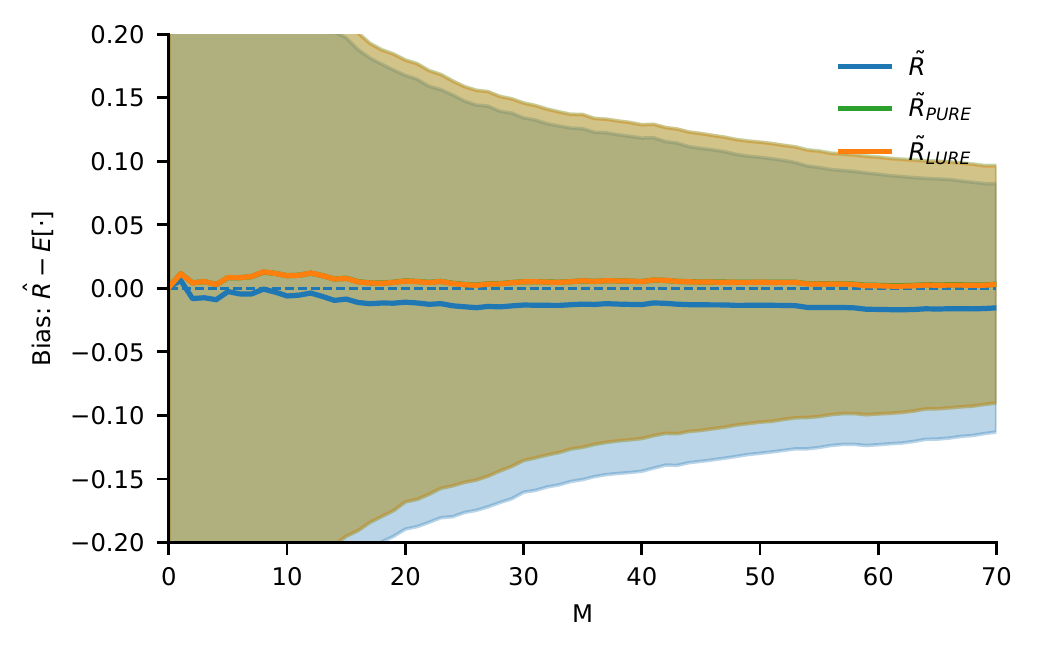}
      \vspace{-6mm}
      \caption{\textbf{BNN: FashionMNIST.}}
      \label{fig:bias_fashion_mnist_no_fit}
   \end{subfigure}
~~~~~~~~
   \vspace{-2mm}
   \caption{$\Rp$ and $\Rs$ remove bias introduced by active learning, while unweighted $\tilde{R}$, which most active learning work uses, is biased. Note the sign: $\tilde{R}$ \emph{overestimates} risk because active learning samples the hardest points. Variance for $\Rp$ and $\Rs$ depends on the acquisition distribution placing high weight on high-expected-loss points. In (b), the BALD-style distribution means that the variance of the unbiased estimators is \emph{smaller}. For FashionMNIST, (c), active learning bias is small and high variance in all cases. Shading is $\pm1$ standard deviation.}
   \label{fig:bias_no_fit}
   \vspace{2mm}
\end{figure}

\textbf{Bayesian Neural Network.}
We actively classify MNIST and FashionMNIST images using a convolutional Bayesian neural network (BNN) with roughly 80,000 parameters.
In Figure \ref{fig:bias_mnist_no_fit} and \ref{fig:bias_fashion_mnist_no_fit} we show that $\Rp$ and $\Rs$ remove the ALB.
Here the variance of $\Rp$ and $\Rs$ is \emph{lower} or similar to the biased estimator.
This is because the acquisition proposal distribution, a stochastic relaxation of the Bayesian Active Learning by Disagreement (BALD) objective \citep{houlsby_bayesian_2011}, is effective (c.f.~\S\ref{s:variance}).
A full description of the dataset and procedure is provided in Appendix \ref{a:bayesian_neural_network}.
Our modified MNIST dataset is unbalanced and has noisy labels, which makes the bias more distorting.

Overall, Figure \ref{fig:bias_no_fit} shows that our estimators remove the bias introduced by active learning, as expected, and can do so with reduced variance given an acquisition proposal distribution that puts a high probability mass on more informative/surprising high-expected-loss points.

Next, we examine the overall effect of using the unbiased estimators to \emph{learn} a model on \emph{downstream performance}.
Intuitively, removing bias in training while also reducing the variance ought to improve the downstream task objective: test loss and accuracy.
To investigate this, we train models using $\Rt$, $\Rs$, and $\Rp$ with actively sampled data and measure the population risk of each model.

For linear regression (Figure \ref{fig:linear_active_learning}), the new estimators improve the test loss---even with small numbers of acquired points we have nearly optimal test loss (estimated with many samples).
However, for the BNN, there is a small but significant negative impact on the full test dataset loss of training with $\Rs$ or $\Rp$ (Figure \ref{fig:mnist_active_learning}) and a slightly larger negative impact on test accuracy (Figure \ref{fig:mnist_active_learning_accuracy}).
That is, we get a better model by training using a biased estimator with higher variance!

To validate this further, we consider models trained instead on FashionMNIST (Fig.\ \ref{fig:fashion_mnist_active_learning}), on MNIST but with Monte Carlo dropout (MCDO) \citep{gal_dropout_2015} (Fig.\ \ref{fig:mcdo_mnist_active_learning}), and on a balanced version of the MNIST data (Fig.\ \ref{fig:balanced_mnist_active_learning}).
In all cases we find similar patterns, suggesting the effects are not overly sensitive to the setting.
Further experiments and ablations can be found in Appendix \ref{a:bayesian_neural_network}.

\begin{figure}[t]
   \centering
   \vspace{-4mm}
   \begin{subfigure}[b]{0.32\textwidth}
      \centering
      \includegraphics[width=\textwidth]{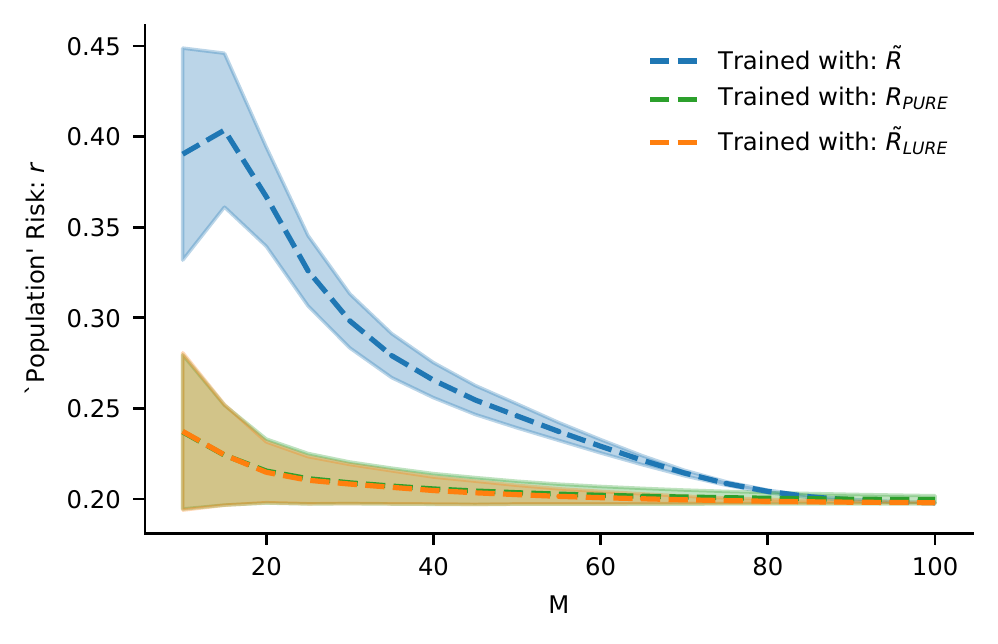}
      \vspace{-6mm}
      \caption{\textbf{Linear regression. Test MSE.}}
      \label{fig:linear_active_learning}
   \end{subfigure}
   \hfill
   \begin{subfigure}[b]{0.32\textwidth}
      \centering
      \includegraphics[width=\textwidth]{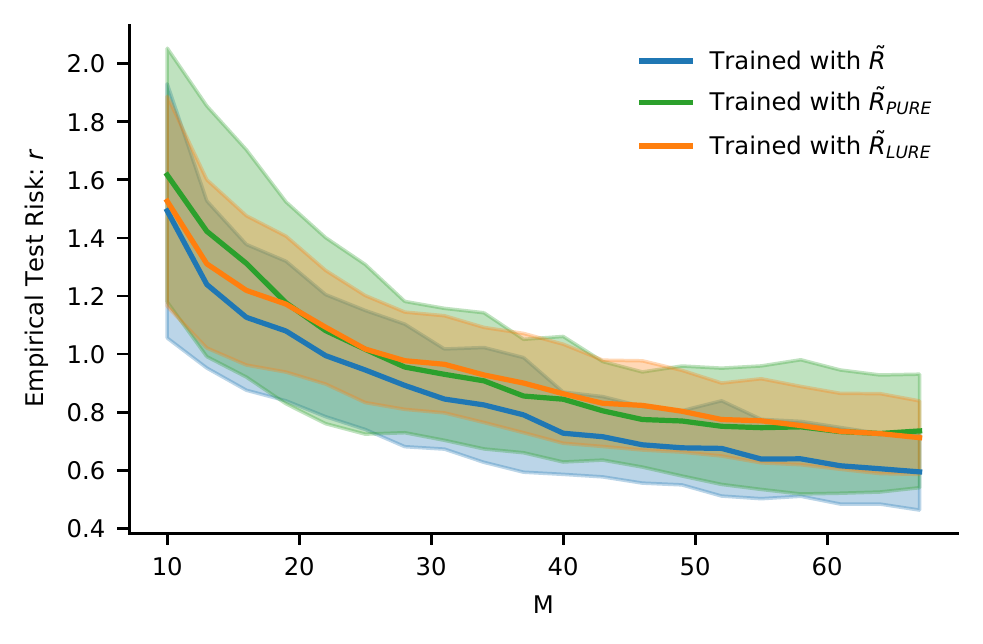}
      \vspace{-6mm}
      \caption{\textbf{MNIST: Test NLL.}}
      \label{fig:mnist_active_learning}
   \end{subfigure}
   \hfill
   \begin{subfigure}[b]{0.32\textwidth}
      \centering
      \includegraphics[width=\textwidth]{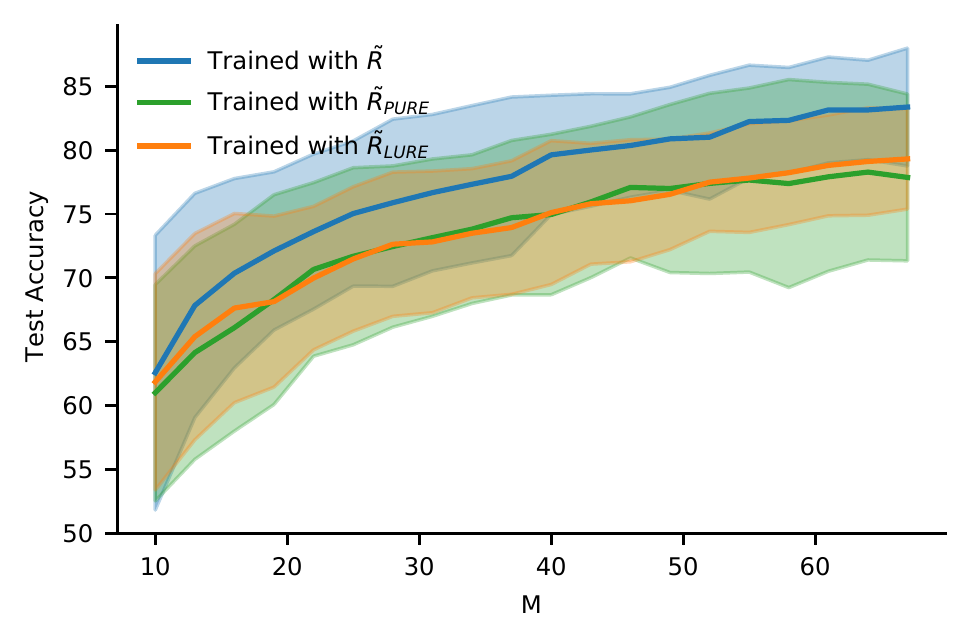}
      \vspace{-6mm}
      \caption{\textbf{MNIST: Test Acc.}}
      \label{fig:mnist_active_learning_accuracy}
   \end{subfigure}
   \medskip
   \begin{subfigure}[b]{0.32\textwidth}
      \centering
      \includegraphics[width=\textwidth]{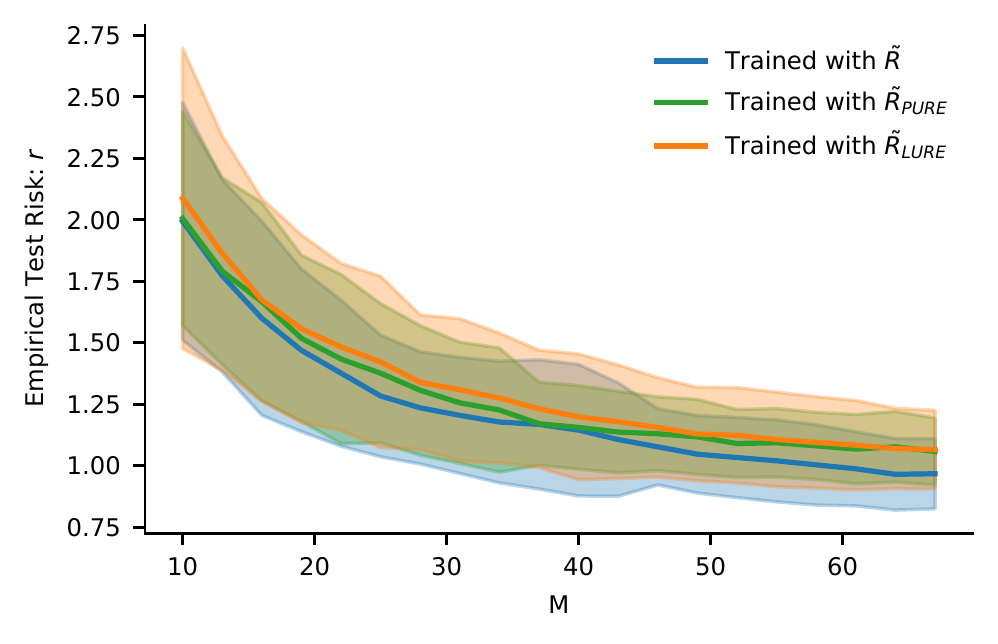}
      \vspace{-6mm}
      \caption{\textbf{FashionMNIST: Test NLL.}}
      \label{fig:fashion_mnist_active_learning}
   \end{subfigure}
   \hfill
   \begin{subfigure}[b]{0.32\textwidth}
      \centering
      \includegraphics[width=\textwidth]{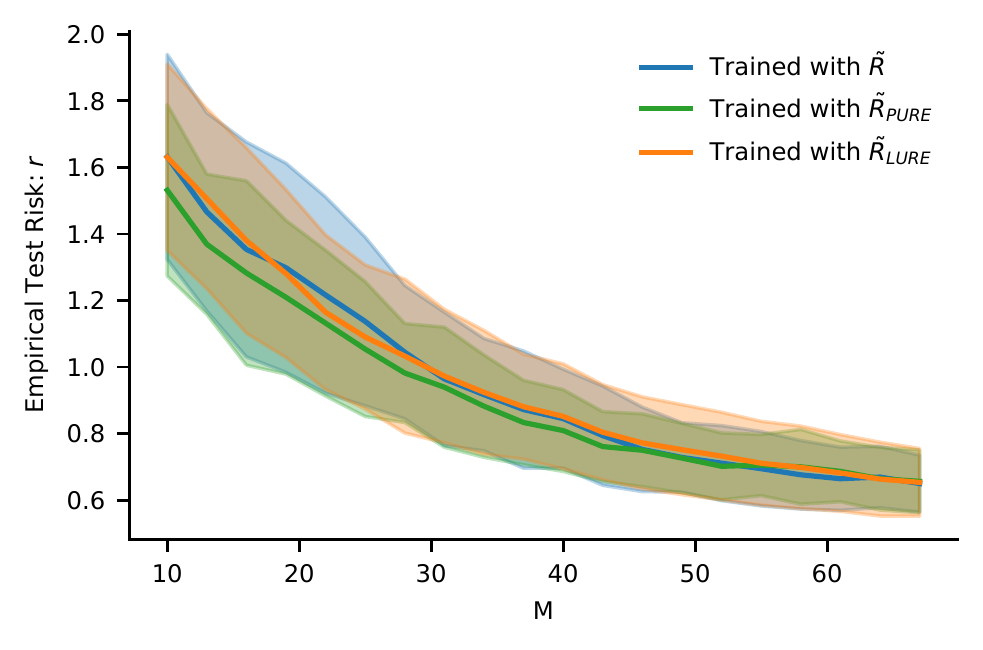}
      \vspace{-6mm}
      \caption{\textbf{MNIST (MCDO): Test NLL.}}
      \label{fig:mcdo_mnist_active_learning}
   \end{subfigure}
   \hfill
   \begin{subfigure}[b]{0.32\textwidth}
      \centering
      \includegraphics[width=\textwidth]{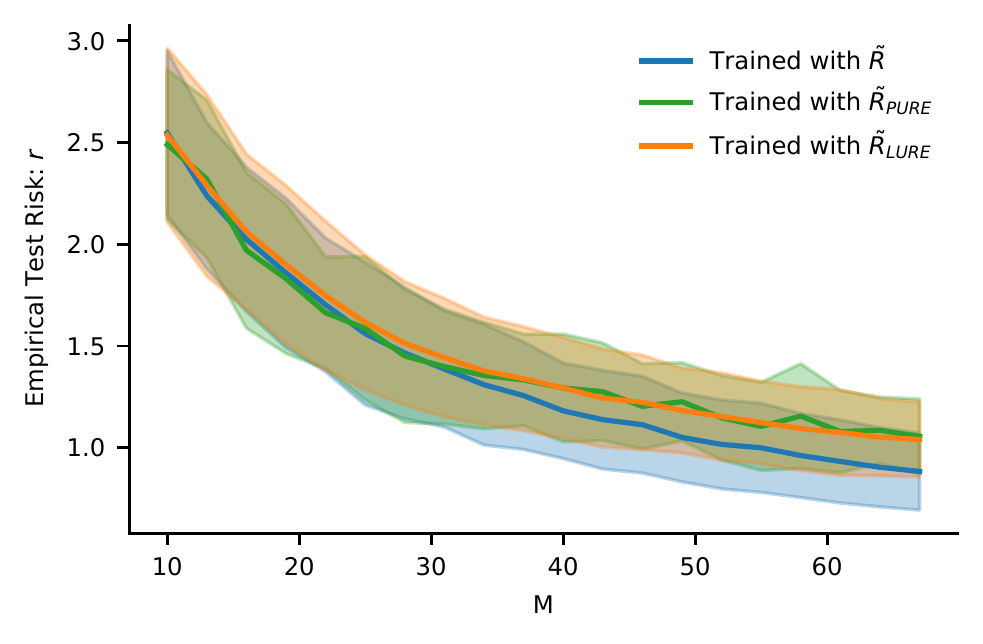}
      \vspace{-6mm}
      \caption{\textbf{MNIST (Balanced): Test NLL.}}
      \label{fig:balanced_mnist_active_learning}
   \end{subfigure}
   \vspace{-2mm}
   \caption{For linear regression, the models trained with $\Rp$ or $\Rs$ have lower `population' risk. In contrast, BNNs trained with $\Rs$ or $\Rp$ perform either similarly (e) or slightly worse (b,c,d,f), even though they remove bias and have lower variance. Shading is one standard deviation. For (a) 1000 samples and `$r$' estimated on 10,100 points from distribution. For (b)/(c) 45 samples and `$r$' estimated on the test dataset. (d)-(f) are alternative settings to validate consistency of result.}
   \vspace{-4mm}
\end{figure}

\section{Active Learning Bias in the Context of Overall Bias}\label{s:overall_bias}
In order to explain the finding that $\Rs$ hurts training for the BNN, we return to the bias introduced by overfitting, allowing us to examine the effect of removing statistical bias in the context of \emph{overall} bias.
Namely, we need to consider the fact that training would induce am overfitting bias (OFB) even if we had not used active learning.
If we optimize parameters $\rvtheta$ according to $\er$, then $\E[\er(\rvtheta^*)] \neq r$, because the optimized parameters $\rvtheta^*$ tend to explain training data better than unseen data.
Using $\Rs$, which removes \emph{statistical bias}, we can isolate OFB in an active learning setting.
More formally, supposing we are optimizing any of the discussed risk estimators (which we will write using $\tilde{R}_{(\cdot)}$ as a placeholder to stand for any of them) we define the OFB as:
\begin{equation}
   B_{\textup{OFB}}(\tilde{R}_{(\cdot)}) = r - \Rs(\rvtheta^*) \quad \text{where } \quad \rvtheta^* = \argmin_{\rvtheta}\nolimits (\tilde{R}_{(\cdot)})
\end{equation}
$B_{\textup{OFB}}(\tilde{R}_{(\cdot)})$ depends on the details of the optimization algorithm and the dataset.
Understanding it fully means understanding generalization in machine learning and is outside our scope.
We can still gain insight into the interaction of active learning bias (ALB) and OFB.
Consider the possible relationships between the magnitudes of ALB and OFB: [\textbf{ALB $>>$ OFB}] Removing ALB reduces overall bias and is most likely to occur when $f_{\theta}$ is not very expressive such that there is little chance of overfitting. [\textbf{ALB $<<$ OFB]} Removing ALB is irrelevant as model has massively overfit regardless. [\textbf{ALB $\approx$ OFB]} Here \emph{sign} is critical. If ALB and OFB have opposite signs and similar scale, they will tend to cancel each other out. Indeed, they usually have opposite signs.
$B_{\textup{OFB}}$ is usually positive: $\rvtheta^*$ fits the training data better than unseen data.
ALB is generally negative: we actively choose unusual/surprising/informative points which are harder to fit than typical points.

Therefore, when significant overfitting is possible, unless ALB is also large, removing ALB will have little effect and can \emph{even be harmful}.
This hypothesis would explain the observations in \S\ref{s:overfitting} if we were to show that $B_{\textup{OFB}}$ was small for linear regression but had a similar magnitude and opposite sign to ALB for the BNN.
This is exactly what we show in Figure \ref{fig:overfitting_bias}.

Specifically, we see that for linear regression, the $B_{\textup{OFB}}$ for models trained with $\tilde{R}$, $\Rp$, and $\Rs$ are all small (Figure \ref{fig:bias_linear_fit}) when contrasted to the ALB shown in Figure \ref{fig:bias_linear_no_fit}.  Here ALB $>>$ OFB; removing ALB matters.
For BNNs we instead see that the OFB has opposite sign to the ALB but is either similar in scale for MNIST (Figures \ref{fig:bias_mnist_no_fit} and \ref{fig:bias_mnist_fit}), or the OFB is much larger than ALB for Fashion MNIST (Figures \ref{fig:bias_fashion_mnist_fit} and \ref{fig:bias_fashion_mnist_no_fit}).
The two sources of bias thus (partially) cancel out.
Essentially, using active learning can be treated (quite instrumentally) as an ad hoc form of regularization.
This explains why removing ALB can hurt active learning with neural networks.

\begin{figure}
   \vspace{-8mm}
   \centering
   ~~~~~~~~\begin{subfigure}[b]{0.3\textwidth}
      \centering
      \includegraphics[width=\textwidth]{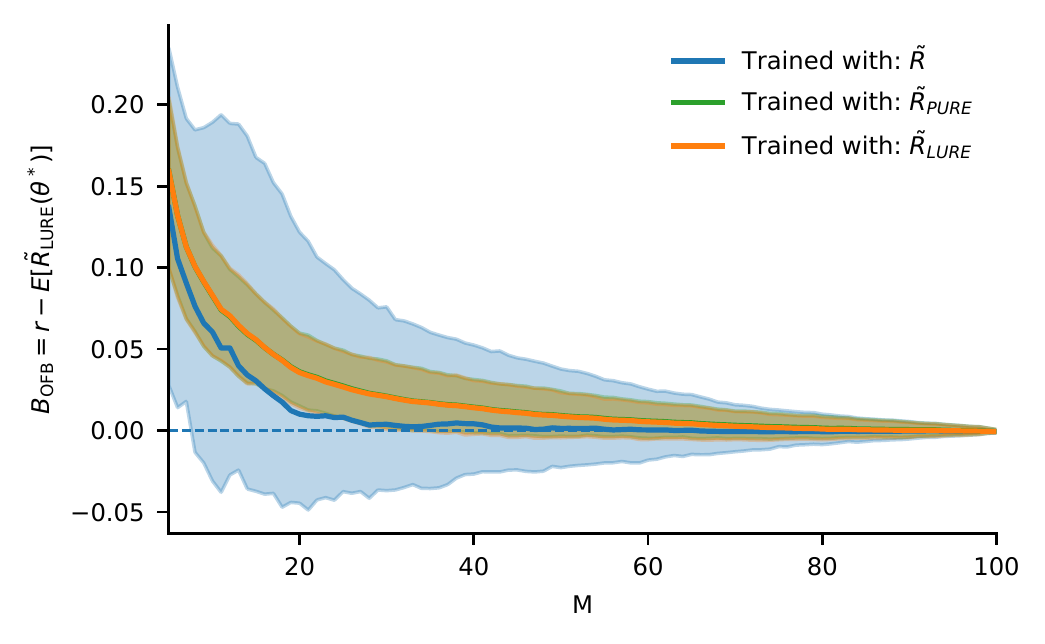}
      \vspace{-6mm}
      \caption{\textbf{Linear regression.}}
      \label{fig:bias_linear_fit}
   \end{subfigure}
   \hfill
   \begin{subfigure}[b]{0.3\textwidth}
      \centering
      \includegraphics[width=\textwidth]{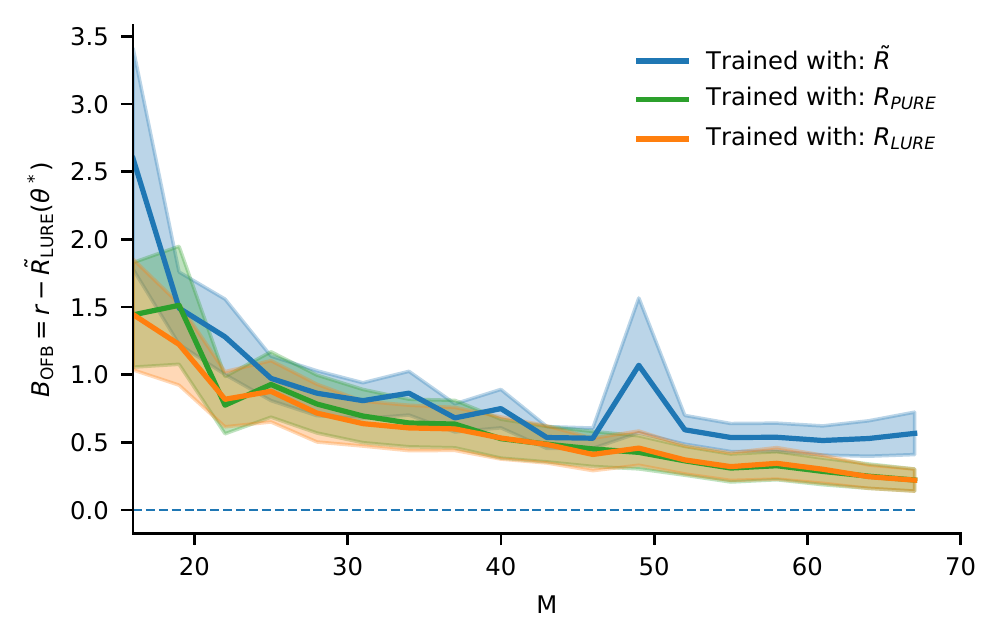}
      \vspace{-6mm}
      \caption{\textbf{BNN: MNIST.}}
      \label{fig:bias_mnist_fit}
   \end{subfigure}
   \hfill
   \begin{subfigure}[b]{0.3\textwidth}
      \centering
      \includegraphics[width=\textwidth]{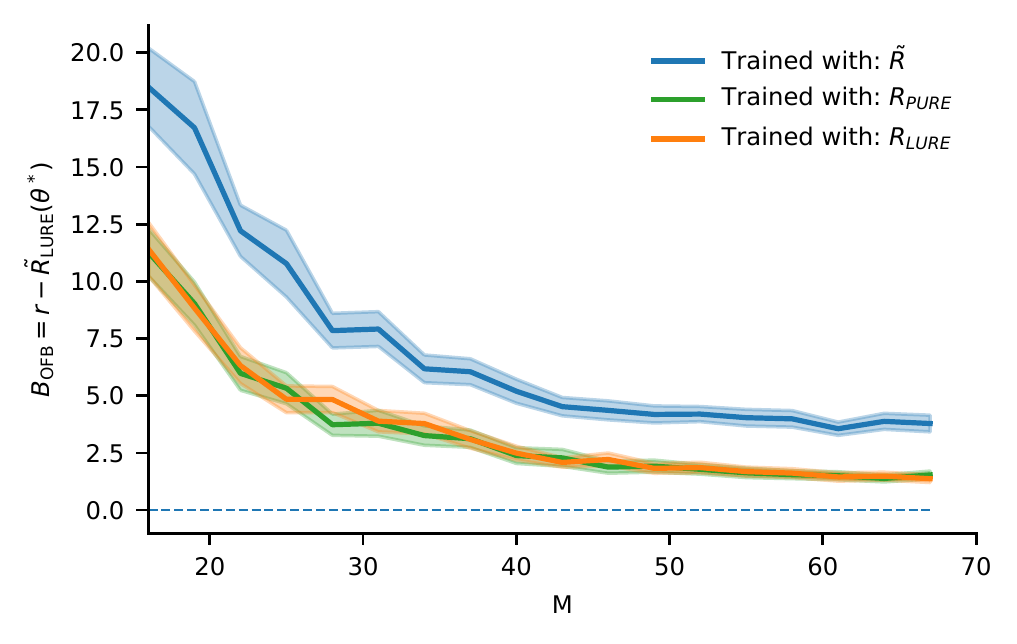}
      \vspace{-6mm}
      \caption{\textbf{BNN: FashionMNIST.}}
      \label{fig:bias_fashion_mnist_fit}
   \end{subfigure}
~~~~~~~~
   \vspace{-4mm}
   \caption{Overfitting bias---$B_{\textup{OFB}}$---for models trained using the three objectives. (a) Linear regression, $B_{\textup{OFB}}$ is small compared to ALB (c.f.\ Figure \ref{fig:bias_linear_no_fit}). Shading IQR. 1000 trajectories. (b) BNN, $B_{\textup{OFB}}$ is similar scale and opposite magnitude to ALB (c.f.\ Figure \ref{fig:bias_mnist_no_fit}). (c) BNN on FashionMNIST, OFB is somewhat larger than with MNIST, particularly for $\tilde{R}$ (i.e.~our approaches reduce overfitting) and dominates active learning bias (c.f.\ Figure \ref{fig:bias_fashion_mnist_no_fit}). Shading $\pm1$ standard error. 150 trajectories.}
   \label{fig:overfitting_bias}
   \vspace{-4mm}
\end{figure}

\section{Conclusions}
\vspace{-4pt}
Active learning is a powerful tool but raises potential problems with statistical bias.
We offer a corrective weighting which removes that bias with negligible compute/memory costs and few requirements---it suits standard pool--based active learning without replacement.
It requires a non--zero proposal distribution over all unlabelled points but existing acquisition functions can be easily transformed into sampling distributions.
Indeed, estimates of scores like mutual information are so noisy that many applications already have an implicit proposal distribution.

We show that removing active learning bias (ALB) can be helpful in some settings, like linear regression, where the model is not sufficiently complex to perfectly match the data, such that the exact loss function and input data distribution are essential in discriminating between different possible (imperfect) model fits.
We also find that removing ALB can be counter-productive for overparameterized models like neural networks, even if its removal also reduces the variance of the estimators, because here the ALB can help cancel out the bias originating from overfitting.
This leads to the interesting conclusion that active learning can be helpful not only as a mechanism to reduce variance as it was originally designed, but also because it introduces a bias that can be \emph{actively helpful} by regularizing the model.
This helps explain why active learning with neural networks has shown success despite using a biased risk estimator.

We propose the following rules of thumb for deciding when to embrace or correct the bias, noting that we should always prefer $\Rs$ to $\Rp$.
First, the more closely the acquisition proposal distribution approaches the optimal distribution (as per Theorem \ref{thm:optimal_proposal}), the relatively better $\Rs$ will be to $\tilde{R}$. 
Second, the less overfitting we expect, the more likely it is that $\Rs$ will be useful as it reduces the chance that the ALB will actually help.
Third, $\Rs$ will tend to have more of an effect for highly imbalanced datasets, as the biased estimator will over-represent actively selected but unlikely datapoints. 
Fourth, if the training data does not accurately represent the test data, using $\Rs$ will likely be less important as the ALB will tend to be dwarfed by bias from the distribution shift.
Fifth, at test--time, where optimization and overfitting bias are no-longer an issue, there is little cost to using $\Rs$ to evaluate a model and it will usually be beneficial.
This final application, of active learning for model evaluation, is an interesting new research direction that is opened up by our estimators.

\newpage
\section*{Acknowledgements}
The authors would like to especially thank Lewis Smith for his helpful conversations and specifically for his assistance with the proof of Theorem \ref{thm:bettervar}.
In addition, we would like to thank for their conversations and advice Joost van Amersfoort and Andreas Kirsch.

The authors are grateful to the Engineering and Physical Sciences Research Council for their support of the Centre for Doctoral Training in Cyber Security, University of Oxford as well as the Alan Turing Institute.
\bibliographystyle{iclr2021_conference}
\bibliography{references}

\newpage
\appendix
\section{Overview of Active Learning}\label{a:active_learning}
Active learning selectively picks datapoints for which to acquire labels with the aim of more sample-efficient learning.
For an excellent overview of the general active learning problem setting we refer the reader to \citet{settles_active_2010}.

Since that review was written, a number of significant advances have further developed active learning.
\citet{houlsby_bayesian_2011} develop an efficient way to estimate the mutual information between model parameters and the output distribution, which can be used for the Bayesian Active Learning by Disagreement (BALD) score.

Active learning has been applied to deep learning, especially for vision~\citet{gal_deep_2017, wang_cost-effective_2017}.
In neural networks specifically, empirical work has suggested that simple geometric core-set style approaches can outperform uncertainty-based acquisition functions \citep{sener_active_2018}.

A lot of recent work in active learning has focused on speeding up acquisition from a computational perspective \citep{coleman_selection_2020} and allowing batch acquisition in order to parallelize labelling \citep{kirsch_batchbald_2019, ash_deep_2020}.
Some work has also focused on applying active learning to specific settings with particular constraints \citep{pmlr-v70-krishnamurthy17a, yan_active_2018, pmlr-v97-sundin19a, pmlr-v97-behpour19a, pmlr-v97-shi19b, hu_active_2019}.

\section{Proofs}
\subsection{Proof of Lemma \ref{lem:sure-unbiasedness-helper}}\label{a:sure_unbiasedness_helper}
\lemsureunbiasednesshelper*
\begin{proof}
   We begin by applying the tower property of expectations:
   \begin{align}
   \Expct{}{a_m} &=  \Expct{}{w_m\Ls_{i_m} +\frac{1}{N} \sum_{t=1}^{m-1} \Ls_{i_t}} \\
   &=  \Expct{\Dpool,i_{1:m-1}}{\Expct{i_{m}}{w_m\Ls_{i_m} +\frac{1}{N} \sum_{t=1}^{m-1} \Ls_{i_t} \, \middle| \, \Dpool, i_{1:m-1}}}. \\
   \intertext{By further noting that $\Expct{i_m}{w_m\Ls_{i_m} \, \middle| \, \Dpool, i_{1:m-1}}$ can be written out analytically as a sum over all the possible values of $i_m$, while $\frac{1}{N} \sum_{t=1}^{m-1} \Ls_{i_t}$ is deterministic given $\Dpool$ and $i_{1:m-1}$ we have:}
   &= \Expct{\Dpool,i_{1:m-1}}{\sum_{n\notin i_{1:m-1}} \left(\cancel{q(i_m;i_{1:m-1}, \Dpool)} \frac{\Ls_{i_m}}{N\cancel{q(i_m;i_{1:m-1}, \Dpool)}}\right) +\frac{1}{N} \sum_{t=1}^{m-1} \Ls_{i_t}} \\
   &= \Expct{\Dpool,i_{1:m-1}}{\frac{1}{N} \sum_{n=1}^{N} \Ls_{n}}, \intertext{But $\Ls_n$ is now independent of the indices which have been sampled:}
   &= \Expct{\Dpool}{\frac{1}{N} \sum_{n=1}^{N} \Ls_{n}} = \Expct{\Dpool}{\er} = r.
   \end{align}
\end{proof}

\subsection{Proof of the Unbiasedness and Variance for \texorpdfstring{$\Rp$}{Rpure}: Theorem \ref{thm:pure_unbiased}}\label{a:thmpureunbiased}
\thmpureunbiased*
\begin{proof}
   Having established Lemma \ref{lem:sure-unbiasedness-helper}, unbiasedness follows quickly from the linearity of expectations:
   \begin{align}
      \label{eq:r_m_eq_sum}
      \Expct{}{\Rp} = \Expct{}{\frac{1}{M} \sum_{m=1}^M a_m} = \frac{1}{M} \sum_{m=1}^M \Expct{}{a_m} = \pr.
   \end{align}

For the variance we instead have (starting with the definition of $\Rp$):
\begin{align}
\var\left[\Rp\right] &=  \Expct{}{\left(\frac{1}{M} \sum_{m=1}^M a_m\right)^2} - r^2,
\intertext{which by the tower property of expectations}
&= \Expct{}{ \Expct{}{\left(\frac{1}{M} \sum_{m=1}^M a_m\right)^2\middle| \Dpool}} -r^2,
\intertext{and from the definition of variance}
&= \Expct{}{ \var \left[\frac{1}{M} \sum_{m=1}^M a_m\middle| \Dpool\right] + \er^2} -r^2, \\
&= \Expct{}{ \var \left[\frac{1}{M} \sum_{m=1}^M a_m\middle| \Dpool\right] } +\var \left[\er\right],
\intertext{which using the fact that $\er$ is a standard Monte Carlo estimator}
&= \Expct{}{ \var \left[\frac{1}{M} \sum_{m=1}^M a_m\middle| \Dpool\right] } +\frac{\var\left[\Ls(\ry,\nn(\rvx))\right]}{N}, \label{eq:varpure_eq1}
\end{align}
where $\rvx,\ry \sim \pdata$.  Now considering the first term we have
\begin{align}
\var \left[\frac{1}{M} \sum_{m=1}^M a_m\middle| \Dpool\right] &= \Expct{}{\left(\frac{1}{M} \sum_{m=1}^M a_m-\er\right)^2 \middle| \Dpool} \\
&= \frac{1}{M^2} \sum_{m=1}^M \sum_{k=1}^M \Expct{}{\left(a_m-\er\right)\left(a_k-\er\right) \middle| \Dpool}. \label{eq:varpure_eq2}
\end{align}
We attack this term by first considering the terms for which $m\neq k$ and show that these yield $\Expct{}{(a_m - r)(a_k - r)\middle| \Dpool}=0$, returning to the $m=k$ terms later.
We will assume, without loss of generality, that $k<m$, noting that by symmetry the same set of arguments can be similarly applied when $m<k$.
Substituting in the definition of $a_m$ from equation (\ref{eq:def_of_am}):
   \begin{align}
     &\Expct{}{(a_m - \er)(a_k - \er) \middle| \Dpool} \\
     &\qquad = 
     \Expct{}{\left(w_m \Ls_{i_m} + \frac{1}{N}\sum_{t=1}^{m-1}\Ls_{i_t} - \er\right)\left(w_k \Ls_{i_k} + \frac{1}{N}\sum_{s=1}^{k-1}\Ls_{i_s} - \er\right) \middle| \Dpool}.
   \end{align}
   We introduce the notation $\er^{rem}_m = \er - \frac{1}{N}\sum_{t=1}^{m-1}\Ls_{i_t}$ to describe the remainder of the empirical risk ascribable to the datapoint with index $i_m$.
   Then, by multiplying out the terms we have:
   \begin{align}
  \Expct{}{(a_m - \er)(a_k - \er) \middle| \Dpool}  =&\underbrace{\Expct{}{w_m\Ls_{i_m} w_k \Ls_{i_k}\middle| \Dpool}}_{\circled{a}}-
   \underbrace{\Expct{}{w_m\Ls_{i_m}\er^{rem}_k\middle| \Dpool}}_{\circled{b}}\\
   &-\underbrace{\Expct{}{w_k\Ls_{i_k}\er^{rem}_m\middle| \Dpool}}_{\circled{c}}+
   \underbrace{\Expct{}{\er^{rem}_k\er^{rem}_m\middle| \Dpool}}_{\circled{d}}.   
   \end{align}
   Now by the tower property:
   \begin{align}
   \circled{a}
   &= \Expct{}{\Expct{}{w_m\Ls_{i_m} w_k\Ls_{i_k}\, \middle| \, \Dpool, i_{1:m-1}}\middle| \Dpool}, \\
   \intertext{and noting that because $k<m$, $w_k\Ls_{i_k}$ is deterministic given $\Dpool$ and $i_{1:m-1}$:}
   &= \Expct{}{w_k\Ls_{i_k} \Expct{}{w_m\Ls_{i_m}\, \middle| \, \Dpool, i_{1:m-1}}\middle| \Dpool} \\
   &= \Expct{}{w_k\Ls_{i_k} \er_m^{rem}\middle| \Dpool},
   \end{align}
   which thus cancels with \circled{c}.
   
   The \circled{b} and \circled{d} cancel similarly:
   \begin{align}
   \circled{b}
   &= \Expct{}{\Expct{}{w_m\Ls_{i_m} \er_k^{rem}\, \middle| \, \Dpool, i_{1:m-1}}\middle| \Dpool} \\
   &= \Expct{}{\er_k^{rem} \Expct{}{w_m\Ls_{i_m}\, \middle| \, \Dpool, i_{1:m-1}}\middle| \Dpool} \\
   &= \Expct{}{\er_k^{rem} \er_m^{rem}\middle| \Dpool} = \circled{d}.
   \end{align}
   Putting this together, we have that:
   \begin{align}
       \Expct{}{(a_m - \er)(a_k - \er)\middle| \Dpool} = 0 \quad \forall k\neq m.
   \end{align}
   Considering now the $m=k$ terms we have 
   \begin{align*}
   \Expct{}{\left(a_m-\er\right)^2\middle| \Dpool} 
   &=\Expct{i_{1:m-1}}{ \Expct{i_m}{\left(a_m-\er\right)^2\middle| i_{1:m-1}, \Dpool} \middle| \Dpool} \\
   &=\Expct{i_{1:m-1}}{ \var\left[a_m\middle| i_{1:m-1}, \Dpool\right] \middle| \Dpool} \\
   &=\Expct{i_{1:m-1}}{ \var\left[w_m \Ls_{i_m}\middle| i_{1:m-1}, \Dpool\right] \middle| \Dpool}.
   \end{align*}
   Finally substituting everything back into~\eqref{eq:varpure_eq2} and then~\eqref{eq:varpure_eq1}, and applying the tower property gives
   \begin{align}
  \var\left[\Rp\right]  &= \frac{\var\left[\Ls(\ry,\nn(\rvx))\right]}{N}+\frac{1}{M^2} \sum_{m=1}^{M} \Expct{\Dpool,i_{1:m-1}}{
   	\var\left[w_m \Ls_{i_m}|i_{1:m-1}, \Dpool\right]}
\end{align}
and we are done.
\end{proof}

\subsection{Proof of the consistency of \texorpdfstring{$\Rp$}{Rpure}: Theorem \ref{thm:pure_consistent}}\label{a:pure_consistency}
\thmpureconsistency*
\begin{proof}
Theorem \ref{thm:pure_unbiased} showed that $\Rp$ is an unbiased estimator and so we first note that its MSE is simply its variance, which we found in (\ref{eq:pure_variance_detail_2}).
Substituting $N =\alpha M$:
      \begin{align}
       &\Expct{}{\left(\Rp - r\right)^2} = \var\left[\Rp\right] \\
       &\quad= \frac{\var\left[\Ls(\ry,\nn(\rvx)\right]}{\alpha M}
      + 
      \frac{1}{M^2}\sum_{m=1}^M \Expct{\Dpool,i_{1:m-1}}{
      \var\left[w_m \Ls_{i_m}|i_{1:m-1}, \Dpool\right]}.
   \end{align}
      
   The first term tends to zero as $M\to \infty$ as our standard assumptions guarantee that $\frac{1}{\alpha}\textup{Var}\left[\Ls(\ry,\nn(\rvx)\right] < \infty$.
   For the second term we note that our assumptions about $q$ guarantee that $w_m \le 1/\beta$ and thus:
   \begin{align}
      \var\left[w_m \Ls_{i_m}|i_{1:m-1}, \Dpool\right]  &= \Expct{}{w_m^2 \Ls_{i_m}^2|i_{1:m-1}, \Dpool} - \left(\er - \frac{1}{M}\sum_{t=1}^{m-1}\Ls_{i_t}\right)^2 \\
        &\le \frac{1}{\beta^2} \Expct{}{\Ls_{i_m}^2| i_{1:m-1}, \Dpool}- \left(\er - \frac{1}{M}\sum_{t=1}^{m-1}\Ls_{i_t}\right)^2 \\
        &<\infty \quad \forall i_{1:m-1}, \Dpool
   \end{align}
   as our assumptions guarantee that $\frac{1}{\beta^2}<\infty$, we have $\Expct{i_m}{\Ls_{i_m}^2}<\infty$ and so the empirical risk and losses are finite.
      
   Given that each $\var\left[w_m \Ls_{i_m}|i_{1:m-1}, \Dpool\right]$ is finite, it follows that:
   \begin{align}
      s^2 := \frac{1}{M}\sum_{m=1}^M \Expct{\Dpool,i_{1:m-1}}{
      \textup{Var} \left[w_m \Ls_{i_m}|i_{1:m-1}, \Dpool\right]}
      < \infty,
   \end{align}
   and we thus have:
   \begin{align}
         \lim_{M\to\infty} \Expct{}{\left(\Rp - R\right)^2}
         =
         \lim_{M\to\infty} \left( \frac{\textup{Var}\left[\Ls(\ry,f_{\rvtheta}(\rvx)\right]}{\alpha M}
         + \frac{s^2}{M}\right)
         = 0
   \end{align}
   as desired.
\end{proof}

\subsection{Derivation of the Constants of \texorpdfstring{$\Rs$}{Rlure}}\label{a:derivingcm}
We note from before that because of the unbiasedness of $a_m$:
\begin{equation}
   \Expct{}{C \sum_{m=1}^{M} c_m a_m} = \pr \quad \text{where} \quad C=\frac{1}{\sum_{m=1}^{M}c_m},
\end{equation}
To construct our improved estimator $\Rs$, we now need to find the right constants $c_m$, that in turn lead to overall weights $v_m$ (as per~\eqref{eq:r_cure}) such that $\Expct{}{v_m} = 1$.
We start by substituting in the definition of $a_m$:
\begin{align}
   \Rs := C \sum_{m=1}^{M}&v_m\Ls_{i_m} \coloneqq C \sum_{m=1}^{M} c_m a_m = C \sum_{m=1}^{M}\left(c_mw_m\Ls_{i_m} + \frac{c_m}{N}\sum_{t=1}^{m-1}\Ls_{i_t}\right),
\end{align}
and then redistributing the $\Ls_{i_t}$ from later terms where they match $m$:
\begin{align}
   &v_m = c_m w_m + \frac{1}{N}\sum_{t=m+1}^{M}c_t.
\end{align}
Note that though $\Rs$ remains an unbiased estimator of the risk, each individual term $v_m\Ls_{i_m}$ is not.
Now we require: $\Expct{}{v_m} = 1 \quad \forall m \in \{1,...,M\}$.
Remembering that $w_m = 1/(Nq(i_m;i_{1:m-1}, \Dpool))$:
\begin{align}
   \Expct{}{v_m} &= \frac{c_m}{N} \,\Expct{}{\frac{1}{q(i_m; i_{1:m-1}, \Dpool)}} + \frac{1}{N}\sum_{t=m+1}^{M}c_t \\
   &= \frac{c_m}{N} \sum_{n\notin i_{1:m-1}}^{} \frac{q(i_m=n; i_{1:m-1}, \Dpool)}{q(i_m=n; i_{1:m-1}, \Dpool)}+ \frac{1}{N}\sum_{t=m+1}^{M}c_t \\
   &= \frac{c_m (N - m + 1)}{N} + \frac{1}{N}\sum_{t=m+1}^{M}c_t.
\end{align}
Imposing that each $\Expct{}{v_m} = 1 $, we now have $M$ equations for $M$ unknowns $c_1,\dots,c_M$, such that we can find the required values of $c_m$ by solving the set of simultaneous equations:
   \begin{align}
   \label{eq:expt_a}
   \left(N-m+1\right)\frac{c_m}{N}+\frac{1}{N}\sum_{t=m+1}^M c_t = 1 \quad \forall m\in\{1,\dots,M\}.
   \end{align}
   We do this by induction.
   First, consider $\Expct{}{v_m}-\Expct{}{v_{m+1}}=0$, for which can be rewritten:
   \begin{align}
   \left(N-m+1\right)\frac{c_{m}}{N}-\left(N-m\right)\frac{c_{m+1}}{N}+\frac{c_{m+1}}{N} = 0
   \end{align}
   and thus:
   \begin{align}
   c_{m} &= \frac{N-m-1}{N-m+1}c_{m+1} \label{eq:a_recurse}.
   \end{align}
   By further noting that the solution for $m=M$ is trivial:
   \begin{align}
   c_M &= \frac{N}{N-M+1},
   \end{align}
   we have by induction
   \begin{align}
   c_m &=\frac{N}{N-M+1} \prod_{t=m}^{M-1} \frac{N-t-1}{N-t+1} \\
   &= \frac{N}{N-M+1}\exp\left(\sum_{t=m}^{M-1} \log(N-t-1)-\log(N-t+1)\right).
   \end{align}
   Now we can exploit the fact that there is a canceling of most of the term in this sum.  The exceptions are $-\log(N-m+1)$, $-\log(N-m)$, $\log(N-M+1)$, and $\log(N-M)$.  We thus have:
   \begin{align}
   c_m &= \frac{N}{N-M+1} \exp \left(\log(N-M)+\log(N-M+1)-\log(N-m)-\log(N-m+1)\right) \\
   &=\frac{N(N-M)}{(N-m)(N-m+1)} \label{eq:a_m}
   \end{align}
   which is our final simple form for $c_m$.
   We can now check that this satisfies the required recursive relationship (noting it trivially gives the correct value for $c_M$) as per~\eqref{eq:a_recurse}:
   \begin{align}
   c_m &= \left(\frac{N-m-1}{N-m+1}\right)c_{m+1} \\
   &= \left(\frac{N-m-1}{N-m+1}\right)\frac{N(N-M)}{(N-m-1)(N-m)} \\
   &= \frac{N(N-M)}{(N-m)(N-m+1)}
   \end{align}
   as required.
   Similarly, we can straightforwardly show that this form of $c_m$ satisfies~\eqref{eq:expt_a} by substitution.
   
   We then find the form of $v_m$ given this expression for $c_m$. Remember that:
   \begin{align}
      &v_m = c_m w_m + \frac{1}{N}\sum_{t=m+1}^{M}c_t.
   \end{align}
   We can rearrange~\eqref{eq:expt_a} to:
   \begin{align}
      \frac{1}{N}\sum_{t=m+1}^{M}c_t = 1-\frac{N-m+1}{N}c_m
      \intertext{from which it follows that:}
      v_m = 1+c_m \left(w_m-\frac{N-m+1}{N}\right).
   \end{align}
   Substituting in our expressions for $c_m$ and $w_m$ we thus have:
   \begin{align}
   v_m &= 1+\frac{N-M}{(N-m)(N-m+1)}\left(\frac{1}{q(i_m;i_{1:m-1},f(\rvtheta_{m-1}), \Dpool)}-(N-m+1)\right) \\
   v_m &= 1+\frac{N-M}{N-m} \left(\frac{1}{\left(N-m+1\right)q(i_m;i_{1:m-1}, \Dpool)}-1\right),
   \end{align}
   which is the form given in the original expression.
   
   To finish our definition, we simply need to derive $C$:
   \begin{align}
   C &= \left(\sum_{m=1}^{M} c_m\right)^{-1} \\
   &= \left(N(N-M)\sum_{m=1}^{M} \frac{1}{(N-m)(N-m+1)}\right)^{-1} \\
   &= \left(N(N-M)\sum_{m=1}^{M} \frac{1}{N-m}-\frac{1}{N-m+1}\right)^{-1} \\
   \intertext{where we now have a telescopic sum so}
   &= \left(N(N-M) \left(\frac{1}{N-M}-\frac{1}{N}\right)\right)^{-1} \\
   &= \frac{1}{M}. \label{eq:sum_cm}
   \end{align}
   We thus see that our $v_m$ always sum to $M$, giving the quoted form for $\Rs$ in the main paper.

\subsection{Proof of Unbiasedness and Variance for \texorpdfstring{$\Rs$}{Rlure}: Theorem \ref{thm:sure_unbiased}}\label{a:sure_unbiased}
\thmsureunbiased*
\begin{proof}
   $\Rs$ is, by construction a linear combination of the weights $a_m$.
   By Lemma \ref{lem:sure-unbiasedness-helper} each $a_m$ is an unbiased estimator of $\pr$.
   So by the linearity of expectation, $\Expct{}{\Rs} = r$.
	
	As in Theorem 1, the variance requires a degree of care because the $a_m$ are not independent.  Noting that the expectation does
	not change through the weighting, we analogously have
	\begin{align}
	\var\left[\Rs\right] 
	&= \Expct{}{ \var \left[\frac{1}{M} \sum_{m=1}^M c_m a_m\middle| \Dpool\right] } +\frac{\var\left[\Ls(\ry,\nn(\rvx))\right]}{N}. \label{eq:varlure_eq1}
	\end{align}
	Similarly, we also have
	\begin{align}
	\var \left[\frac{1}{M} \sum_{m=1}^M c_m a_m\middle| \Dpool\right] &= \Expct{}{\left(\frac{1}{M} \sum_{m=1}^M c_m a_m\right)^2 \middle| \Dpool} -\er^2 \\
	& =\frac{1}{M^2} \sum_{m=1}^M \sum_{k=1}^M c_m c_k\Expct{}{a_m a_k \middle| \Dpool} -\er^2 \\
	& =\frac{1}{M^2} \sum_{m=1}^M \sum_{k=1}^M c_m c_k \left(\Expct{}{a_m a_k \middle| \Dpool} -\er^2\right) \\
	&= \frac{1}{M^2} \sum_{m=1}^M \sum_{k=1}^M c_m c_k\Expct{}{\left(a_m-\er\right)\left(a_k-\er\right) \middle| \Dpool}. \label{eq:varlure_eq2}
	\end{align}
	Using the result before that
	\begin{align}
	\Expct{}{\left(a_m-\er\right)\left(a_k-\er\right) \middle| \Dpool} =
	\begin{cases}
	\Expct{i_{1:m-1}}{ \var\left[w_m \Ls_{i_m}\middle| i_{1:m-1}, \Dpool\right] \middle| \Dpool}  &\text{if }~ m=k\\
	0   &\text{otherwise}
	\end{cases}
	\end{align}
	now gives the desired result by straightforward substitution.
\end{proof}

\subsection{Proof that \texorpdfstring{$\Rs$}{Rlure} Has Lower Variance Than \texorpdfstring{$\Rp$}{Rpure} Under Reasonable Assumptions:
Theorem~\ref{thm:bettervar}}
\label{a:variance_comparison}

Recall from Theorems \ref{thm:pure_unbiased} and \ref{thm:sure_unbiased} that the variances of the $\Rs$ and $\Rp$ estimators are
\begin{align}
   \var\left[\Rp\right]  &= \frac{\var\left[\Ls(\ry,\nn(\rvx))\right]}{N}+\frac{1}{M^2}\sum_{m=1}^M E_m \label{eq:vap}\\
   \var\left[\Rs\right] 
      &= \frac{\var\left[\Ls(\ry,\nn(\rvx))\right]}{N}+\frac{1}{M^2}\sum_{m=1}^M c_m^2 E_m \label{eq:val},
\end{align}
where we have used the shorthand $E_m = \Expct{\Dpool,i_{1:m-1}}{\var \left[w_m \Ls_{i_m}|i_{1:m-1}, \Dpool\right]}$.
Recall also that 
$$c_m^2 = \frac{N^2(N-M)^2}{(N-m)^2(N-m+1)^2}.$$

Though the potential for one to use pathologically bad proposals means that it is not possible to show that 
$\var\left[\Rs\right] \le \var\left[\Rp\right]$ universally holds, we can show this result under a relatively weak assumption
that ensures our proposal is ``sufficiently good.''  

To formalize this assumption, we first define
\begin{align}
F_m &\coloneqq \Expct{\Dpool,i_{1:m-1}}{\var \left[\frac{w_m }{\Expct{}{w_m \mid i_{1:m-1}, \Dpool}} \Ls_{i_m}\middle| i_{i:m-1}, \Dpool\right]}
= \left(\frac{N-m+1}{N}\right)^{-2}E_m
\end{align}
as the weight-normalized expected variance,
where the second form comes from the fact that $\Expct{}{w_m \middle| i_{1:m-1}, \Dpool} = (N-m+1)/N$.
Our assumption is now that
\begin{align}
\label{eq:assump_1}
F_m \geq F_{M-m+1} \quad \forall m: 1\leq m \leq M/2.
\end{align}
Note that a sufficient, but not necessary, condition for this to hold is that the $F_m$ do not increase with $m$, i.e. $F_m \geq F_j \quad \forall (m,j):~ 1 \leq m \leq j \leq M$.
Intuitively, this is is equivalent to saying that the conditional variances of our normalized weighted losses should not increase as we acquire more points.
It is, for example, satisfied by a uniform sampling acquisition strategy (for which all $F_m$ are equal).
More generally, it should hold in practice for sensible acquisition strategies as a) our proposal should improve on average as we acquire more labels, 
leading to lower average variance; and b) higher loss points will generally be acquired earlier so the scaling will typically decrease with $m$.
In particular, note that $\Expct{}{w_m \Ls_{i_m} \middle| i_{1:m-1}, \Dpool}< r$ and is monotonically decreasing with 
$m$ because it omits the already sampled losses (which is why these are added back in when calculating $a_m$).

This assumption is actually stronger than necessary: in practice the result will hold even if $F_m$ increases with $m$ provided the rate of increase is sufficiently small.  
However, the assumption as stated already holds for a broad range of sensible proposals and fully encapsulating the minimum requirements on $F_m$ is beyond the scope of this paper.

We are now ready to formally state and prove our result.  
For this, however, it is convenient to first prove the following lemma, which we will invoke multiple times in the main proof.
\begin{lemma}
	\label{lem:NMm}
	If $a,b,M,N \in \mathbb{N}^+$ are positive integers such that, $M<N$ and $a+b\le M$, then
	\begin{align}
		\frac{(N-a)^2}{N^2} \ge \frac{(N-M)^2}{(N-b)^2}.
	\end{align}
\end{lemma}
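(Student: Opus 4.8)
The plan is to reduce this to an elementary polynomial inequality. First I would verify that every factor in sight is strictly positive: the hypotheses $a,b \le a+b \le M < N$ give $N-a>0$, $N-b>0$, and $N-M>0$ (and trivially $N>0$). This positivity is the crux of making the manipulation valid, since it lets me take positive square roots of both sides and reduce the claimed inequality to $\frac{N-a}{N}\ge \frac{N-M}{N-b}$, and then cross-multiply (both denominators being positive, so the direction of the inequality is preserved) to the equivalent statement $(N-a)(N-b)\ge N(N-M)$.

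Next I would simply expand both sides. The left-hand side is $N^2 - N(a+b) + ab$ and the right-hand side is $N^2 - NM$, so after cancelling $N^2$ and rearranging, the inequality becomes
\begin{align}
N(M-a-b) + ab \ge 0.
\end{align}
At this point the result is immediate: the hypothesis $a+b\le M$ makes $M-a-b\ge 0$, so $N(M-a-b)\ge 0$, while $ab\ge 0$ because $a,b$ are positive integers. Hence the sum is non-negative, and running the chain of equivalences backwards delivers the claim.

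The only real obstacle is a mild bookkeeping one around signs: one must confirm that all of $N-a$, $N-b$, $N-M$, and $N$ are positive \emph{before} taking square roots and cross-multiplying, since those steps rely on the signs to preserve the direction of the inequality. Once that is checked, everything reduces to a single-line expansion, and the final inequality $N(M-a-b)+ab\ge 0$ is exactly what the two hypotheses $M<N$ and $a+b\le M$ were designed to guarantee. I would therefore foreground the positivity check at the start and then let the algebra close the argument.
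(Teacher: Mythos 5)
Your proof is correct, and it takes a somewhat different route from the paper's. You exploit positivity to take square roots, reducing the claim to $\frac{N-a}{N}\ge\frac{N-M}{N-b}$, then cross-multiply and expand to the single inequality $N(M-a-b)+ab\ge 0$. The paper instead never leaves the squared quantities: it writes the difference of the two fractions over a common denominator, expands the quartic numerator, and factors it as a difference of squares into $\bigl(N(2N-M-a-b)+ab\bigr)\bigl(N(M-a-b)+ab\bigr)$, then argues both bracketed factors are non-negative. The two arguments hinge on exactly the same key quantity --- your $(N-a)(N-b)-N(N-M) = N(M-a-b)+ab$ is precisely the second factor in the paper's factorization --- but your square-root reduction buys a shorter argument: you avoid the messy quartic expansion entirely and only need the one factor, whereas the paper must additionally verify the ``sum'' factor $N(2N-M-a-b)+ab\ge 0$ (via $2N\ge M+a+b$). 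The price you pay is the upfront positivity bookkeeping needed to justify taking roots and cross-multiplying, which you correctly identify and handle; the paper's purely polynomial manipulation sidesteps that, at the cost of heavier algebra.
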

\begin{proof}
	\begin{align}
	\frac{(N-a)^2}{N^2} &- \frac{(N-M)^2}{(N-b)^2} =
	\frac{(N-a)^2(N-b)^2-N^2(N-M)^2}{N^2(N-b)^2} \\
	=\,& \frac{2N^3(M-a-b)+N^2(a^2+b^2+4ab-M^2)-2ab N(a+b)+a^2b^2}{N^2 (N-b)^2} \\
	=\,& \frac{1}{N^2(N-b)^2}\bigg(N(2N-M-a-b)+ab\bigg)\bigg(N(M-a-b)+ab\bigg)
	\end{align}
	$\ge 0$ as $2N\ge M+a+b$ and $M\ge a+b$ so all bracketed terms are positive.
\end{proof}	

\thmbettervar*
\begin{proof}
	
We start by subtracting equation \eqref{eq:val} from \eqref{eq:vap} yielding
\begin{align}
\var\left[\Rp\right] - \var\left[\Rs\right] &= \frac{1}{M} \sum_{m=1}^{M}(1-c_m^2)E_m\\
&=\frac{1}{M} \sum_{m=1}^{M}(1-c_m^2)\left(\frac{N-m+1}{N}\right)^2 F_m.
\end{align}
Assuming, for now, that $M$ is even and $M<N$, we can now group terms into pairs by counting from each end of the sequence (i.e. pairing the $m$-th and $M-m+1$-th terms) to yield
\begin{align}
&\var\left[\Rp\right] - \var\left[\Rs\right] =\, \frac{1}{M} \sum_{m=1}^{M/2} S_m
\end{align}
where
\begin{align}
&S_m := \left(1-c_m^2\right)  \frac{(N-m+1)^2}{N^2} F_m + 
\left(1-c_{M-m+1}^2\right)\frac{(N-M+m)^2}{N^2}F_{M-m+1} \\
&=\left(\frac{(N-m+1)^2}{N^2} - \frac{(N-M)^2}{(N-m)^2}\right)F_m+
\left(\frac{(N-M+m)^2}{N^2} - \frac{(N-M)^2}{(N-M+m-1)^2}\right)F_{M-m+1}.
\end{align}
We will now show that $S_m\ge 0,~ \forall 1\le m \le M/2$, from which we can directly conclude that $\var\left[\Rp\right] \ge \var\left[\Rs\right]$.  For this, note that $F_m$ and $F_{M-m+1}$ are themselves non--negative by construction.

Consider first the case where $(N-M+m)^2/N^2\ge (N-M)^N/(N-M+m-1)^2$.  Here the second term in $S_m$ is non-negative.  Furthermore, invoking Lemma~\eqref{lem:NMm} with $a=m-1$ and $b=m$ (noting this satisfies $a+b\le M$ for all $1\le m \le M/2$ as required) shows that 
\begin{align}
\frac{(N-m+1)^2}{N^2} - \frac{(N-M)^2}{(N-m)^2} \ge 0
\end{align}
and so the first term is also positive.  It thus immediately follows that $S_m \ge 0$ in this scenario.

When this does not hold, $(N-M+m)^2/N^2 < (N-M)^N/(N-M+m-1)^2$ and so the second term in $S_m$ is negative.
We now instead invoke our assumption that $F_m\ge F_{M-m+1}$, to yield
\begin{align}
\label{eq:Sm_ine}
S_m \ge F_m \left(\frac{(N-m+1)^2}{N^2} - \frac{(N-M)^2}{(N-m)^2}+\frac{(N-M+m)^2}{N^2} - \frac{(N-M)^2}{(N-M+m-1)^2}\right).
\end{align}
We can now invoke Lemma~\eqref{lem:NMm}  with $a=m-1$ and $b=M-m+1$ to show that
\begin{align}
\frac{(N-m+1)^2}{N^2} - \frac{(N-M)^2}{(N-M+m-1)^2} \ge 0
\end{align}
and again with $a=M-m$ and $b=m$ to show that 
\begin{align}
\frac{(N-M+m)^2}{N^2}- \frac{(N-M)^2}{(N-m)^2} \ge 0.
\end{align}
Substituting these back into~\eqref{eq:Sm_ine} thus again yield the desired result that $S_m\ge 0$ as required. 

To cover the case where $M$ is odd, we simply need to note that this adds the following additional term as follows:
\begin{align}
\var\left[\Rp\right] - \var\left[\Rs\right] =  \left(\frac{(N-M/2+1/2)^2}{N^2} - \frac{(N-M)^2}{(N-M/2-1/2)^2}\right)F_m + \frac{1}{M} \sum_{m=1}^{(M-1/2)} S_m
\end{align}
and we can again invoke Lemma~\eqref{lem:NMm} with $a=M/2-1/2$ and $b=M/2+1/2$ to show that this additional term is non--negative.

To cover the case where $M=N$, we simply note that here 
\begin{align}
S_m = \frac{(N-m+1)^2}{N^2} F_m + \frac{m^2}{N^2} F_{M-m+1}
\end{align}
where both terms are clearly positive.

We have now shown that $S_m\ge 0$ in all possible scenarios given our assumption on $F_m$, and so we can conclude that $\var\left[\Rp\right] \ge \var\left[\Rs\right]$.

Finally, we need to show the inequality is strict if $E_1>0$ and $M>1$.  For this we first note that $E_1>0$ ensures $F_1>0$ and then consider $S_1$ as follows:
\begin{align}
S_1 &= \left(1-\frac{(N-M)^2}{(N-1)^2}\right)F_1+\left(\frac{(N-M+1)^2}{N^2}-1\right)F_{M}
\intertext{and as the second term is clearly negative and $F_1\ge F_M$,}
&\ge \left(\frac{(N-M+1)^2}{N^2}-\frac{(N-M)^2}{(N-1)^2}\right) F_1 \\
&= \frac{(M-1)(2N^2-2MN+M-1)}{N^2(N-1)^2} F_1 \\
&>0
\end{align}
as $M>1$ and $N\ge M$ ensures that each bracketed term is strictly positive.
Now as $S_1>0$ and $S_m \ge 0$ for all other $m$, we can conclude that the sum of the $S_m$ is strictly positive, and thus that the inequality in strict.
\end{proof}

\subsection{Proof of the Consistency of \texorpdfstring{$\Rs$}{Rlure}: Theorem \ref{thm:sure_consistent}}\label{a:sure_consistent}
\thmsureconsistent*
\begin{proof}
	As before, since $\Rs$ is unbiased the MSE is simply the variance so:
	\begin{align}
	\Expct{}{\left(\Rp - r\right)^2} &= \var\left[\Rp\right] \\
	&=  \frac{\var\left[\Ls(\ry,\nn(\rvx))\right]}{N}+\frac{1}{M^2} \sum_{m=1}^{M} c_m^2\Expct{\Dpool,i_{1:m-1}}{
		\var\left[w_m \Ls_{i_m}|i_{1:m-1}, \Dpool\right]}.
	\end{align}
	Taking $N=\alpha M$, we already showed in the proof of Theorem~\ref{thm:pure_consistent} that the first of these terms tends to zero as $M\to \infty$.
	
	We also showed that $\Expct{\Dpool,i_{1:m-1}}{\var_{q(i_m;i_{1:m-1}, \Dpool)} \left[w_m \Ls_{i_m}\right]}$ is finite given our assumptions.
	As such, there must be some finite constant $d$ such that
	\begin{align}
	\Expct{\Dpool,i_{1:m-1}}{
		\var_{q(i_m;i_{1:m-1}, \Dpool)} \left[w_m \Ls_{i_m}\right]} < d
	\end{align}
	and thus
	\begin{align}
	\frac{1}{M^2} \sum_{m=1}^{M} c_m^2\Expct{\Dpool,i_{1:m-1}}{
		\var\left[w_m \Ls_{i_m}|i_{1:m-1}, \Dpool\right]} &< \frac{d}{M^2}\sum_{m=1}^M c_m^2 \\
	&=\frac{d}{M^2}\sum_{m=1}^M \left(\frac{N(N-M)}{(N-m)(N-m+1)}\right)^2 \\
	\intertext{and by substituting $N=\alpha M$}
	&=\frac{d}{M^2}\sum_{m=1}^M \left(\frac{\alpha M(\alpha M-M)}{(\alpha M-m)(\alpha M-m+1)}\right)^2 \\
	&<\frac{d}{M^2}\sum_{m=1}^M \left(\frac{\alpha M}{\alpha M-M+1}\right)^2 \\
	&= \frac{d \alpha^2 M}{\left((\alpha-1)M+1\right)^2}
	\end{align}
	which clearly tends to zero as $M\to\infty$ (remembering that $\alpha>1$) and we are done.
\end{proof}

\subsection{Proof of Bias and Variance of Standard Active Learning Estimator: Theorem \ref{thm:standard}}\label{a:standard}
\thmstandard*
\begin{proof}
The result of the bias follows immediately from definition of $\mu_m$ and the linearity of expectations.

For the variance, we have
\begin{align}
\var[\Rt] &= \Expct{}{\Rt^2}-\left(\Expct{}{\Rt}\right)^2 \\
&= \Expct{\Dpool}{\Expct{}{\Rt^2\middle| \Dpool}}-\left(\Expct{}{\Rt}\right)^2 \\
&= \Expct{\Dpool}{\var\left[\Rt \middle| \Dpool\right] +\left(\Expct{}{\Rt \middle| \Dpool}\right)^2}-\left(\Expct{}{\Rt}\right)^2 \\
&=\var_{\Dpool}\left[\Expct{}{\Rt \middle| \Dpool\right]}+\Expct{\Dpool}{\var\left[\Rt \middle| \Dpool\right]}
\label{eq:var_stand_eq_1}
\end{align}
where the first term is {\tiny \circled 1} from the result.  For the second term, introducing the notations $\mugD=\Expct{}{\Rt \middle| \Dpool}$ 
and $\mumgD=\Expct{}{\Ls_{i_m} \middle| \Dpool}$ we have
\begin{align}
\var&\left[\Rt \middle| \Dpool\right] = \Expct{}{\left(\frac{1}{M}\sum_{m=1}^{M}\Ls_{i_m}-\mugD\right)^2 \middle| \Dpool} \\
&=  \frac{1}{M^2}\sum_{m=1}^{M}\sum_{k=1}^{M}\Expct{}{\left(\Ls_{i_m}-\mugD\right)\left(\Ls_{i_k}-\mugD\right) \middle| \Dpool} \\
&= \frac{1}{M^2}\sum_{m=1}^{M}\sum_{k=1}^{M}\Expct{}{\left(\Ls_{i_m}-\mumgD+\mumgD-\mugD\right)\left(\Ls_{i_k}-\mukgD+\mukgD-\mugD\right) \middle| \Dpool} \displaybreak[0],
\intertext{multiplying out terms and using the symmetry of $m$ and $k$}
&= \frac{1}{M^2}\sum_{m=1}^{M}\sum_{k=1}^{M}\Expct{}{(\Ls_{i_m}-\mumgD)(\Ls_{i_k}-\mukgD)\middle| \Dpool}\\
&\phantom{=}+\frac{2}{M}\sum_{m=1}^{M}\Expct{}{(\Ls_{i_m}-\mumgD)\left(\frac{1}{M}\sum_{k=1}^{M}\mukgD-\mugD\right)\middle| \Dpool}\\
&\phantom{=}+\frac{1}{M}\sum_{m=1}^{M} (\mumgD-\mugD)\left(\frac{1}{M}\sum_{k=1}^{M} \mukgD-\mugD\right) \displaybreak[0]\\
\intertext{where we have exploited symmetries in the indices.  Now, as $\frac{1}{M}\sum_{k=1}^{M} \mukgD = \mugD$, the second and third terms are simply zero, so we have \displaybreak[0]}
&= \frac{1}{M^2}\sum_{m=1}^{M}\sum_{k=1}^{M}\Expct{}{(\Ls_{i_m}-\mumgD)(\Ls_{i_k}-\mukgD)\middle| \Dpool} \displaybreak[0],
\intertext{separating out the $m=k$ and $m<k$ terms, with symmetry}
&= \frac{1}{M^2} \sum_{m=1}^{M}\var\left[\Ls_{i_m}\middle| \Dpool\right]+2\sum_{k<m} \Expct{}{(\Ls_{i_m}-\mumgD)(\Ls_{i_k}-\mukgD)\middle| \Dpool} \displaybreak[0]\\
&= \frac{1}{M^2} \sum_{m=1}^{M}\var\left[\Ls_{i_m}\middle| \Dpool\right]+2 \Expct{}{(\Ls_{i_m}-\mumgD)\left(\sum\nolimits_{k<m}(\Ls_{i_k}-\mukgD)\right)\middle| \Dpool} \\
&= \frac{1}{M^2} \sum_{m=1}^{M}\var\left[\Ls_{i_m}\middle| \Dpool\right]+2 \,\text{Cov}\left[\Ls_{i_m},\sum\nolimits_{k<m}\Ls_{i_k}\middle| \Dpool\right].
\label{eq:var_stand_eq_2}
\end{align}
Here the second term will yield  {\tiny \circled 4} in the result when substituted back into~\eqref{eq:var_stand_eq_1}.
For the first term, we have by analogous arguments as those used at the start of the proof for $\var[\Rt]$,
\begin{align}
\var\left[\Ls_{i_m}\middle| \Dpool\right] = \Expct{i_{1:m-1}}{\textup{Var}\left[\Ls_{i_m}|i_{1:m-1}, \Dpool\right] \middle| \Dpool}
+\var \left[\muD \middle| \Dpool\right]
\label{eq:var_stand_eq_3}
\end{align}
where $\muD:=\Expct{}{\Ls_{i_m}|i_{1:m-1},\Dpool}$ as per the definition in the theorem itself.
Substituting this back into~\eqref{eq:var_stand_eq_2} and then~\eqref{eq:var_stand_eq_1} in turn now yields the desired result 
through the tower property of expectations, with the first term in~\eqref{eq:var_stand_eq_3} producing {\tiny \circled 2} and
the second term producing {\tiny \circled 3}.
\end{proof}

\subsection{Proof of Optimal Proposal Distribution: Theorem \ref{thm:optimal_proposal}}\label{a:optimal_proposal}
\thmoptimal*
\begin{proof}
	We start by proving the result for the simpler case of $\Rp$ before considering $\Rs$.
	To make the notation simpler, we will introduce hypothetical indices $i_t$ for $t>M$, noting that their exact values will not change the proof provided that they are all distinct to each other and the real indices (i.e.~that they are a possible realization of the active sampling process in the setting $M=N$).
	
	For $\Rp$, the proof follows straightforwardly from substituting the definition of the optimal proposal into the $a_m$ form of the estimator
	\begin{align}
	\Rp &= \frac{1}{M} \sum_{m=1}^{M} a_m \\
   &= \frac{1}{M} \sum_{m=1}^{M} \left(w_m \Ls_{i_m} +\frac{1}{N} \sum_{t=1}^{m-1} \Ls_{i_t} \right)\\
	&= \frac{1}{M} \sum_{m=1}^{M} \left(\frac{1}{N} \sum_{t=m}^{N} \Ls_{i_t} \right)+\left(\frac{1}{N} \sum_{t=1}^{m-1} \Ls_{i_t} \right)\\
   &= \frac{1}{N} \sum_{t=1}^{N} \Ls_{i_t},
   \intertext{and because all possible indices are uniquely visited}
	&= \frac{1}{N} \sum_{n=1}^{N} \Ls_{n} \\
   &= \er.
	\end{align}
   The proof proceeds identically for the case of $\nabla \Ls_{i_m}$ because the gradient passes through the summations.
	
	For $\Rs$, we similarly substitute the optimal proposal into the definition of the estimator
	\begin{align}
	\Rs =& \frac{1}{M} \sum_{m=1}^{M} v_m \Ls_{i_m} \\
	=& \frac{1}{M} \sum_{m=1}^{M} \Ls_{i_m} +\frac{N-M}{N-m} \left(\frac{\Ls_{i_m}}{(N-m+1)q^*(i_m ; i_{1:m-1}, f_{\rvtheta_{m-1}}, \Dpool)}-\Ls_{i_m}\right) \\
	=& \frac{1}{M} \sum_{m=1}^{M} \Ls_{i_m} +\frac{N-M}{N-m} \left(\frac{\sum_{t=m}^{N} \Ls_{i_t}}{(N-m+1)}-\Ls_{i_m}\right), \intertext{pulling out the loss}
	=& \frac{1}{M} \sum_{m=1}^M \Ls_{i_m} \left(1-\frac{N-M}{N-m}+(N-M)\underbrace{\sum_{k=1}^{m} \frac{1}{(N-k)(N-k+1)}}_{=m/(N(N-m))}\right)  \\
	&+ \frac{1}{M} \sum_{t=M+1}^N \Ls_{i_t} (N-M)\underbrace{\sum_{k=1}^{M} \frac{1}{(N-k)(N-k+1)}}_{=M/(N(N-M))},
		\intertext{simplifying and rearranging}
	=& \frac{1}{M} \sum_{m=1}^M \Ls_{i_m} \left(1-\frac{N-M}{N-m}+\frac{N-M}{N-m}\frac{m}{N}\right) + \frac{1}{N} \sum_{t=M+1}^N \Ls_{i_t}\\
	=& \frac{1}{M} \sum_{m=1}^M \Ls_{i_m} \left(1-\frac{N-M}{N-m}\left(\frac{N-m}{N}\right)\right) + \frac{1}{N} \sum_{t=M+1}^N \Ls_{i_t} \\	
	=& \frac{1}{N} \sum_{m=1}^M \Ls_{i_m}+ \frac{1}{N} \sum_{t=M+1}^N \Ls_{i_t} \\
	=& \frac{1}{N} \sum_{n=1}^N \Ls_n
	\end{align}
	$=\er$ as required.
\end{proof}

\begin{remark}
	The optimal proposal for estimating the gradient of the pool risk, $\nabla_{\phi} \er$, with respect to some scalar $\phi$ is instead\footnote{One can, in principle, actually construct an exact estimator in this scenario as well with the TABI approach of~\citet{rainforth2020target} by employing two separate proposals that target $\max(\nabla_{\theta} \er,0)$ and $-\min(\nabla_{\theta} \er,0)$ respectively, then taking the difference between the two resultant estimators.}
	\begin{align}
	q^{**}(i_m ; i_{1:m-1}, \Dpool) = \left|\nabla_{\phi}\Ls_{i_m}\right|/\sum\nolimits_{n \notin i_{1:m-1}} \left|\nabla_{\phi}\Ls_n\right|.
	\end{align}
	Note that when taking gradients with respect to multiple variables, the optimal proposal for each will be different for each.
\end{remark}

\newpage
\section{Experimental Details}
\subsection{Linear Regression}\label{a:linear}
\begin{figure}
   \begin{subfigure}{0.48\textwidth}
      \centering
      \includegraphics[width=\textwidth]{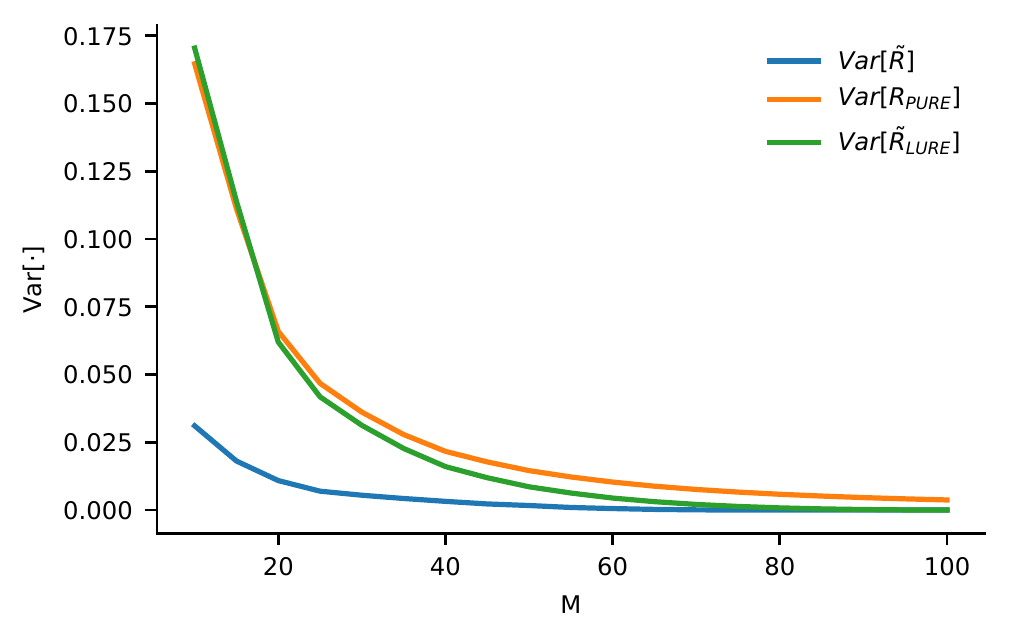}
      \vspace{-6mm}
      \caption{\textbf{Linear Regression}}
      \label{fig:var_linear_no_fit}
   \end{subfigure}
   \hfill
   \begin{subfigure}{0.48\textwidth}
      \centering
      \includegraphics[width=\textwidth]{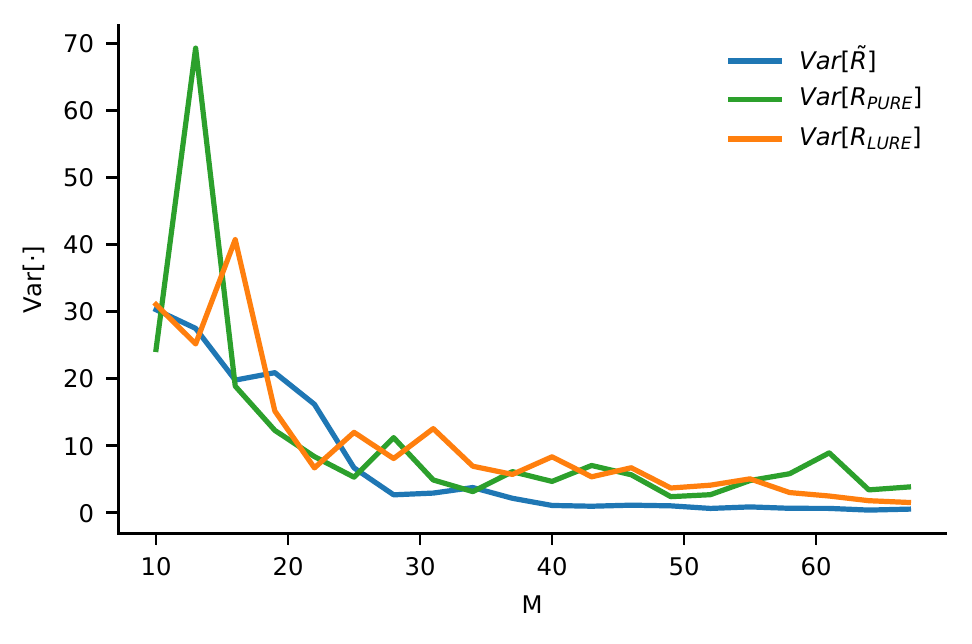}
      \vspace{-6mm}
      \caption{\textbf{Bayesian Neural Network}}
      \label{fig:var_mnist_no_fit}
   \end{subfigure}
   \vspace{-2mm}
   \caption{For linear regression (a) the biased estimator has the lowest variance, and $\Rs$ improves on $\Rp$. (b) But for the BNN the variances are more comparable, with $\Rs$ the lowest.}
\end{figure}
Our training dataset contains a small cluster of points near $x=-1$ and two larger clusters at $0\leq x \leq 0.5$ and $1 \leq x \leq 1.5$, sampled proportionately to the `true' data distribution.
The data distribution from which we select data in a Rao-Blackwellised manner has a probability density function over $\rx$ equal to:
\begin{equation}
   P(\rx= X) = \begin{cases} 0.12 &-1.2 \leq \rx \leq -0.8\\
      0.95 &0.0 \leq \rx \leq 0.5 \\
      0.95 &1.0 \leq \rx \leq 1.5
   \end{cases}
\end{equation}
while the distribution over $\ry$ is then induced by:
\begin{equation}
   \ry = \text{max}(0, x) \cdot \left(|x|^{\frac{3}{2}} +\frac{\sin(20x)}{4}\right).
\end{equation}

We set $N=101$, where there are 5 points in the small cluster and 96 points in each of the other two clusters, and consider $10 \leq M \leq 100$.
We actively sample points without replacement using a geometric heuristic that scores the quadratic distance to previously sampled points and then selects points based on a Boltzman distribution with $\beta=1$ using the normalized scores.

Here, we also show in Figure \ref{fig:epsilon_greedy} results that are collected using an epsilon-greedy acquisition proposal.
The results are aligned with those from the other acquisition distribution we consider in the main body of the paper.
This proposal selects the point that is has the highest total distance to all previously selected points with probability 0.9 and uniformly at random with probability $\epsilon = 0.1$.
That is, the acquistion proposal is given by:
\begin{equation}
   P(i_m = j; i_{1:m-1}, \Dpool) = \begin{cases} 1 - \epsilon + \frac{\epsilon}{\abs{\Dpool}} &\argmax_{j \notin \Dtrain} \sum_{k \in \Dtrain} \abs{x_k - x_j}\\
      \frac{\epsilon}{\abs{\Dpool}} & \text{otherwise}
   \end{cases}
\end{equation}
where of course $\Dtrain$ are the $i_{1:m-1}$ elements of $\Dpool$.

\begin{figure}
   \centering
   ~~~~~~~~\begin{subfigure}[b]{0.42\textwidth}
      \centering
      \includegraphics[width=\textwidth]{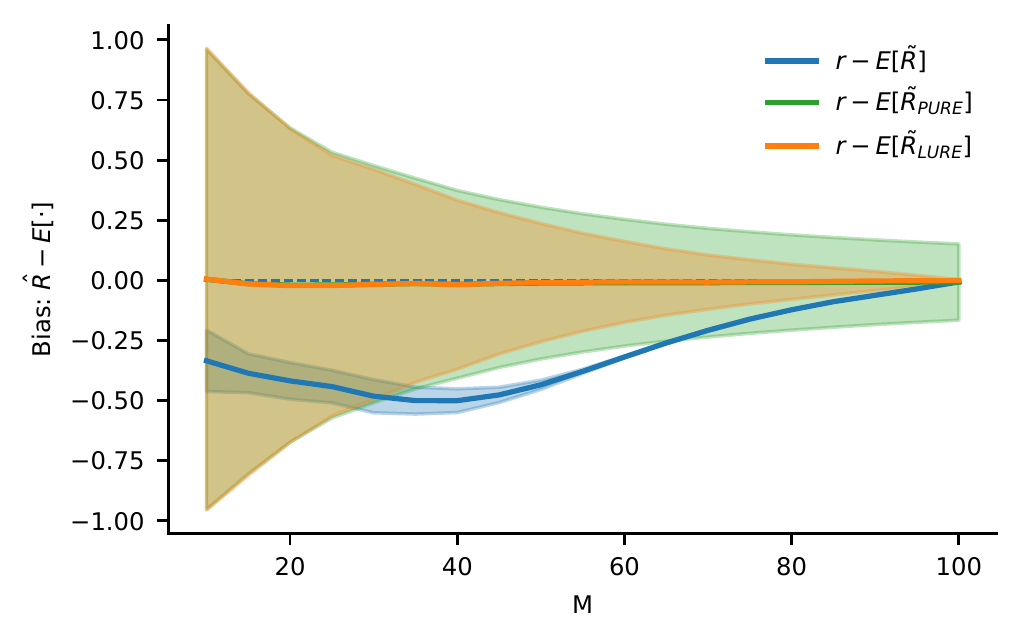}
      \vspace{-6mm}
      \caption{\textbf{Bias (like Fig.\ \ref{fig:bias_linear_no_fit}).}}
      \label{fig:eg_no_fit}
   \end{subfigure}
   \hfill
   \begin{subfigure}[b]{0.42\textwidth}
      \centering
      \includegraphics[width=\textwidth]{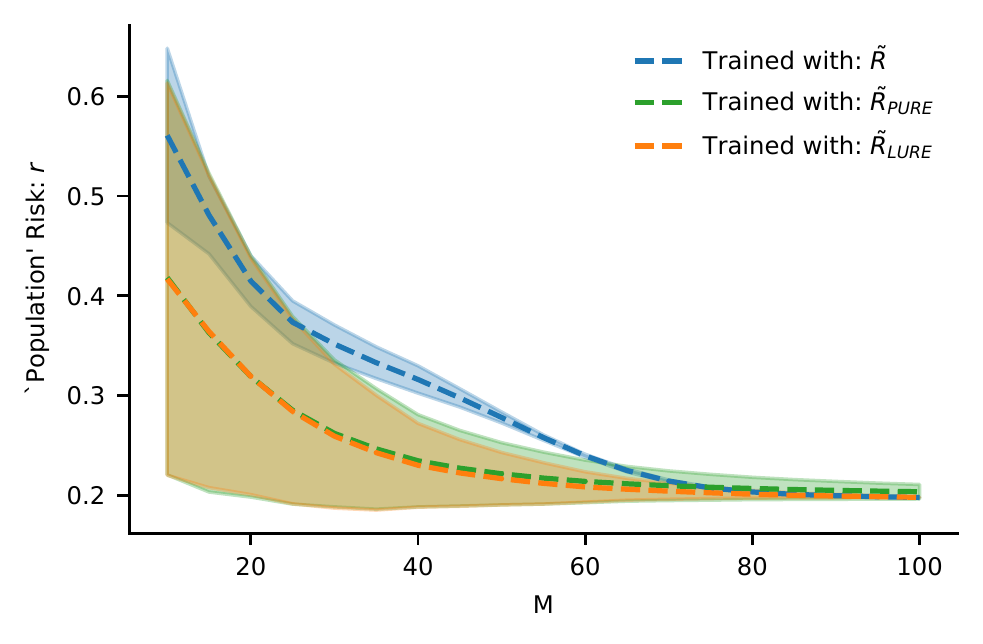}
      \vspace{-6mm}
      \caption{\textbf{TestMSE (like Fig.\ \ref{fig:linear_active_learning}).}}
      \label{fig:eg_fit}
   \end{subfigure}
   \caption{Adopting an alternative proposal distribution---here an epsilon-greedy adaptation of a distance-based measure---does not change the overall picture for linear regression.}
   \label{fig:epsilon_greedy}
   \vspace{-4mm}
\end{figure}

For all graphs we use 1000 trajectories with different random seeds to calculate error bars.
Although, of course, each regression and scoring is deterministic, the acquistion distribution is stochastic.

Although the variance of the estimators can be inferred from Figure \ref{fig:bias_linear_no_fit}, we also provide Figure \ref{fig:var_linear_no_fit} which displays the variance of the estimator directly.
\begin{table}[]
   \resizebox{\textwidth}{!}{
   \begin{tabular}{@{}ll@{}}
   \toprule
   Hyperparameter & Setting description                                             \\ \midrule
   Architecture                                     & Convolutional Neural Network                                                             \\
   Conv 1                          & 1-16 channels, 5x5 kernel, 2x2 max pool \\
   Conv 2                                      & 16-32 channels, 5x5 kernel, 2x2 max pool\\
   Fully connected 1 & 128 hidden units \\
   Fully connected 2 & 10 hidden units \\
   Loss function & Negative log-likelihood \\
   Activation                                       & ReLU                          \\
   Approximate Inference Algorithm                  & Radial BNN Variational Inference \citep{farquhar_radial_2020}\\
   Optimization algorithm                           & Amsgrad \citep{reddi_convergence_2018}       \\
   Learning rate                                    & $5\cdot10^{-4}$                                      \\
   Batch size                                       & 64                                                              \\
   Variational training samples                     & 8                                                              \\
   Variational test samples                         & 8                                                              \\
   Variational acquisition samples                         & 100                                                             \\
   Epochs per acquisition& up to 100 (early stopping patience=20), with 1000 copies of data\\
   Starting points & 10 \\
   Points per acquistion & 1\\
   Acquisition proposal distribution & $q(i_m;i_{1:m-1}, \Dpool) = \frac{e^{Ts_i}}{\sum e^{Ts_i}}$\\
   Temperature: $T$ & 10,000 \\
   Scoring scheme: $s$ & BALD (M.I. between $\rvtheta$ and output distribution)\\
   Variational Posterior Initial Mean               & \citet{he_deep_2016}                         \\
   Variational Posterior Initial Standard Deviation & $\log[1 + e^{-4}]$          \\
   Prior                                       & $\mathcal{N}(0, 0.25^2)$                                                               \\
   Dataset                                          & Unbalanced MNIST\\
   Preprocessing                                    & Normalized mean and std of inputs.                             \\
   Validation Split                                 & 1000 train points for validation\\
   Runtime per result & 2-4h\\
   Computing Infrastructure & Nvidia RTX 2080 Ti\\
   \bottomrule
   \end{tabular}
   }
   \caption{Experimental Setting---Active MNIST.}
   \label{tbl:hypers-mnist}
   \end{table}
\subsection{Bayesian Neural Network}\label{a:bayesian_neural_network}
We train a Bayesian neural network using variational inference \citep{jordan_variational_1999}.
In particular, we use the radial Bayesian neural network approximating distribution \citep{farquhar_radial_2020}.
The details of the hyperparameters used for training are provided in Table \ref{tbl:hypers-mnist}.

The unbalanced dataset is constructed by first noising 10\% of the training labels, which are assigned random labels, and then selecting a subset of the training dataset such that the numbers of examples of each class is proportional to the ratio (1., 0.5, 0.5, 0.2, 0.2, 0.2, 0.1, 0.1, 0.01, 0.01)---that is, there are 100 times as many zeros as nines in the unbalanced dataset.
(Figure \ref{fig:balanced_mnist_active_learning} shows a version of this experiment which uses a balanced dataset instead, in order to make sure that any effects are not entirely caused by this design choice.)
In fact, we took only a quarter of this dataset in order to speed up acquisition (since each model must be evaluated many times on each of the candidate datapoints to estimate the mutual information).
1000 validation points were then removed from this pool to allow early stopping.
The remaining points were placed in $\Dpool$.
We then uniformly selected 10 points from $\Dpool$ to place in $\Dtrain$.
Adding noise to the labels and using an unbalanced dataset is designed to mimic the difficult situations that active learning systems are deployed on in practice, despite the relatively simple dataset.
However, we used a simple dataset for a number of reasons.
Active learning is very costly because it requires constant retraining, and accurately measuring the properties of estimators generally requires taking large numbers of samples.
The combination makes using more complicated datasets expensive.
In addition, because our work establishes a lower bound on architecture complexity for which correcting the active learning bias is no longer valuable, establishing that lower bound with MNIST is in fact a stronger result than showing a similar result with a more complex model.

The active learning loop then proceeds by:
\begin{enumerate}
   \item training the neural network on $\Dtrain$ using $\tilde{R}$;
   \item scoring $\Dpool$;
   \item sampling a point to be added to $\Dtrain$;
   \item Every 3 points, we separately trained models on $\Dtrain$ using $\tilde{R}$, $\Rp$, and $\Rs$ and evaluate them.
\end{enumerate}   
This ensures that all of the estimators are on data collected under the same sampling distribution for fair comparison.
As a sense-check, in Figures \ref{fig:rsure_mnist_loss} and \ref{fig:rsure_mnist_acc} we show an alternate version in which the first step trains with $\Rs$ instead of $\tilde{R}$, and find that this does not have a significant effect on the results.

\begin{figure}
   \begin{subfigure}{0.48\textwidth}
      \centering
      \includegraphics[width=\textwidth]{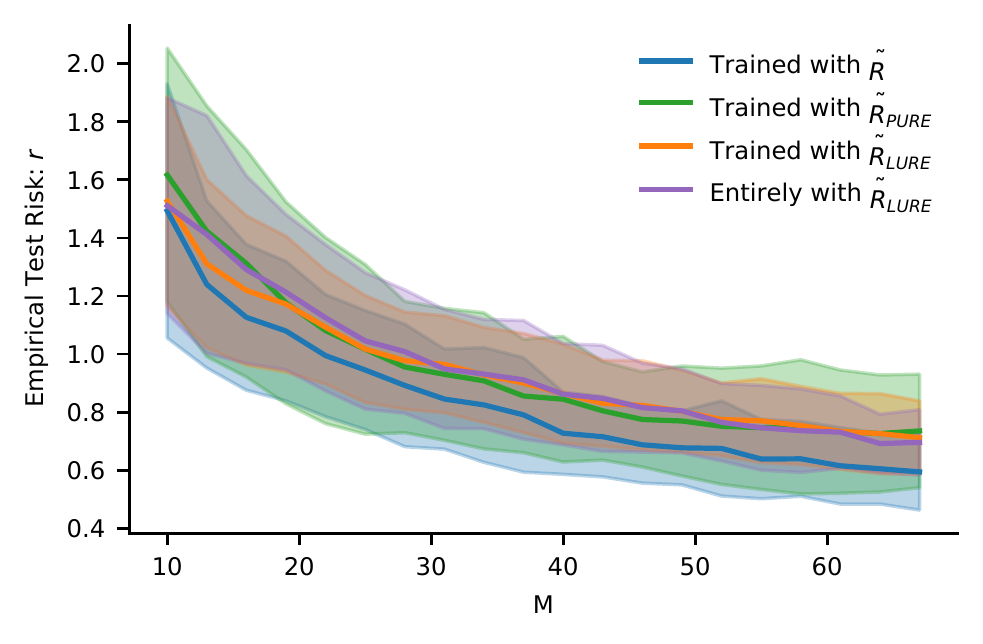}
      \vspace{-6mm}
      \caption{\textbf{Test loss}}
      \label{fig:rsure_mnist_loss}
   \end{subfigure}
   \hfill
   \begin{subfigure}{0.48\textwidth}
      \centering
      \includegraphics[width=\textwidth]{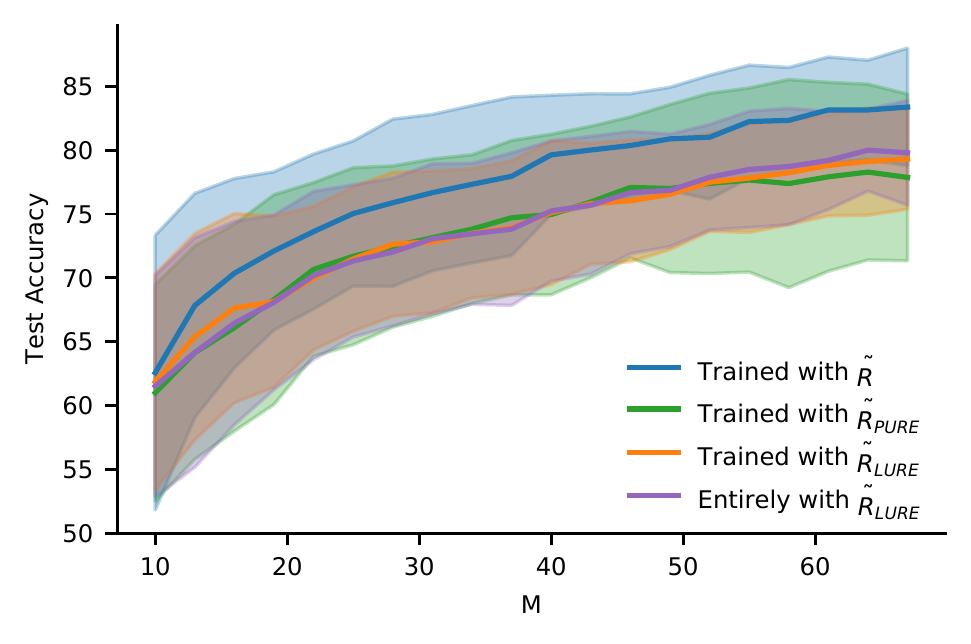}
      \vspace{-6mm}
      \caption{\textbf{Test accuracy}}
      \label{fig:rsure_mnist_acc}
   \end{subfigure}
   \vspace{-2mm}
   \caption{We contrast the effect of using $\Rs$ throughout the entire acquisition procedure and training (rather than using the same acquisition procedure based on $\tilde{R}$ for all estimators). The purple test performance and orange are nearly identical, suggesting the result is not sensitive to this choice.}
\end{figure}

\begin{figure}
   \begin{subfigure}{0.48\textwidth}
      \centering
      \includegraphics[width=\textwidth]{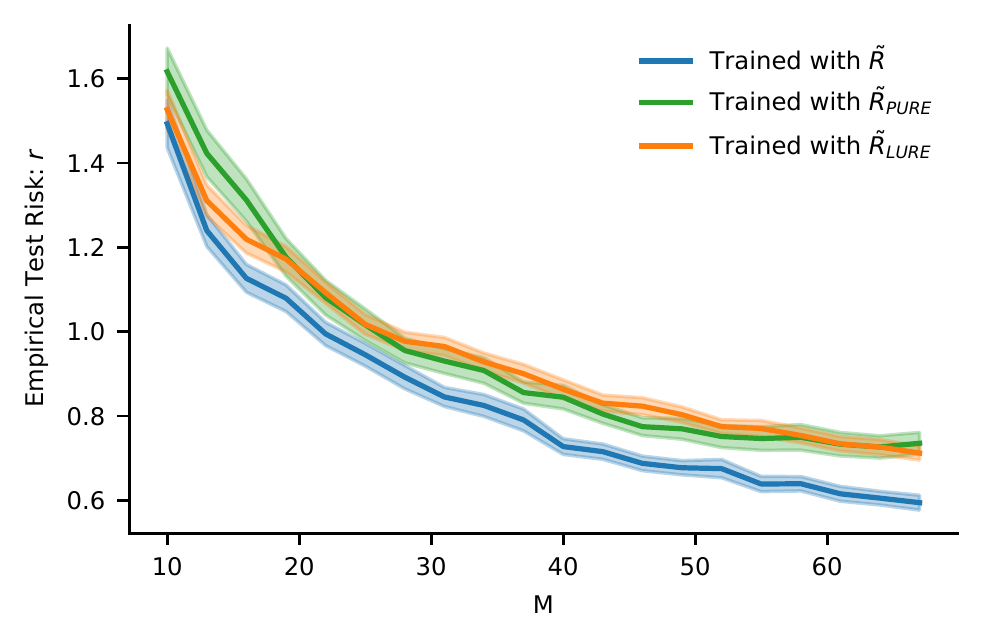}
      \vspace{-6mm}
      \caption{\textbf{Test loss}}
      \label{fig:se_mnist_loss}
   \end{subfigure}
   \hfill
   \begin{subfigure}{0.48\textwidth}
      \centering
      \includegraphics[width=\textwidth]{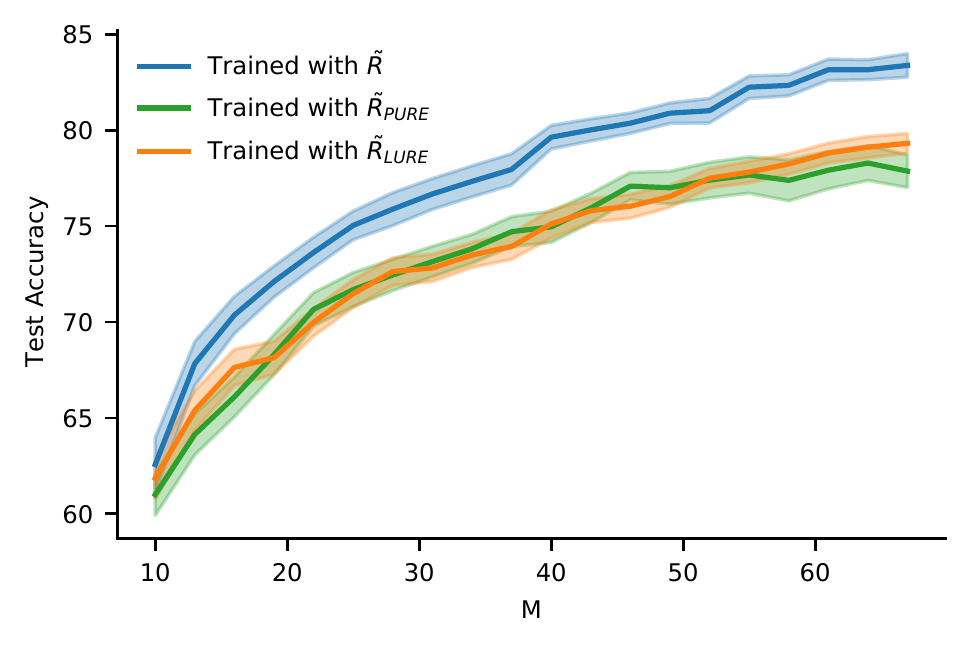}
      \vspace{-6mm}
      \caption{\textbf{Test accuracy}}
      \label{fig:se_mnist_acc}
   \end{subfigure}
   \vspace{-2mm}
   \caption{Versions of Figures \ref{fig:mnist_active_learning} and \ref{fig:mnist_active_learning_accuracy} shown with standard errors (45 points) instead of standard deviations. This makes it clearer that the biased $\tilde{R}$ has better performance, even if only marginally so.}
\end{figure}

When we compute the bias of a fixed neural network in Figure \ref{fig:bias_mnist_no_fit}, we train a single neural network on 1000 points.
We then sample evaluation points using the acquisition proposal distribution from the test dataset and evaluate the bias using those points.

In Figures \ref{fig:se_mnist_loss} and \ref{fig:se_mnist_acc} we review the graphs shown in Figures \ref{fig:mnist_active_learning} and \ref{fig:mnist_active_learning_accuracy}, this time showing standard errors in order to make clear that the biased $\tilde{R}$ estimator has better performance, while the earlier figures show that the performance is quite variable.

\begin{figure}[t]
   \centering
   \vspace{-4mm}
   \begin{subfigure}[b]{0.32\textwidth}
      \centering
      \includegraphics[width=\textwidth]{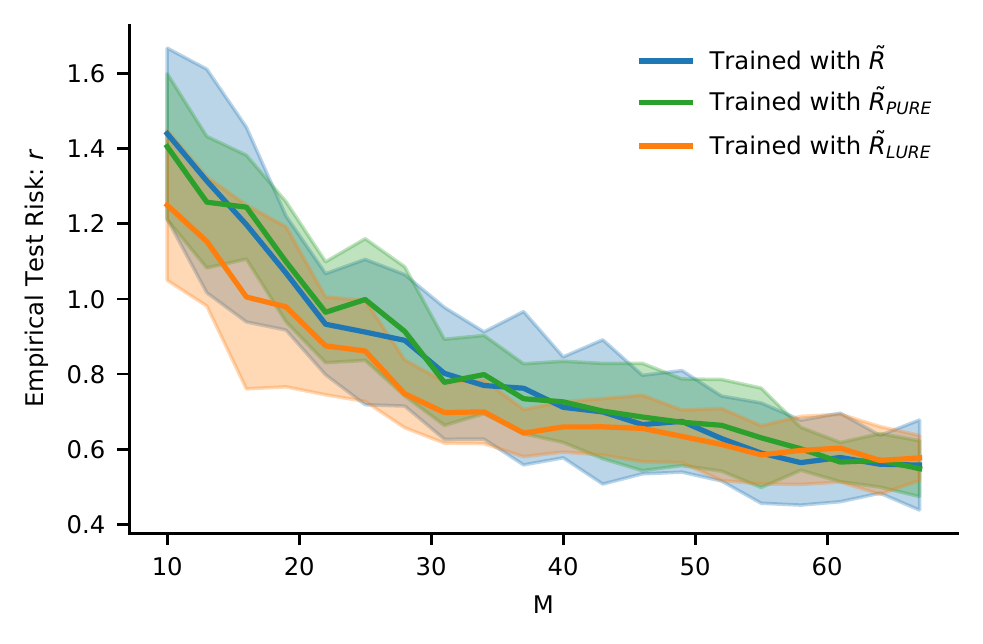}
      \vspace{-6mm}
      \caption{\textbf{T=5000 NLL.}}
      \label{fig:5000nll}
   \end{subfigure}
   \hfill
   \begin{subfigure}[b]{0.32\textwidth}
      \centering
      \includegraphics[width=\textwidth]{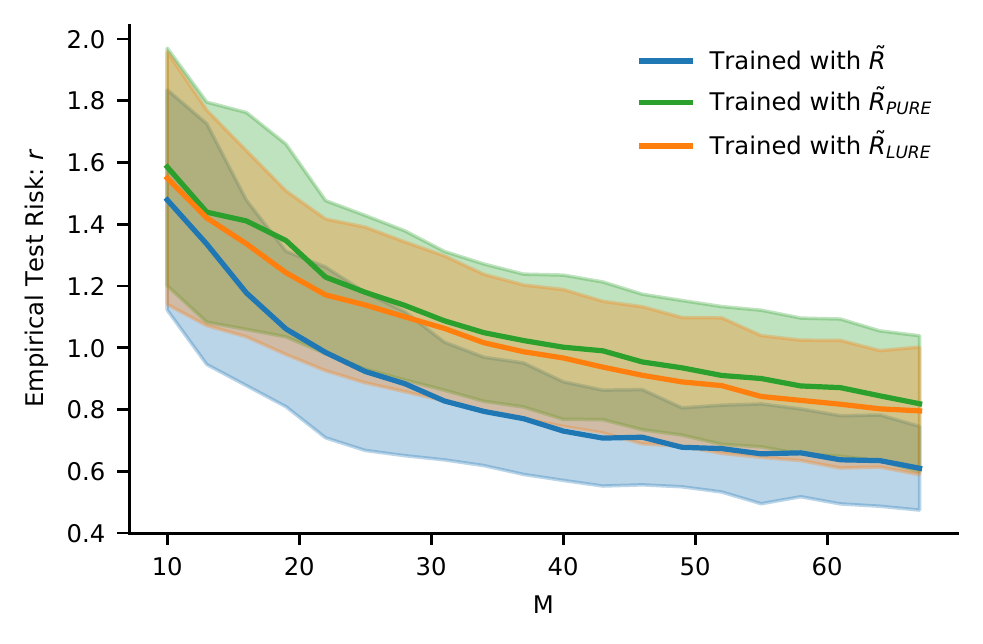}
      \vspace{-6mm}
      \caption{\textbf{T=15000 NLL.}}
      \label{fig:15000nll}
   \end{subfigure}
   \hfill
   \begin{subfigure}[b]{0.32\textwidth}
      \centering
      \includegraphics[width=\textwidth]{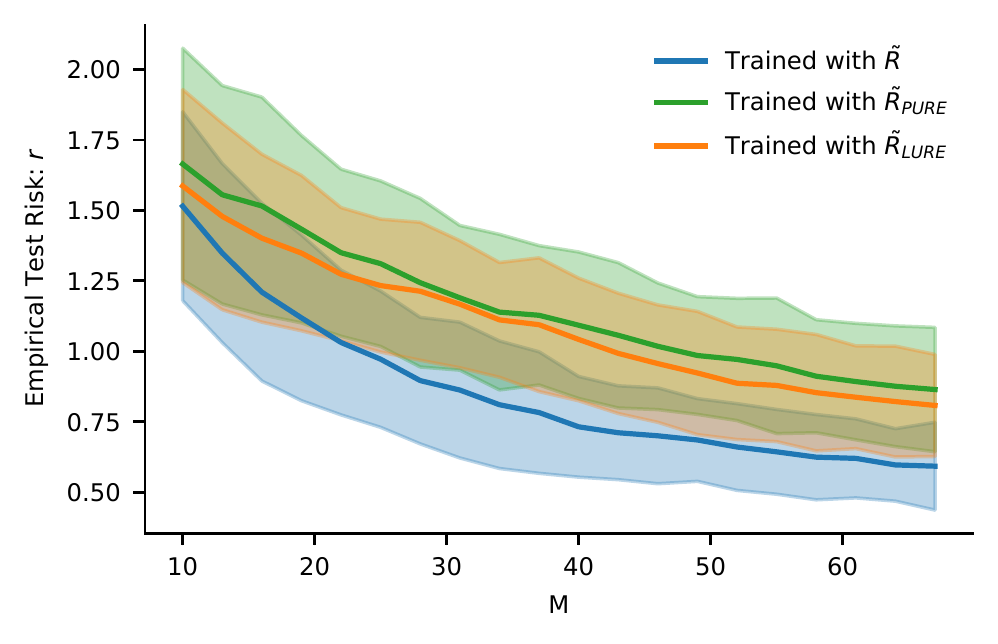}
      \vspace{-6mm}
      \caption{\textbf{T=20000 NLL.}}
      \label{fig:20000nll}
   \end{subfigure}
   \medskip
   \begin{subfigure}[b]{0.32\textwidth}
      \centering
      \includegraphics[width=\textwidth]{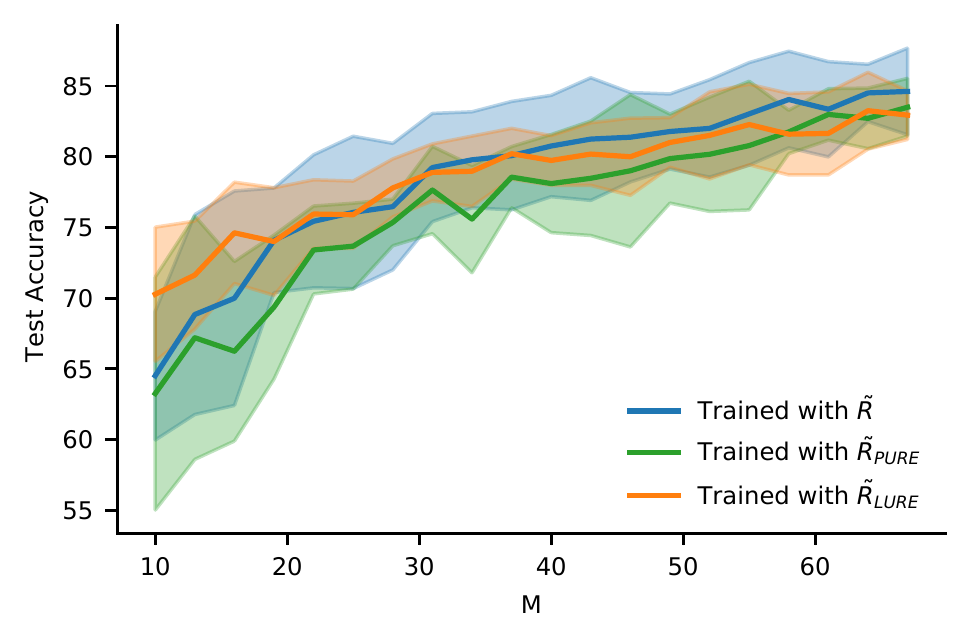}
      \vspace{-6mm}
      \caption{\textbf{T=5000 Acc.}}
      \label{fig:5000acc}
   \end{subfigure}
   \hfill
   \begin{subfigure}[b]{0.32\textwidth}
      \centering
      \includegraphics[width=\textwidth]{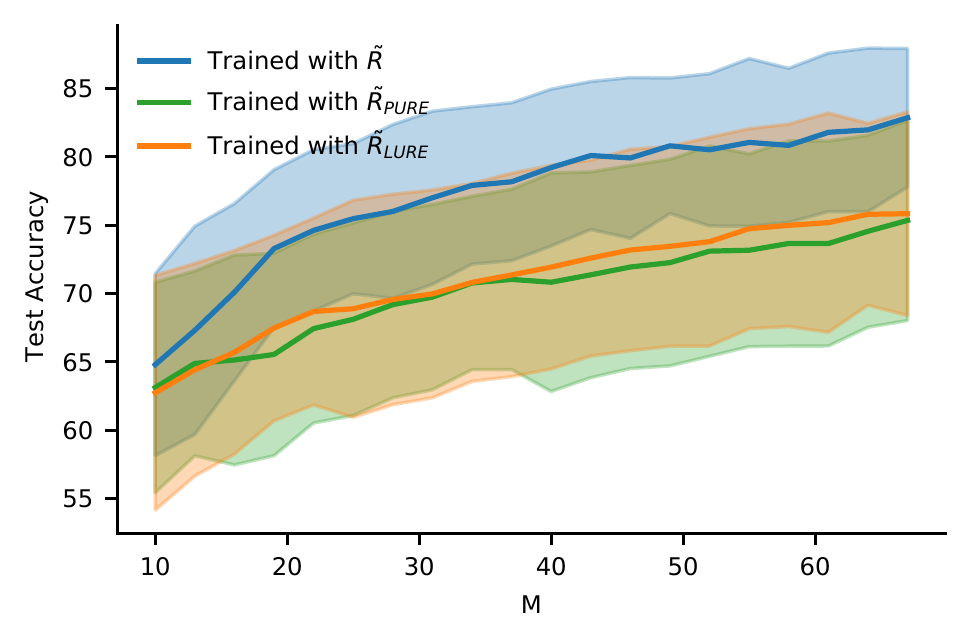}
      \vspace{-6mm}
      \caption{\textbf{T=15000 Acc.}}
      \label{fig:15000acc}
   \end{subfigure}
   \hfill
   \begin{subfigure}[b]{0.32\textwidth}
      \centering
      \includegraphics[width=\textwidth]{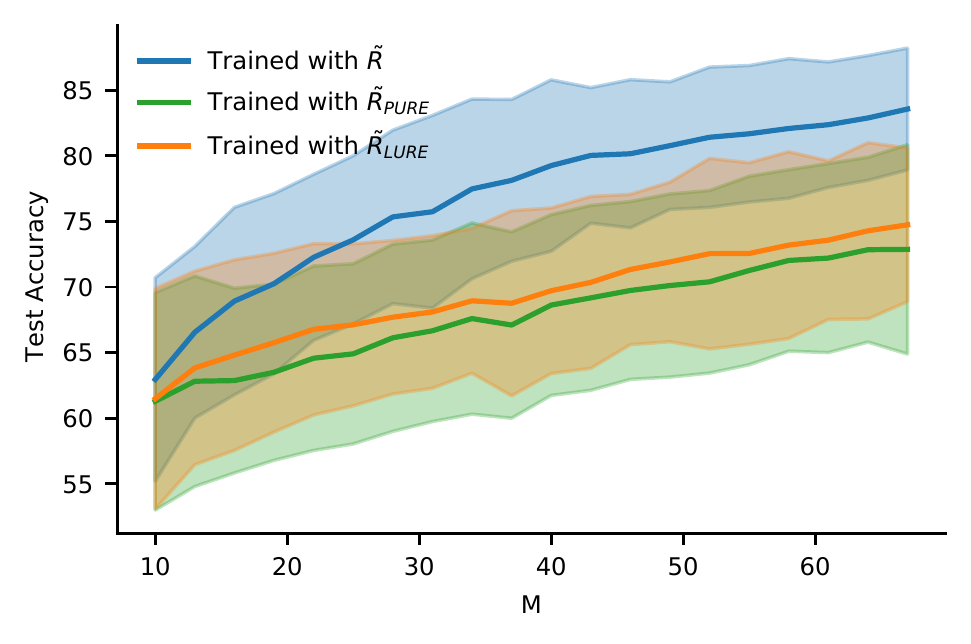}
      \vspace{-6mm}
      \caption{\textbf{T=20000 Acc.}}
      \label{fig:20000acc}
   \end{subfigure}
   \vspace{-2mm}
   \caption{Higher temperatures approach a deterministic acquisition function. These also tend to increase the variance of the risk estimator because the weight associated with unlikely points increases, when it happens to be selected. The overall pattern seems fairly consistent, however.}
   \label{fig:temperature_ablation}
   \vspace{-4mm}
\end{figure}

We considered a range of alternative proposal distributions.
In addition to the Boltzman distribution which we used, we considered a temperature range between 1,000 and 20,000 finding it had relatively little effect.
Higher temperatures correspond to more certainly picking the highest mutual information point, which approaches a deterministic proposal.
We found that because the mutual information had to be estimated, and was itself a random variable, different trajectories still picked very different sets of points.
However, for very high temperatures the estimators became higher variance, and for lower temperatures, the acquisition distribution became nearly uniform.
In Figure \ref{fig:temperature_ablation} we show the results of networks trained with a variety of temperatures other than the 10,000 ultimately used.
We also considered a proposal which was simply proportional to the scores, but found this was also too close to sampling uniformly for any of the costs or benefits of active learning to be visible.

\begin{figure}[t]
   \centering
   \vspace{-4mm}
   \begin{subfigure}[b]{0.32\textwidth}
      \centering
      \includegraphics[width=\textwidth]{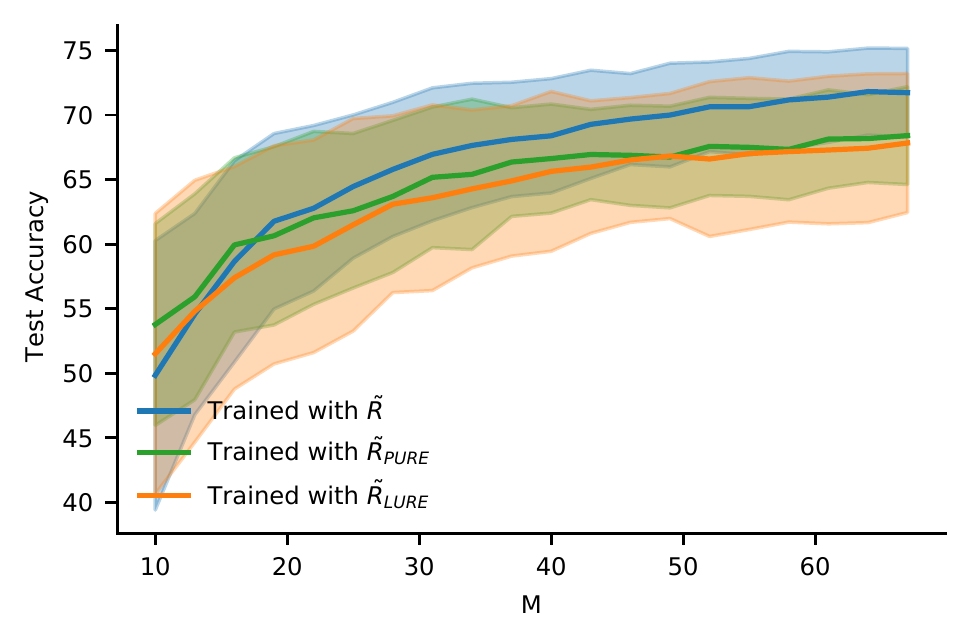}
      \vspace{-6mm}
      \caption{\textbf{FashionMNIST: Accuracy.}}
      \label{fig:fashionacc}
   \end{subfigure}
   \hfill
   \begin{subfigure}[b]{0.32\textwidth}
      \centering
      \includegraphics[width=\textwidth]{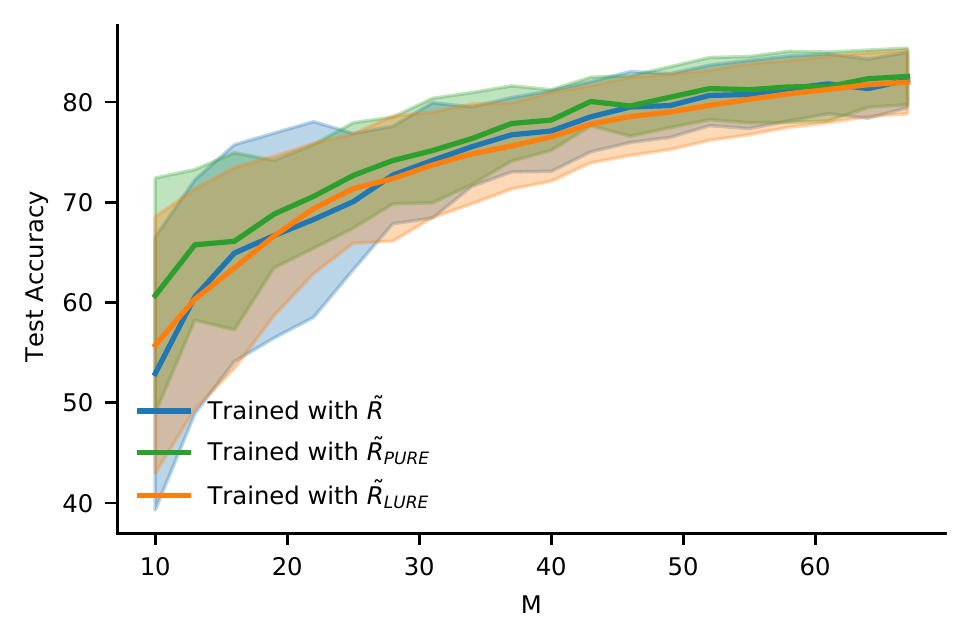}
      \vspace{-6mm}
      \caption{\textbf{MNIST (MCDO): Acc.}}
      \label{fig:mcdoacc}
   \end{subfigure}
   \hfill
   \begin{subfigure}[b]{0.32\textwidth}
      \centering
      \includegraphics[width=\textwidth]{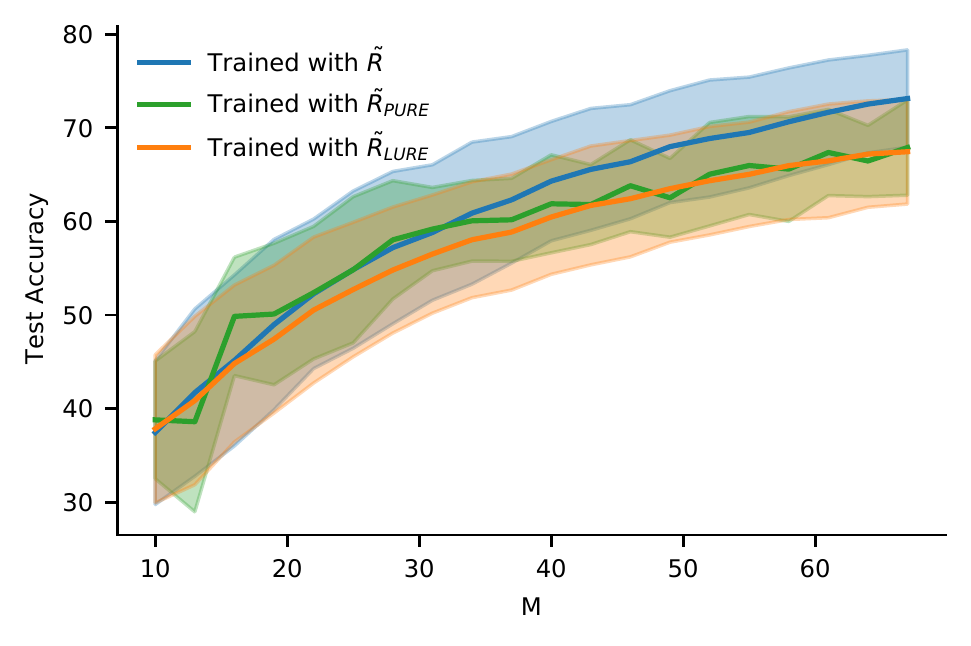}
      \vspace{-6mm}
      \caption{\textbf{MNIST (Balanced): Acc.}}
      \label{fig:balancedacc}
   \end{subfigure}
   \medskip
   \begin{subfigure}[b]{0.32\textwidth}
      \centering
      \includegraphics[width=\textwidth]{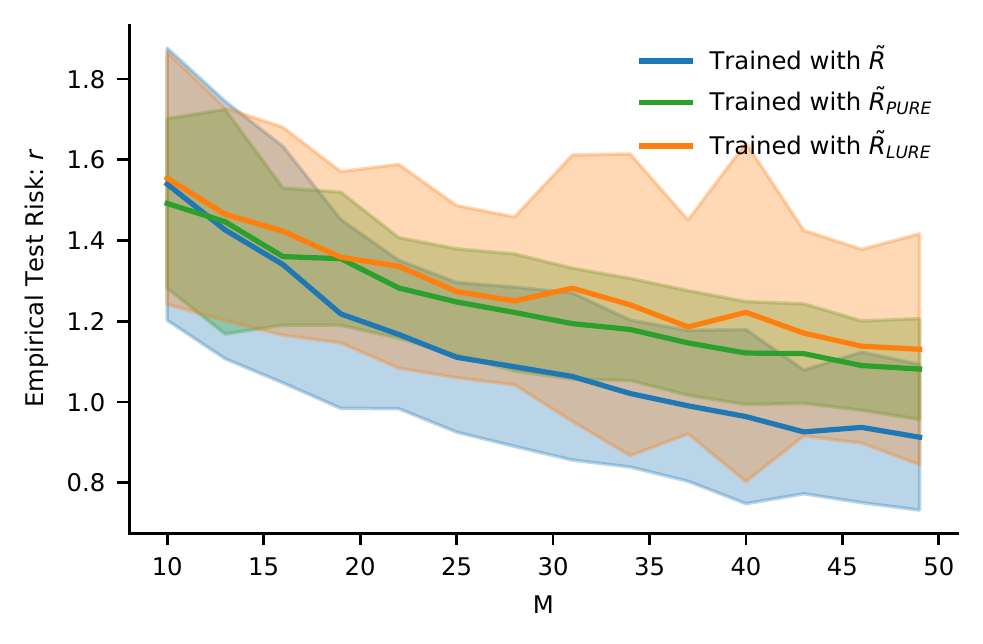}
      \vspace{-6mm}
      \caption{\textbf{MNIST (MLP): NLL}}
      \label{fig:mlpnll}
   \end{subfigure}
   \qquad
   \begin{subfigure}[b]{0.32\textwidth}
      \centering
      \includegraphics[width=\textwidth]{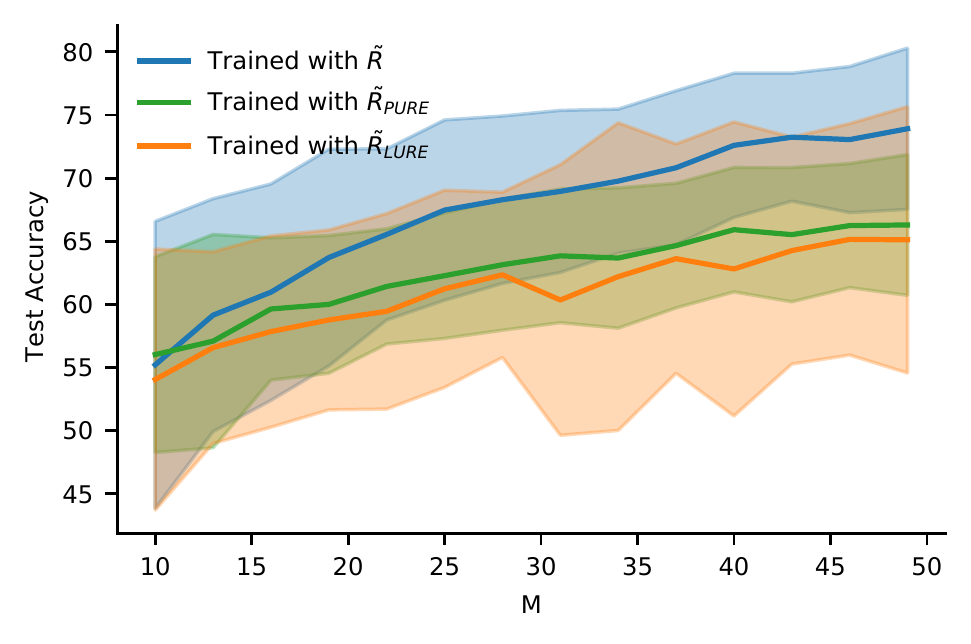}
      \vspace{-6mm}
      \caption{\textbf{MNIST (MLP): Accuracy}}
      \label{fig:mlpacc}
   \end{subfigure}
   \vspace{-2mm}
   \caption{Further downstream performance experiments. (a)-(c) are partners to Figures \ref{fig:fashion_mnist_active_learning}, \ref{fig:mcdo_mnist_active_learning}, and \ref{fig:balanced_mnist_active_learning}. (d) and (e) show similar results for a smaller multi-layer perceptron (with one hidden layer of 50 units). In all cases the results broadly mirror the results in the main paper.}
   \label{fig:ablations}
\end{figure}

We considered Monte Carlo dropout as an alternative approximating distribution \citep{gal_dropout_2015} (see Figures \ref{fig:mcdo_mnist_active_learning} and \ref{fig:mcdoacc}).
We found that the mutual information estimates were compressed in a fairly narrow range, consistent with the observation by \citet{osband_randomized_2018} that Monte Carlo dropout uncertainties do not necessarily converge unless the dropout probabilities are also optimized \citep{gal_concrete_2017}.
While this might be good enough when only the \emph{relative} score is needed in order to calculate the argmax, for our proposal distribution we would ideally prefer to have good \emph{absolute} scores as well.
For this reason, we chose the richer approximate posterior distribution instead.

Last, we considered a different architecture, using a full-connected neural network with a single hidden layer with 50 units, also trained as a Radial BNN.
This showed higher variance in downstream performance, but was broadly similar to the convolutional architecture (see Figures \ref{fig:mlpnll} and \ref{fig:mlpacc}).

\section{Deep Active Learning In Practice}\label{a:active_learning_practice}
\begin{table}[t]
    \resizebox{\textwidth}{!}{
    \begin{tabular}{lcccl}
    \toprule
    Reference & Application & Corrects Bias & Acknowledges Bias & Notes \\ \midrule
\citep{sener_active_2018}&             &               &                   &       \\
\citep{shen_deep_2018} &  \Checkmark &               &                   &       \\
\citep{beluch_power_2018} &             &               &                   &       \\
\citep{haut_active_2018}&\Checkmark&               &                   &       \\
\citep{sinha_variational_2019}&             &               &                   &       \\
\citep{siddhant_deep_2018}&\Checkmark&                   &\Checkmark\\
\citep{ghosal_weakly_2019}&\Checkmark&               &                   &       \\
\citep{yang_leveraging_2018}&\Checkmark&               &                   &       \\
\citep{yoo_learning_2019}&             &               &                   &       \\
\citep{kirsch_batchbald_2019}&             &               &                   &       \\
\citep{huang_cost-effective_2018}&             &               &\Checkmark&       \\
\citep{wen_comparison_2018}&\Checkmark&               &                   &       \\
\citep{chen_deep_2019}&\Checkmark&               &                   &Discusses bias in $\Dpool$.\\
\citep{zhang_bayesian_2019}&\Checkmark&               &                   &Discusses bias in $\Dpool$.\\
\citep{kellenberger_half_2019}&\Checkmark&               &                   &       \\
\bottomrule

    \end{tabular}}
    \caption{Existing applications of deep active learning rarely acknowledge the bias introduced by actively sampling points and do not, to the best of our knowledge, try to correct it.}
    \label{tbl:existing_active_learning}
    \end{table}
In Table \ref{tbl:existing_active_learning}, we show an informal survey of highly cited papers citing \citet{gal_deep_2017}, which introduced active learning to computer vision using deep convolutional neural networks.
Across a range of papers including theory papers as well as applications ranging from agriculture to molecular science only two papers acknowledged the bias introduced by actively sampling and none of the papers took steps to address it.
It is worth noting, though, that at least two papers motivated their use of active learning by observing that they expected their training data to already be unrepresentative of the population data and saw active learning as a way to address \emph{that} bias.
This does not quite work, unless you explicitly assume that the actively chosen distribution is more like the population distribution, but is an interesting phenomenon to observe in practical applications of active learning.
\end{document}